 \documentclass[a4paper]{article}

\addtolength{\evensidemargin}{-2cm}
\addtolength{\oddsidemargin}{-2cm}
\addtolength{\textwidth}{4cm}
\addtolength{\topmargin}{-2cm}
\addtolength{\textheight}{3,5cm}

\usepackage{hyperref}       
\usepackage{url}            
\usepackage{amssymb}
\usepackage{amsmath}
\usepackage{amsthm}
\usepackage{graphicx}
\usepackage{natbib}
\usepackage{marvosym}

\bibliographystyle{plainnat}
\bibpunct{(}{)}{;}{a}{,}{,}



\newtheorem{theorem}{Theorem}
\newtheorem{lemma}{Lemma}
\newtheorem{corollary}{Corollary}
\newtheorem{definition}{Definition}



\newcommand{\RDS}{R_{\DS}}
\newcommand{\RDSj}{R_{\DSj}}
\newcommand{\RDT}{R_{\DT}}
\newcommand{\RD}{R_{D}}


\usepackage{amsmath}
\usepackage{amssymb}
\usepackage{enumerate}
\usepackage{color}
\usepackage[table]{xcolor} 
\usepackage{url}
\usepackage{booktabs}
\usepackage{mathtools}
\usepackage{xspace}

\usepackage{upgreek}

\usepackage{hyperref}

\definecolor{mydarkblue}{rgb}{0,0.08,0.45} 
\hypersetup{
  colorlinks   = true, 
  urlcolor     = mydarkblue, 
  linkcolor    = mydarkblue, 
  filecolor    = mydarkblue,
  citecolor    = mydarkblue 
}

\makeatletter
\def\captionof#1#2{{\def\@captype{#1}#2}}
\makeatother

\newcommand{\LP}{\left(}
\newcommand{\LB}{\left[}

\newcommand{\RP}{\right)}
\newcommand{\RB}{\right]}

\newcommand{\degree}{\ensuremath{^\circ}}
\newcommand{\hdh}{\Hcal\!\Delta\!\Hcal}

\newcommand{\Erf}{\ensuremath{\textbf{\small Erf}}}
\newcommand{\vbf}{\ensuremath{\mathbf{v}}}

\newcommand{\xbf}{\ensuremath{\mathbf{x}}}

\newcommand{\PSj}{P_{S_j}}
\newcommand{\DSj}{D_{S_j}}

\newcommand{\multiP}{\PS^{v}}
\newcommand{\multiD}{\DS^{v}}
\newcommand{\multiS}{S^{v}}

\newcommand{\pii}{{\boldsymbol{\uppi}}}
\newcommand{\PS}{P_S}
\newcommand{\PT}{P_T}

\newcommand{\DS}{D_S}
\newcommand{\DT}{D_T}
\newcommand{\Sj}{{S_j}}

\newcommand{\RPv}{R_{\multiP}}
\newcommand{\RSv}{R_{\multiS}}

\newcommand{\RSj}{R_{S_j}}

\newcommand{\RPS}{R_{\PS}}
\newcommand{\RPSj}{R_{\PSj}}

\newcommand{\RPT}{R_{\PT}}

\newcommand{\ST}{{\langle S, T \rangle}}

\newcommand{\RS}{R_{S}}
\newcommand{\RT}{R_{T}}

\newcommand{\Pcal}{\ensuremath{\mathcal{P}}}
\newcommand{\Hcal}{\ensuremath{\mathcal{H}}}

\newcommand{\Dcal}{\ensuremath{\mathcal{D}}}

\newcommand{\Fcal}{\ensuremath{\mathcal{F}}}
\newcommand{\Scal}{\ensuremath{\mathcal{S}}}

\newcommand{\GQ}{G_\posterior}

\newcommand{\posterior}{\rho}
\newcommand{\prior}{\pi}

\newcommand{\sign}{\operatorname{sign}}

\newcommand{\desPI}{\operatorname{dis}_\prior}

\newcommand{\des}{\operatorname{dis}_\posterior}
\newcommand{\desw}{\operatorname{dis}_{\posterior_\wb}}
\newcommand{\disc}{\operatorname{disc}}

\newcommand{\R}{\mathbb{R}}
\newcommand{\I}{\mathbf{I}}

\definecolor{vertee}{rgb}{0,.5,0}

\newcommand{\eqdef}{\overset{{\mbox{\rm\tiny def}}}{=}}

\newcommand{\expo}[1]{
    e^{#1}
}

\newcommand{\argmin}[1]{
    \underset{#1}{\mathrm{argmin}}\
}
\newcommand{\prob}[1]{
    \underset{#1}{\mathrm{Pr}}\
}

\newcommand{\esp}[1]{
    \underset{#1}{\mathrm{\bf E}}\
}

\newcommand{\argmindevant}[1]{
    {\mathrm{argmin}}_{#1}\
}

\newcommand{\espdevant}[1]{
    \mathrm{{\bf E}}_{#1}\,
}

\newcommand{\xb}{\mathbf{x}}

\newcommand{\KL}{{\rm KL}}
\newcommand{\kl}{{\rm kl}}
\newcommand{\zo}{0\textrm{-}\!1}

\newcommand{\zoloss}{\mathcal{L}_{_{\zo}}}

\newcommand{\loss}{\mathcal{L}}

\newcommand{\xbs}{{\xb^s}}
\newcommand{\xbt}{{\xb^t}}

\newcommand{\wbf}{\mathbf{w}}
\newcommand{\wb}{\mathbf{w}}
\newcommand{\ab}{{\pmb{\alpha}}}
\newcommand{\sgn}{\mathrm{sgn}}

\newcommand{\Phidis}{\Phi_{\rm dis}}
\newcommand{\Phic}{\Phi_{\rm cvx}}

\newcommand{\ePSj}{{e_{\PSj}}}

\newcommand{\ePS}{{e_{\PS}}}
 \newcommand{\ePT}{{e_{\PT}}} 
 \newcommand{\eP}{{e_{P}}}

\newcommand{\PBDA}{{\small PBDA}\xspace}
\newcommand{\SVM}{{\small SVM}\xspace}
\newcommand{\msda}{{\small mSDA}\xspace}

\usepackage{fancyhdr}
\usepackage{fancybox}
\pagestyle{fancy}

\newcommand{\ver}{2}
\lhead{Germain, Habrard, Laviolette, Morvant}
\rhead{PAC-Bayesian Theorems for Domain Adaptation}
\rfoot[Technical Report V \ver]{\thepage} 
\cfoot{} 
\lfoot[\thepage]{Technical Report V \ver}

\title{PAC-Bayesian Theorems for Domain Adaptation with Specialization to Linear Classifiers} 
\author{Pascal Germain$^1$ \and Amaury Habrard$^2$ \and Fran\c cois Laviolette$^3$ \and Emilie Morvant$^2$ \and
$\mbox{}^1$ \small INRIA, SIERRA Project-Team, 75589 Paris, France, et D.I., \\ \small \'Ecole Normale Sup\'erieure, 75230 Paris, France
\and
$\mbox{}^2$ \small Univ Lyon, UJM-Saint-Etienne, CNRS, Institut d'Optique Graduate School, \\ \small  Laboratoire Hubert Curien UMR 5516, F-42023, Saint-Etienne, France
\and
$\mbox{}^3$ \small D\'epartement d'informatique et de g\'enie logiciel\\
\small
Universit\'e Laval, Qu\'ebec, Canada
}

\begin{document}

\maketitle
\begin{center}
\it
This report is a long version of our paper entitled \emph{A PAC-Bayesian Approach for Domain Adaptation with Specialization to Linear Classifiers} published in the proceedings of the International Conference on Machine Learning (ICML) 2013. We improved our main results, extended our experiments, and proposed an extension to multisource domain adaptation.
\end{center}


\begin{abstract}In this paper, we provide two main contributions in PAC-Bayesian theory for domain adaptation where the objective is to learn, from a source distribution, a well-performing majority vote  on a different target distribution. 
On the one hand, we propose an improvement of the previous approach proposed by \cite{pbda}, that relies on a novel distribution pseudodistance based on a disagreement averaging, allowing us to  
  derive a new tighter PAC-Bayesian domain adaptation bound for the stochastic Gibbs classifier.
We specialize it to linear classifiers, and design a learning algorithm which shows interesting results on a synthetic problem and on a popular sentiment annotation task. 
On the other hand, we generalize these results to multisource domain adaptation allowing us to take into account different source domains.
This study opens the door to tackle domain adaptation tasks by making use of all the PAC-Bayesian tools.
\end{abstract}



\section{Introduction}

As human beings, we learn from what we saw before. 
Think about our education process: when a student attends to a new course, he has to make use of the knowledge he acquired during previous courses. However, in machine learning the most common assumption is based on the fact that the learning and test data are drawn from the same probability distribution.
This strong assumption may be clearly irrelevant for a lot of real tasks including those where we desire to adapt a model from one task to another one.
For instance, a spam filtering system suitable for one user can be poorly adapted to another who receives significantly different emails.
In other words, the learning data associated with one or several users could be unrepresentative of the test data coming from another one.
This enhances the need to design methods for adapting a classifier from learning (source) data to test (target) data.
One solution to tackle this issue is to consider the {\it domain adaptation} framework\footnote{See the surveys proposed by \citet{JiangSurvey08,Quionero-Candela:2009}; and~\citet{Margolis2011}.}, which arises when the distribution generating the target data (the {\it target domain}) differs from the one generating the source data (the {\it source domain}).
In such a situation, it is well known that domain adaptation is a hard and challenging task even under strong assumptions~\citep{BenDavid12,David-AISTAT10,Ben-DavidU14}. 
Note that domain adaptation with learning data coming from different source domains is referred to as multisource or multiple sources domain adaptation \citep{crammer2007learning,mansour2009domain,BenDavid-MLJ2010}.

Among the existing approaches in the literature to address domain adaptation, the instance weighting-based methods allow one to deal with the covariate-shift problem \citep[\emph{e.g.,}][]{HuangSGBS-nips06,sugiyama2008direct}, where source and target domains diverge only in their marginals, {\it i.e.}, they share the same labeling function.
Another technique is to exploit self-labeling procedures, where the objective is to transfer the source labels to the target unlabeled points ({\it e.g.}, \citet{BruzzoneM10S,habrard2013iterative,PVMinCq}.  
A third solution is to learn a new common representation from the unlabeled part of source and target data. Then, a standard supervised learning algorithm can be executed on the source labeled instances ({\it e.g.}, \citet{glorot2011domain,Chen12}).

The work presented in this paper stands into a popular class of approaches, which relies on a distance between the source distribution and the target distribution. Such distance depends on the set $\Hcal$ of hypotheses (or classifiers)  considered by the learning algorithm. The intuition behind this approach is that one must look for a set  $\Hcal$ that minimizes the distance 
while preserving good performances on the source data; if the distributions are close under this measure, then generalization ability may be ``easier'' to quantify.
In fact, defining such a measure to quantify how much the domains are related is a major issue in domain adaptation.
For example, in the context of  binary classification with the $\zo$ loss function, \citet{BenDavid-MLJ2010}; and~\citet{BenDavid-NIPS06} 
have considered the {\small $\hdh$}-divergence between the marginal distributions. This quantity is based on the maximal disagreement between two classifiers, allowing them to deduce  a domain adaptation generalization bound based on the VC-dimension theory. The discrepancy distance proposed by \citet{Mansour-COLT09} generalizes this divergence to real-valued functions and more general losses, and is used to obtain a generalization bound based on the Rademacher complexity. In this context, \citet{CortesM11,CortesM14} have specialized the minimization of the discrepancy to regression with kernels.
In these situations, domain adaptation can be viewed as a multiple trade-off between the complexity of the hypothesis class $\Hcal$, the adaptation ability of~$\Hcal$ according to the divergence between the marginals, and the empirical source risk.
Moreover, other measures have been exploited under different assumptions, such as the R{\'e}nyi divergence suitable for importance weighting \citep{MansourMR09}, or the measure proposed by \citet{Zhang12} which takes into account the source and target true labeling, or the Bayesian ``divergence prior''  \citep{li2007bayesian} which favors classifiers closer to the best source model.
However, a majority of methods prefer to perform a two-step approach: {\it (i)}~first construct a suitable representation by minimizing the divergence, then {\it (ii)}~learn a model on the source domain in the new representation space.

The novelty of our contribution is to explore the PAC-Bayesian framework to tackle domain adaptation 
in a binary classification situation without target labels (sometimes called {\it unsupervised domain adaptation}).
Given a prior distribution over a family of classifiers~$\Hcal$, PAC-Bayesian theory \citep[introduced by][]{Mcallester99a} focuses on algorithms that output a posterior distribution $\posterior$ over $\Hcal$ ({\it i.e.}, a $\posterior$-average over $\Hcal$) rather than just a single classifier $h\in\Hcal$.
Following this principle, we propose a pseudometric which evaluates the domain divergence according to  the $\posterior$-average disagreement of the classifiers over the domains. 
This disagreement measure shows many advantages. First, it is ideal for the PAC-Bayesian setting, since it is expressed as a $\posterior$-average over $\Hcal$. 
Second, we prove that it is always lower than the popular {\small $\hdh$}-divergence. 
Last but not least, our measure can be easily estimated from samples.
Indeed, based on this disagreement measure, we derived in a previous work \citep{pbda} a first PAC-Bayesian domain adaptation bound expressed as a $\posterior$-averaging.
In this paper, we provide a new version of this result, that does not change the philosophy supported by the previous bound, but clearly 
improves the theoretical result: The domain adaptation bound is now tighter and easier to interpret.
Thanks to this new result, we also derive\footnote{In this paper, we were very keen to improve the readability of our proofs, 
particularly those provided by~\citet{pbda} as supplementary material. The proof techniques may be of independent interest.}
 three new PAC-Bayesian domain adaptation generalization bounds.
Then, in contrast to the majority of methods that perform a two-step procedure, we design an algorithm tailored to linear classifiers, called \PBDA, which jointly minimizes the multiple trade-offs implied by the bounds.
The first two quantities being, as usual in the PAC-Bayesian approach, the complexity of the majority vote measured by a Kullback-Leibler divergence and  the empirical risk measured by the $\posterior$-average errors on the source sample. 
The third quantity corresponds to our domain divergence and assesses the capacity of the posterior distribution to distinguish some structural difference between the source and target samples.
Finally, we extend our results to domain adaptation with multiple sources by considering a mixture of different source domains as done by \citet{BenDavid-MLJ2010}.

The rest of the paper is structured as follows.
Section~\ref{sec:notations} deals with two seminal works on domain adaptation. 
The PAC-Bayesian framework is then recalled in Section~\ref{sec:pacbayes}.
Note that for the sake of completeness, we provide for the first time the explicit derivation of the algorithm {\small PBGD3} \citep{germain2009pac} tailored to linear classifiers in supervised learning.
Our main contribution, which consists in a domain adaptation bound suitable for PAC-Bayesian learning, is presented in Section~\ref{sec:our_bound}.
Then, we derive our new algorithm for PAC-Bayesian domain adaptation in Section~\ref{sec:dapbgd}, that we experiment in Section~\ref{sec:expe}.
Afterwards, we generalize this analysis to multisource domain adaptation in Section~\ref{sec:multisource}.
Before concluding in Section~\ref{sec:conclu}, we discuss two important points in Section~\ref{sec:discussions}: {\it (i)} two different results for the multisource setting that imply open-questions for deriving new algorithms, and {\it (ii)} the comparison between our new result and the one provided in \citet{pbda}.


\section{Domain Adaptation Related Works}
\label{sec:notations}
\label{sec:da}

In this section, we review the two seminal works in domain adaptation that are based on a divergence measure between the domains \citep{BenDavid-MLJ2010,BenDavid-NIPS06,Mansour-COLT09}.

\subsection{Notations and Setting}
We consider domain adaptation for binary classification tasks where $X  \subseteq   \mathbb{R}^d$ is the input space of dimension $d$ and \mbox{$Y  =  \{-1, +1\}$} is the label set.
The {\it source domain} $\PS$ and the {\it target domain} $\PT$ are two different distributions over  $X\times Y$ (unknown and fixed), 
$\DS$ and $\DT$ being the respective marginal distributions over $X$. 
We tackle the challenging task where we have no target labels. A learning algorithm is then provided with a {\it labeled source sample} $S = \{(\xbf^s_i,y^s_i)\}_{i=1}^{m}$ consisting of $m$ examples drawn {\it i.i.d.}\footnote{{\it i.i.d.} stands for {\it independent and identically distributed}.} from $\PS$, and an {\it unlabeled target sample} $T = \{\xbf^t_j\}_{j=1}^{m'}$ consisting of $m'$ examples drawn {\it i.i.d.} from $\DT$.
Note that, we denote the distribution of a $m$-sample by $(\PS)^m$.
We suppose that $\Hcal$ is a set of hypothesis functions for $X$ to $Y$.
The {\it expected source error}  and the {\it expected target error}  of $h\in\Hcal$ over $\PS$, respectively $\PT$, are the probability that $h$ errs on the entire distribution $\PS$, respectively~$\PT$,
$$
\RPS(h) \ \eqdef \, \esp{(\xbf^s,y^s) \sim \PS} \zoloss\big( h(\xb^s), y^s \big)\,,\quad\mbox{and}\quad \RPT(h) \ \eqdef \, \esp{(\xbf^t,y^t) \sim \PT} \zoloss\big( h(\xb^t), y^t \big)\,,
$$
where $\zoloss(a,b)  \eqdef  \I[a \ne b]$ is the $\zo$ loss function which returns $1$ if $a \ne  b$ and $0$ otherwise.  The {\it empirical source error} $\RS(\cdot)$ on the learning sample $S$ is
$$
\RS(h)  \ \eqdef \ \frac{1}{m}\sum_{(\xbf^s,y^s) \in S} \zoloss\big( h(\xb^s), y^s \big)\,.
$$

The main objective in domain adaptation is then to learn---without target labels---a classifier $h\in\Hcal$ leading to the lowest expected target error $\RPT(h)$. \\

We also introduce the {\it expected source disagreement} $\RDS(h,h')$ and the {\it expected target disagreement}  $\RDT(h,h')$ of $(h',h)\in\Hcal^2$, which measure the probability that two classifiers $h$ and $h'$ do not agree on the respective marginal distributions, and are defined by
$$
\RDS(h,h') \ \eqdef\, \esp{\xbf^s\sim \DS} \zoloss\big( h(\xb^s), h'(\xb^s) \big)\,\quad\mbox{and}\quad\RDT(h,h') \ \eqdef\, \esp{\xbf^t\sim \DT} \zoloss\big( h(\xb^t), h'(\xb^t) \big)\,.
$$
The {\it empirical source disagreement} $\RS(h, h')$ on $S$ and the {\it empirical target disagreements} $\RT(h, h')$  on  $T$ are
$$
\RS(h,h') \ \eqdef\, \frac{1}{m}\sum_{\xbf^s\in S} \zoloss\big( h(\xb^s), h'(\xb^s) \big)\,\quad\mbox{and}\quad\RT(h,h') \ \eqdef\,  \frac{1}{m'}\sum_{\xbf^t\in T} \zoloss\big( h(\xb^t), h'(\xb^t) \big)\,.
$$
Note that, depending on the context, $S$ denotes either the source labeled sample $\{ (\xbf^s_i,y^s_i) \}_{i=1}^{m} $ or its unlabeled part $\{\xbf^s_i\}_{i=1}^{m} $.

Note also that the expected error $R_P(h)$ on a distribution $P$ can be viewed as a shortcut notation for the expected disagreement between a hypothesis $h$ and a labeling function $f_{P}$ that assigns the true label to an example description according with respect to  $P$. We have
\begin{equation*}
 R_P(h) \ =\ R_D(h, f_P) \ =\ 
 \esp{\xb\sim D} \zoloss\big( h(\xb), f_P(\xb) \big)\,,
\end{equation*}
where $D$ is the marginal distribution of $P$ over $X$.

\subsection{Necessity of a Domain Divergence}
\label{sec:necessity_dist}
The domain adaptation objective is to find a low-error target hypothesis, even if the target labels are not available. Even under strong assumptions, this task can be impossible to solve  \citep{BenDavid12,David-AISTAT10}. 
However, for deriving generalization ability in a domain adaptation situation 
(with the help of a domain adaptation bound), 
it is critical to make use of a divergence between the source and the target domains: the more similar the domains, the easier the adaptation appears.
Some previous works 
have proposed different quantities to estimate how a domain is close to another one \citep{Zhang12,BenDavid-MLJ2010,Mansour-COLT09,MansourMR09,BenDavid-NIPS06,li2007bayesian}.
Concretely, two domains $\PS$ and $\PT$ differ if their marginals  $\DS$ and $\DT$ are different, or if the source labeling function differs from the target one, or if both happen.
This suggests taking into account two divergences: one between $\DS$ and $\DT$ and one between the labeling.
If we have some target labels, we can combine the two distances as \citet{Zhang12}.
Otherwise, we preferably consider two separate measures, since it is impossible to estimate the best target hypothesis in such a situation. 
Usually, we suppose that the source labeling function is somehow related to the target one, then we look for a representation where the marginals $\DS$ and $\DT$ appear closer without losing performances on the source domain.

\subsection{Domain Adaptation Bounds for Binary Classification}

We now review the first two seminal works which propose domain adaptation bounds based on a marginal divergence.

First, under the assumption that there exists a hypothesis in $\Hcal$ that performs well on both the source and the target domain, \citet{BenDavid-MLJ2010}; and~\citet{BenDavid-NIPS06} 
have provided the following domain adaptation bound.
\begin{theorem}[\citet{BenDavid-MLJ2010,BenDavid-NIPS06}]
\label{theo:BenDavid}
Let ${\cal H}$ be a (symmetric\footnote{In a symmetric hypothesis space $\Hcal$, for every $h\in\Hcal$, its inverse $-h$ is also in $\Hcal$.}) hypothesis class. We have
\begin{equation}
\label{eq:da}
\forall h\in {\cal H},\  \RPT(h)\, \leq\, \RPS(h)+\tfrac{1}{2}d_{\hdh}(\DS,\DT) + \mu_{h^*}\, , 
\end{equation}
where  $$\tfrac{1}{2}d_{\hdh}(\DS,\DT) \ \eqdef \, \displaystyle        \sup_{\substack{(h,h')\in\mathcal{H}^2}}  \left|\RDT(h, h') - \RDS(h, h')\right|$$ is the {\small $\hdh$}-distance between the marginals $\DS$ and $\DT$, and
$$\mu_{h^*}\  \eqdef\  \RPS(h^{ *})+\RPT(h^{ *})$$ is the error of the best hypothesis overall, denoted $h^{ *}$, and defined by
$$h^{ *}  \ \eqdef\   \argmin{h\in{\cal H}}  \big( \RPS (h) + \RPT (h) \big)\,.$$ 
\end{theorem}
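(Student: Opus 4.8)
The plan is to exploit the observation already recorded in the excerpt that an expected error is a disagreement against a labeling function, i.e.\ $\RPT(h)=\RDT(h,f_{\PT})$ and $\RPS(h)=\RDS(h,f_{\PS})$, and then to pass between the two domains using the joint-optimal classifier $h^*$ as a pivot. The one structural fact that drives the whole argument is that the expected disagreement inherits a triangle inequality from the $\zo$ loss: since $\I[a\neq c]\leq \I[a\neq b]+\I[b\neq c]$ holds pointwise for any labels, taking expectation over an arbitrary marginal $D$ yields $\RD(g,g')\leq \RD(g,g'')+\RD(g'',g')$ for all $g,g',g''$. I would establish this inequality first, since it is invoked twice.

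First I would split the target error through $h^*$ on the target marginal, using the triangle inequality with pivot $h^*$ and then identifying the disagreement against $f_{\PT}$ as an error:
$\RPT(h)=\RDT(h,f_{\PT})\leq \RDT(h,h^*)+\RDT(h^*,f_{\PT})=\RDT(h,h^*)+\RPT(h^*)$. The term $\RPT(h^*)$ is already one half of $\mu_{h^*}$, so the remaining task is to control the target disagreement $\RDT(h,h^*)$.

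Next I would swap the target marginal for the source marginal at the cost of the divergence. Writing $\RDT(h,h^*)=\RDS(h,h^*)+\big(\RDT(h,h^*)-\RDS(h,h^*)\big)$ and bounding the bracketed difference by its absolute value, I would note that the pair $(h,h^*)$ is admissible in the supremum defining $\tfrac12 d_{\hdh}(\DS,\DT)$ because both $h$ and $h^*$ belong to $\Hcal$; hence $\RDT(h,h^*)\leq \RDS(h,h^*)+\tfrac12 d_{\hdh}(\DS,\DT)$.

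Finally I would expand the surviving source disagreement by the triangle inequality once more, now against $f_{\PS}$, giving $\RDS(h,h^*)\leq \RDS(h,f_{\PS})+\RDS(f_{\PS},h^*)=\RPS(h)+\RPS(h^*)$. Chaining the three steps produces $\RPT(h)\leq \RPS(h)+\tfrac12 d_{\hdh}(\DS,\DT)+\big(\RPS(h^*)+\RPT(h^*)\big)$, which is exactly the claim since the final bracket equals $\mu_{h^*}$. I do not expect a genuine obstacle; the only real subtlety is the \emph{choice of the pivot}. Taking $h^*$ rather than, say, $h$ itself is what makes both the source and the target errors of the pivot collapse into the single joint-optimal quantity $\mu_{h^*}$, whereas a careless pivot would leave behind an uncontrolled target error that defeats the bound.
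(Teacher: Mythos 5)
Your proof is correct, and it is precisely the classical argument of \citet{BenDavid-MLJ2010}: the paper states this theorem as a cited result without reproving it, and the proof in the cited reference proceeds exactly as you do, pivoting through $h^*$ via the triangle inequality for the $\zo$-disagreement, swapping marginals at the cost of the supremum term (legitimate since $(h,h^*)\in\Hcal^2$), and collapsing the two pivot errors into $\mu_{h^*}$. No gaps; your remark that the pivot must be $h^*$ rather than an arbitrary hypothesis is indeed the crux of why the residual terms assemble into the single joint-optimal quantity.
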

This bound depends on four terms. $\RPS(h)$ is the classical source domain expected error. 
 $\tfrac12 d_{\Hcal \Delta \Hcal}(\DS,\DT)$ depends on $\Hcal$ and corresponds to the maximum disagreement between two hypotheses of $\Hcal$.
In other words, it quantifies how hypothesis from $\Hcal$ can ``detect'' differences between these marginals: the lower this measure is for a given $\Hcal$, the better are the generalization guarantees.  The last term $\mu_{h^*}=\RPS(h^{ *})+\RPT(h^{ *})$  is related to the best hypothesis $h^{ *}$ over the domains and act as a quality measure of $\mathcal{H}$ in terms of labeling information. 
If $h^{ *}$ does not have a good performance on both the source and the target domain, then there is no way one can adapt from this source to this target. 
Hence, as pointed out by the authors, Equation~\eqref{eq:da}, together with the usual VC-bound theory, express a multiple trade-off between the accuracy of some particular hypothesis $h$, the complexity of  $\Hcal$, and the ``incapacity'' of hypotheses of $\Hcal$ to detect difference between the source and the target domain.

Second, \citet{Mansour-COLT09} have extended the {\small $\hdh$}-distance to the discrepancy divergence for regression and any symmetric loss  $\loss$ fulfilling the triangle inequality.
Given  \mbox{$\loss : [-1,+1]^2  \to  \R^+$}
such a loss, the discrepancy $\disc_{\loss}(\DS,\DT)$ between $\DS $ and~$\DT$~is  
$$\displaystyle \disc_{\loss}(\DS,\DT)  \eqdef    \sup_{\substack{(h,h')\in\Hcal^2}}  \Big| \esp{\xbf^t\sim\DT}     \loss(h(\xbf^t),h'(\xbf^t)) -     \esp{\xbf^s\sim\DS}      \loss(h(\xbf^s),h'(\xbf^s))\Big|\,.$$
Note that with the $\zo$ loss in binary classification, we have
$$\tfrac{1}{2}d_{\Hcal \Delta \Hcal}(\DS,\DT)   \,=\,  \disc_{\zoloss}(\DS,\DT)\,.$$
Even if these two divergences may coincide,  the following domain adaptation bound of \citet{Mansour-COLT09} differs from Theorem~\ref{theo:BenDavid}.

\begin{theorem}[\cite{Mansour-COLT09}]
\label{theo:Mansour}
Let ${\cal H}$ be a (symmetric) hypothesis class. We have
\begin{align}
\label{eq:dabounddisc} \forall h\in {\cal H},\ \RPT(h) - \RPT(h_T^*) \ \leq \   \RDS(h_S^*,h)  +\disc_{\zoloss}(\DS,\DT) + \nu_{(h_S^*,h_T^*)}&\,,
\end{align}
where
$$\nu_{(h_S^*,h_T^*)}\ \eqdef\ \RDS(h_S^*,h_T^*)$$
is the disagreement between the ideal hypothesis on the target and source domains defined respectively as 
\begin{align*}
h_T^* \ \eqdef\ \argmin{h\in\Hcal}  \RPT(h)\, , \quad \mbox{and}\quad\  h_S^*\ \eqdef \ \argmin{h\in\Hcal}\RPS(h)\,.
\end{align*}
\end{theorem}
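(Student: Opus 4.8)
The plan is to exploit the fact that, under the $\zo$ loss, the expected disagreement is a pseudometric on the set of functions from $X$ to $Y$, and then to chain three elementary inequalities: two applications of the triangle inequality and one of the discrepancy. First I would record the key structural observation on which everything rests. Since $\zoloss(a,b)=\I[a\neq b]$ is the discrete metric on $Y$, it satisfies the pointwise triangle inequality $\zoloss(a,c)\leq\zoloss(a,b)+\zoloss(b,c)$ for all $a,b,c\in Y$. Taking expectation over $\xb^t\sim\DT$ (resp. $\xb^s\sim\DS$) and invoking linearity, this transfers to the expected disagreement: for any three functions $g_1,g_2,g_3\colon X\to Y$ we have $\RDT(g_1,g_3)\leq \RDT(g_1,g_2)+\RDT(g_2,g_3)$, and likewise for $\RDS$. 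I would also recall the identity $\RPT(h)=\RDT(h,f_{\PT})$, where $f_{\PT}$ is the target labeling function, noted earlier in the excerpt.

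The derivation then proceeds in three steps. Using the triangle inequality for $\RDT$ centered at $h_T^*$,
\[
\RPT(h)=\RDT(h,f_{\PT})\leq \RDT(h,h_T^*)+\RDT(h_T^*,f_{\PT})=\RDT(h,h_T^*)+\RPT(h_T^*),
\]
which rearranges to $\RPT(h)-\RPT(h_T^*)\leq \RDT(h,h_T^*)$. Next, since $\disc_{\zoloss}(\DS,\DT)$ is a supremum over all pairs of hypotheses, it upper bounds the single term $|\RDT(h,h_T^*)-\RDS(h,h_T^*)|$, giving $\RDT(h,h_T^*)\leq \RDS(h,h_T^*)+\disc_{\zoloss}(\DS,\DT)$. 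Finally, applying the triangle inequality for $\RDS$ centered at $h_S^*$ and using the symmetry of the $\zo$ loss, $\RDS(h,h_T^*)\leq \RDS(h,h_S^*)+\RDS(h_S^*,h_T^*)=\RDS(h_S^*,h)+\nu_{(h_S^*,h_T^*)}$. Concatenating the three bounds yields Equation~\eqref{eq:dabounddisc}.

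The only point that requires care—and the single ingredient the whole argument hinges on—is the triangle inequality for the disagreement, specifically the fact that it remains valid when one of the three arguments is the labeling function $f_{\PT}$, which need not belong to $\Hcal$. This is precisely what lets us convert a statement about target \emph{errors} into one about \emph{disagreements}, and then transport it to the source marginal via the discrepancy. Because $\RDT$ and $\RDS$ are defined as expectations of the $\zo$ loss between arbitrary $Y$-valued functions, no membership in $\Hcal$ is needed for $f_{\PT}$, so this causes no genuine difficulty; I would simply be explicit about the pseudometric structure at the outset. Beyond that, the proof is a routine three-line chaining, and I anticipate no real obstacle.
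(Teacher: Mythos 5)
Your proof is correct and, since the paper states this theorem only as a citation of \citet{Mansour-COLT09} without reproducing a proof, the right comparison is with the original argument there, which proceeds by exactly the same three-step chaining: triangle inequality for the disagreement on the target marginal centered at $h_T^*$, transfer from $\DT$ to $\DS$ via the discrepancy (valid since both $h$ and $h_T^*$ lie in $\Hcal$), and a second triangle inequality on the source centered at $h_S^*$. Your explicit care about the step where one argument of the disagreement is the labeling function $f_{\PT}\notin\Hcal$ is well placed and is exactly what the paper's convention $R_P(h)=R_D(h,f_P)$ licenses, so nothing is missing.
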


In this context, Equation~\eqref{eq:dabounddisc} can be tighter\footnote{Equation \eqref{eq:da} can lead to an error term 3 times higher than Equation \eqref{eq:dabounddisc} in some cases \citep{Mansour-COLT09}.} since it bounds the difference between the target error of a classifier and the one of the optimal $h_T^*$. 
 This bound expresses a  trade-off between the disagreement (between $h$  and the best source hypothesis $h_S^*$), the complexity of  $\Hcal$ (with the Rademacher complexity), and---again---the ``incapacity'' of hypothesis to detect differences between the  domains. \\

To conclude, the domain adaptation bounds \eqref{eq:da} and \eqref{eq:dabounddisc} suggest that if the divergence between the domains is low, 
 a low-error classifier over the source domain might perform well on the target one.
These divergences compute the \emph{worst case} of the disagreement between a pair of hypothesis. We propose in Section~\ref{sec:our_bound} an \emph{average case} approach by making use of the essence of the PAC-Bayesian theory, which is known to offer tight generalization bounds \citep{Mcallester99a,germain2009pac,Parrado-Hernandez12}.


\section{PAC-Bayesian Theory in Supervised Learning}
\label{sec:pacbayes}

Let us now review the classical supervised binary classification framework called the PAC-Bayesian theory, first introduced by \citet{Mcallester99a}.
This theory succeeds to provide tight generalization guarantees on majority vote classifiers, without relying on any validation set.

Throughout this section, we adopt an algorithm design perspective: we interpret the various forms of the PAC-Bayesian theorem 
as a guide to derive new machine learning algorithms.
Indeed, the PAC-Bayesian analysis of domain adaptation provided in the forthcoming sections is oriented by the motivation of creating a new adaptive algorithms.

\subsection{Notations and Setting}

Traditionally, the PAC-Bayesian theory considers weighted majority votes over a set $\Hcal$ of binary hypothesis.
Given a prior distribution~$\prior$ over $\Hcal$ and a training set $S$, the learner aims at finding the posterior distribution $\posterior$ over $\Hcal$ leading to a $\posterior$-weighted majority vote  $B_\posterior$ (also called the Bayes classifier) with good generalization guarantees and defined by
$$B_\posterior(\xbf) \ \eqdef \ \sign\Big[\esp{h\sim \posterior} h(\xbf)\Big]\,.$$
Minimizing $\RPS(B_\posterior)$ the risk of $B_\posterior$ is known to be NP-hard.  In the PAC-Bayesian approach, it is replaced by the risk of the stochastic Gibbs classifier $G_\posterior$ associated with $\posterior$. In order to predict the label of an example~$\xbf$, the Gibbs classifier first draws a hypothesis $h$ from $\Hcal$ according to $\posterior$, then returns $h(\xbf)$ as label. Note that the error of the Gibbs classifier on a domain $\PS$ corresponds to the expectation of the errors over~$\posterior$:
\begin{align}
\label{eq:RGQ}
\RPS(G_\posterior) \ \eqdef \ \esp{h\sim\posterior} \RPS(h)\,.
\end{align}
In this setting, if $B_\posterior$ misclassifies $\xbf$, then at least half of the classifiers (under $ \posterior$) errs on $\xbf$. Hence, we  have $$ \RPS(B_\posterior) \ \leq  \  2\,\RPS(G_\posterior)\,.$$ 
Another result on the relation between $\RPS(B_\posterior)$ and $\RPS(G_\posterior)$ is the $C$-bound of \citet{Lacasse07} expressed as
\begin{align}
\label{eq:C-bound}
\RPS(B_\posterior) \ \leq\ 1-\frac{ \big(1-2\RPS(G_\posterior)\big)^2}{1-2\RDS(G_\posterior,G_\posterior)}\,,
\end{align}
where $\RDS(G_\posterior,G_\posterior)$ corresponds to the disagreement of the classifiers over $\posterior$:
\begin{align}
\label{eq:RGQGQ}
\RDS(G_\posterior,G_\posterior) \ \eqdef\, \esp{(h,h')\sim \posterior^2} \RDS(h,h')\,.
\end{align}
Equation~\eqref{eq:C-bound} suggests that for a fixed numerator, {\it i.e.}, a fixed risk of the Gibbs classifier, the best majority vote is the one with the lowest denominator, {\it i.e.},  with the greatest disagreement between its voters (see \citet{lmr-11} for further analysis).

Finally, we introduce the notion of {\it expected joint error} of a pair of classifiers $(h,h')$ drawn according to the distribution $\posterior$, defined as
\begin{equation}
\label{eq:eP}
\ePS(\GQ,\GQ) \  \eqdef\,   \esp{(h,h') \sim\posterior^2}\esp{(\xbf,y) \sim \PS} \zoloss\big( h(\xb), y \big) \times \zoloss\big( h'(\xb), y \big)\,.
\end{equation}

The PAC-Bayesian theory allows one to bound the expected error $\RPS(G_\posterior)$ in terms of two major quantities: the empirical error $\RS(G_\posterior)   =  \espdevant{h\sim\posterior} \RS(h)$ estimated on a sample~$S$ drawn {\it i.i.d.} from $\PS$  and the Kullback-Leibler divergence $\KL(\posterior\|\prior)  \eqdef  \espdevant{h\sim \posterior}  \ln \frac{\posterior(h)}{\prior(h)}$ (let us recall that $\prior$ and $\posterior$ are respectively the $\emph{prior}$ and the $\emph{posterior}$ distributions).
The three main PAC-Bayes theorems, that we present in the next section, have been proposed by \citet{Mcallester99a,Seeger02,Langford05}; and \citet{catoni2007pac}.

\subsection{Three Versions of the PAC-Bayesian Theorem}

First, let us consider the KL-divergence  $\kl(a\,\|\,b)$ between two Bernoulli distributions with success probability $a$ and $b$, defined by
\begin{equation*}
\kl(a\,\|\,b)\ \eqdef \ a\ln\frac{a}{b}+(1-a)\ln\frac{1-a}{1-b}\,.
\end{equation*}
\citet{Seeger02}; and \citet{Langford05} have derived the following PAC-Bayesian theorem in which the trade-off between the complexity and the risk is handled by  $\kl(\cdot\|\cdot)$.
\begin{theorem}[\citet{Seeger02,Langford05}] 
\label{thm:pacbayesseeger}
For any domain $\PS$ over  $X \times Y$,  any set of hypotheses $\Hcal$, and  any prior distribution $\prior$ over $\Hcal$, any $\delta\in(0,1]$,  with a probability at least $1-\delta$ over the choice of $S\sim (P_S)^m$, for every $\posterior$ over $\Hcal$, we have
\begin{equation*}
\kl\Big(\RS(G_{\posterior})\,\Big\|\, \RPS(G_{\posterior})\Big)\ \leq\   \frac{1}{m}\left[\KL(\posterior\,\|\,\prior)+ \ln  \frac{2\sqrt{m}}{\delta}\right].
\end{equation*}
\end{theorem}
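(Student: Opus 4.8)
The plan is to follow the classical ``change of measure plus Markov'' route, isolating all the sample-dependent randomness into a single exponential moment that is then controlled by a binomial tail estimate. First I would invoke the change of measure inequality underlying every PAC-Bayesian proof: for any measurable $\phi:\Hcal\to\R$ and every posterior $\posterior$,
\[
\esp{h\sim\posterior}\phi(h)\ \leq\ \KL(\posterior\|\prior)+\ln\esp{h\sim\prior}\expo{\phi(h)}\,.
\]
This holds deterministically for all $\posterior$ at once, and crucially the second term on the right does not depend on $\posterior$. The key design choice is to take $\phi(h)=m\,\kl\big(\RS(h)\,\|\,\RPS(h)\big)$, so that the only sample-dependent quantity to control is the prior expectation $\esp{h\sim\prior}\expo{m\,\kl(\RS(h)\|\RPS(h))}$.

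Next I would separate the roles of the sample and the prior. Since $\prior$ is fixed before seeing $S$, the two expectations commute by Fubini, and for each \emph{fixed} $h$ the whole stochastic content reduces to the exponential moment $\esp{S\sim(\PS)^m}\expo{m\,\kl(\RS(h)\|\RPS(h))}$. Here $m\RS(h)$ is a sum of $m$ i.i.d.\ Bernoulli variables of mean $\RPS(h)$, and the pivotal lemma is the binomial estimate
\[
\esp{S\sim(\PS)^m}\expo{m\,\kl(\RS(h)\,\|\,\RPS(h))}\ \leq\ 2\sqrt{m}\,,
\]
valid uniformly in $h$. Taking the $\prior$-average and applying Markov's inequality to the nonnegative random variable $\esp{h\sim\prior}\expo{m\,\kl(\RS(h)\|\RPS(h))}$ gives, on a single event of probability at least $1-\delta$ over $S$ (independent of $\posterior$), the bound $\esp{h\sim\prior}\expo{m\,\kl(\RS(h)\|\RPS(h))}\leq \tfrac{2\sqrt m}{\delta}$, so that $\ln\esp{h\sim\prior}\expo{\phi(h)}\leq\ln\tfrac{2\sqrt m}{\delta}$.

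Combining these, with probability at least $1-\delta$ and for every $\posterior$ simultaneously,
\[
m\,\esp{h\sim\posterior}\kl\big(\RS(h)\,\|\,\RPS(h)\big)\ \leq\ \KL(\posterior\|\prior)+\ln\tfrac{2\sqrt m}{\delta}\,.
\]
The final step passes from the $\posterior$-average of the per-hypothesis $\kl$ to the $\kl$ of the averaged risks, using the joint convexity of $(a,b)\mapsto\kl(a\|b)$ together with Jensen's inequality and the identities $\RS(\GQ)=\esp{h\sim\posterior}\RS(h)$, $\RPS(\GQ)=\esp{h\sim\posterior}\RPS(h)$ coming from \eqref{eq:RGQ}:
\[
\kl\big(\RS(\GQ)\,\|\,\RPS(\GQ)\big)\ \leq\ \esp{h\sim\posterior}\kl\big(\RS(h)\,\|\,\RPS(h)\big)\,.
\]
Dividing by $m$ yields the stated inequality. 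The main obstacle is the binomial exponential-moment bound $\esp{S}\expo{m\,\kl(\RS(h)\|\RPS(h))}\le 2\sqrt m$: the rest is a mechanical assembly, but this estimate requires a genuine computation of a binomial tail (controlling $\sum_{k=0}^{m}\binom{m}{k}(\tfrac{k}{m})^{k}(1-\tfrac{k}{m})^{m-k}$ by $2\sqrt m$), and its constant $2\sqrt m$ is precisely what propagates into the $\ln\tfrac{2\sqrt m}{\delta}$ term of the theorem.
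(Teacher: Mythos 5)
Your proposal is correct and matches essentially the same argument on which the paper relies: the change of measure inequality (Lemma~\ref{lem:change-measure}) with $\phi(h)=m\,\kl\big(\RS(h)\,\|\,\RPS(h)\big)$, Maurer's binomial moment bound $\esp{S\sim(\PS)^m}\expo{m\,\kl(\RS(h)\|\RPS(h))}\leq 2\sqrt{m}$ (Lemma~\ref{lem:maurer}), Markov's inequality (Lemma~\ref{theo:markov}) applied to the prior exponential moment, and Jensen's inequality on the jointly convex $\kl$ (Lemma~\ref{theo:jensen}) --- this is exactly the proof scheme the paper deploys for its analogous disagreement bound, Theorem~\ref{thm:bound_dis_kl} in Appendix~\ref{app:seeger}. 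The only caveat worth noting is that the $2\sqrt{m}$ estimate in Lemma~\ref{lem:maurer} is stated for $m\geq 8$, a restriction your argument (and the paper's) inherits implicitly.
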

This version of the PAC-Bayes theorem offers a tight bound, especially for low empirical risk.
However, due to the $\kl\left(\RS(G_{\posterior})\,\|\, \RPS(G_{\posterior})\right)$ term, this bound remains difficult to interpret: 
the link between the empirical risk $\RS(G_{\posterior})$ and the ``true'' risk $\RPS(G_{\posterior})$ is not given by a close form.
Thus, from an algorithmic point of view, finding the distribution $\posterior$ that minimizes the bound on $\RPS(G_{\posterior})$ given by Theorem~\ref{thm:pacbayesseeger}
might be a difficult task.

The following  version of the PAC-Bayes theorem, which was the first proposed \citep{Mcallester99a}, appears easier to interpret since
it links the terms $\RS(G_{\posterior})$ and $\RPS(G_{\posterior})$ by a linear relation.
Note that Theorem~\ref{thm:pacbayesallester} can be straightforwardly obtained from Theorem~\ref{thm:pacbayesseeger} using Pinsker's inequality: 
\begin{equation}
\label{eq:pinsker}
2(q-p)^2 \ \leq\ \kl(q\,\|\,p)\,.
\end{equation}

\begin{theorem}[\citet{Mcallester99a}] 
\label{thm:pacbayesallester}
For any domain $\PS$ over  $X \times Y$,  any set  of hypotheses $\Hcal$,  any prior distribution $\prior$ over $\Hcal$, and  any $\delta\in(0,1]$,  with a probability at least $1-\delta$ over the choice of $S\sim (P_S)^m$, for every $\posterior$ over $\Hcal$, we have
\begin{equation*}
\Big|\,\RPS(G_{\posterior})-\RS(G_{\posterior})\,\Big| \  \leq\  \sqrt{\frac{1}{2m}\left[\KL(\posterior\,\|\,\prior) + \ln  \frac{2\sqrt{m}}{\delta}\right]}\,.
\end{equation*}
\end{theorem}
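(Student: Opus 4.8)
The plan is to derive the statement directly from the Seeger--Langford bound of Theorem~\ref{thm:pacbayesseeger} by applying Pinsker's inequality~\eqref{eq:pinsker}, exactly as hinted in the text preceding the statement. No new probabilistic argument is needed: the entire high-probability content is already carried by Theorem~\ref{thm:pacbayesseeger}, and the remaining manipulation is a deterministic inequality between real numbers.

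First I would fix $\delta\in(0,1]$, $\prior$, $\Hcal$, and $\PS$, and invoke Theorem~\ref{thm:pacbayesseeger}: with probability at least $1-\delta$ over $S\sim(\PS)^m$, the inequality $\kl\big(\RS(\GQ)\,\|\,\RPS(\GQ)\big)\le \tfrac{1}{m}\big[\KL(\posterior\|\prior)+\ln\tfrac{2\sqrt{m}}{\delta}\big]$ holds \emph{simultaneously for every} posterior $\posterior$ over $\Hcal$. I would then condition on this good event and keep it fixed for the rest of the argument, so that the uniform quantifier over $\posterior$ is preserved: everything that follows is a pointwise consequence applied to each individual $\posterior$.

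On this good event, I would apply Pinsker's inequality~\eqref{eq:pinsker} with the substitution $q=\RS(\GQ)$ and $p=\RPS(\GQ)$, which gives $2\big(\RPS(\GQ)-\RS(\GQ)\big)^2\le \kl\big(\RS(\GQ)\,\|\,\RPS(\GQ)\big)$. Chaining this with the Seeger--Langford bound above yields $\big(\RPS(\GQ)-\RS(\GQ)\big)^2\le \tfrac{1}{2m}\big[\KL(\posterior\|\prior)+\ln\tfrac{2\sqrt{m}}{\delta}\big]$, and taking square roots (the right-hand side being nonnegative) produces the absolute-value bound claimed in Theorem~\ref{thm:pacbayesallester}.

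There is no genuine obstacle here; the only point that demands a little care is the interaction of the two quantifiers. Since Pinsker's inequality is a deterministic statement valid for all pairs of Bernoulli parameters, it may be applied \emph{after} the single high-probability event of Theorem~\ref{thm:pacbayesseeger} has been secured. This is precisely what allows the ``for every $\posterior$'' claim to survive into the conclusion without any additional union bound and without any loss in the confidence level $\delta$.
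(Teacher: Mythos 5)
Your proposal is correct and coincides with the paper's own derivation: the text preceding the statement explicitly notes that Theorem~\ref{thm:pacbayesallester} follows from Theorem~\ref{thm:pacbayesseeger} by applying Pinsker's inequality~\eqref{eq:pinsker}, which is exactly the chain $2\big(\RPS(G_{\posterior})-\RS(G_{\posterior})\big)^2 \leq \kl\big(\RS(G_{\posterior})\,\|\,\RPS(G_{\posterior})\big) \leq \tfrac{1}{m}\big[\KL(\posterior\|\prior)+\ln\tfrac{2\sqrt{m}}{\delta}\big]$ that you give. Your additional remark about conditioning on the single high-probability event so that the uniform quantifier over $\posterior$ survives the deterministic Pinsker step is a correct and careful elaboration of the same argument, not a different route.
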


Theorems~\ref{thm:pacbayesseeger} and~\ref{thm:pacbayesallester} suggest that, in order to minimize the expected risk, a learning algorithm should perform a trade-off between the empirical risk minimization $\RS(G_{\posterior})$ and KL-divergence minimization $\KL(\posterior\,\|\,\prior)$ (roughly speaking the complexity term).

The nature of this trade-off can be explicitly controlled in Theorem~\ref{thm:pacbayescatoni} below.
This PAC-Bayesian result, first proposed by \citet{catoni2007pac}, is defined with a hyperparameter (here named $c$). It appears to be a natural tool to design PAC-Bayesian algorithms.
We present this result in the simplified form suggested by~\citet{germain09b}.    
\begin{theorem}[\citet{catoni2007pac}] 
\label{thm:pacbayescatoni}
For any domain $\PS$ over  $X   \times   Y$, for  any set of hypotheses $\Hcal$,  any prior distribution $\prior$ over $\Hcal$, any $\delta \in (0,1]$, and any real number $c>0$,  with a probability at least $1 - \delta$ over the choice of $S \sim  (P_S)^m $, for every $\posterior$ on $\Hcal$, we have
\begin{equation*}
\RPS(G_{\posterior}) \ \leq\ \frac{c}{1 - e^{-c}}    \left[\RS(G_{\posterior})  +  \frac{\KL(\posterior\|\prior) + \ln  \frac{1}{\delta}}{m\times c}\right] .
\end{equation*}
\end{theorem}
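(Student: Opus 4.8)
The plan is to follow the standard PAC-Bayesian recipe, which rests on three ingredients: a change-of-measure inequality, control of an exponential moment over the random draw of $S$, and a single application of Markov's inequality. First I would invoke the change-of-measure (Donsker--Varadhan type) inequality: for any function $\phi:\Hcal\to\R$,
\[ \esp{h\sim\posterior}\phi(h) \ \leq\ \KL(\posterior\|\prior) + \ln \esp{h\sim\prior} e^{\phi(h)}, \]
which holds for \emph{every} posterior $\posterior$ and follows from the nonnegativity of the KL divergence (equivalently, from Jensen's inequality applied to the measure swap from $\prior$ to $\posterior$).

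The crucial step is to choose $\phi$ so that its prior-averaged exponential moment is controlled over the sample. For a fixed $h$, the per-example loss $\zoloss(h(\xb),y)$ is a Bernoulli random variable of mean $\RPS(h)$. Using the convexity of $x\mapsto e^{-cx}$ on $[0,1]$ (so that it lies below its chord, giving $e^{-cx}\le 1-(1-e^{-c})x$) together with $1-u\le e^{-u}$, I would bound the moment generating function of a single example by $e^{-(1-e^{-c})\RPS(h)}$. By independence of the $m$ examples, and since $\sum_i \zoloss(h(\xb),y)=m\,\RS(h)$, this yields
\[ \esp{S\sim(\PS)^m} e^{-c\,m\,\RS(h) + m(1-e^{-c})\RPS(h)} \ \leq\ 1. \]
Taking $\phi(h)=-c\,m\,\RS(h)+m(1-e^{-c})\RPS(h)$, Fubini's theorem (valid because $\prior$ is independent of $S$) gives $\esp{S}\esp{h\sim\prior}e^{\phi(h)}\le 1$, so Markov's inequality ensures that with probability at least $1-\delta$ over $S$ the quantity $\esp{h\sim\prior}e^{\phi(h)}$ is at most $\tfrac1\delta$.

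Finally, on this high-probability event I would substitute into the change-of-measure inequality and use the linearity of the Gibbs risk, namely $\esp{h\sim\posterior}\RS(h)=\RS(G_\posterior)$ and $\esp{h\sim\posterior}\RPS(h)=\RPS(G_\posterior)$ from Equation~\eqref{eq:RGQ}, to obtain
\[ m(1-e^{-c})\RPS(G_\posterior) - c\,m\,\RS(G_\posterior) \ \leq\ \KL(\posterior\|\prior) + \ln\tfrac1\delta. \]
Rearranging and dividing by $m(1-e^{-c})$ produces the claimed bound after factoring out $\tfrac{c}{1-e^{-c}}$. The point worth emphasizing---and the only genuinely delicate part---is that the Markov step is applied to $\esp{h\sim\prior}e^{\phi(h)}$ \emph{before} any posterior is introduced; since this random variable does not depend on $\posterior$, the resulting event simultaneously validates the bound for \emph{every} $\posterior$, which is exactly the uniformity demanded by the statement. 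The computations bounding the moment generating function are routine; the conceptual crux is merely the ordering of the change-of-measure and Markov steps.
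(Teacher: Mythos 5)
Your proof is correct, and it follows the same PAC-Bayesian recipe that the paper relies on: the theorem itself is quoted from \citet{catoni2007pac} (in the simplified form of Germain et al.), and the paper exhibits the proof process in Appendix~\ref{app:catoni}, where the analogous disagreement bound of Theorem~\ref{thm:bound_dis_rho} is proved ``following the proof process of Theorem~\ref{thm:pacbayescatoni}''. The one structural difference is \emph{where} the relaxation $1-u\leq e^{-u}$ (equivalently $1-e^{-x}\leq x$) is applied. The paper's template keeps the convex function $\Fcal(x)=-\ln\left[1-(1-e^{-c})x\right]$ of the \emph{true} risk inside the exponent, so that the binomial moment computation gives $\esp{S}\expo{m(\Fcal(\RPS(h))-c\,\RS(h))}=1$ exactly; it then needs Jensen's inequality on the convex $\Fcal$ to pass from $\esp{h\sim\posterior}\Fcal(\RPS(h))$ to $\Fcal(\RPS(G_\posterior))$, inverts $\Fcal$ to isolate the Gibbs risk, and only at the very end relaxes via $1-e^{-x}\leq x$. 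You instead linearize per hypothesis, replacing $\Fcal(\RPS(h))$ by its lower bound $(1-e^{-c})\RPS(h)$ \emph{before} Markov and the change of measure; this makes your $\phi$ linear in both risks, so the posterior-averaging step needs only linearity of expectation (no Jensen, no inversion of $\Fcal$), and Maurer's lemma is superfluous since for the $\zo$ loss the sum $m\,\RS(h)$ is exactly binomial (your chord bound $e^{-cx}\leq 1-(1-e^{-c})x$ holds with equality in expectation here; it would be needed for general $[0,1]$-valued losses, as in the paper's Appendix~\ref{app:catoni}). The trade-off: your route is more elementary and yields the identical final bound, but it discards the tighter intermediate statement $\Fcal\big(\RPS(G_\posterior)\big)\leq c\,\RS(G_\posterior)+\tfrac{1}{m}\big(\KL(\posterior\|\prior)+\ln\tfrac{1}{\delta}\big)$, which is Catoni's original, slightly stronger form. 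Your insistence on applying Markov to the $\posterior$-independent variable $\esp{h\sim\prior}\expo{\phi(h)}$ before any posterior enters is exactly right, and is the same mechanism by which the paper obtains validity simultaneously for every $\posterior$.
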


The bound given by Theorem~\ref{thm:pacbayescatoni} has two interesting characteristics. First, choosing \mbox{$c = \tfrac{1}{\sqrt{m}}$}, the bound becomes consistent: it converges to {\small $1 \times \left[\RS(G_{\posterior})  +  0\right]$} as $m$~grows.
Second, as described in Section~\ref{sec:pbgd}, its minimization is closely related to the minimization problem associated with the {\small SVM} when $\posterior$ is an isotropic Gaussian over the space of linear classifiers \citep{germain2009pac}. Hence,  the value $c$ allows us to control the trade-off between the empirical risk $\RS(G_{\posterior})$ and the complexity term $\tfrac{1}{m}\,\KL(\posterior\|\prior) $.

\subsection{Supervised PAC-Bayesian Learning of Linear Classifiers} 
\label{sec:pbgd}

Let us consider $\Hcal$ as a set of  linear classifiers in a $d$-dimensional space. Each $h_{\wbf'}\in \Hcal$ is defined by a weight vector ${\wbf'}\in\mathbb{R}^d$:
$$h_{\wbf'}(\xb)\ \eqdef \ \sgn\LP\wbf'\cdot\xb\RP,$$ 
where $\,\cdot\,$ denotes the dot product.

\medskip
By restricting the prior and the posterior distributions over $\Hcal$ to be Gaussian distributions,  \citet{Langford02,AmbroladzePS06}; and~\citet{Parrado-Hernandez12} have specialized the PAC-Bayesian theory in order to bound the expected risk of any linear classifier $h_\wb\in\Hcal$.
More precisely, given a prior $\prior_{\mathbf{0}}$ and a posterior $\posterior_\wb$ defined as spherical Gaussians with identity covariance matrix respectively centered on vectors $\mathbf{0}$ and~$\wb$, for any  $h_{\wbf'}\in\Hcal$, we have
\begin{align*}
\prior_\mathbf{0}(h_{\wbf'})\, &\eqdef\, \LP \frac{1}{\sqrt{2\pii}} \RP^{ d}
  \exp\left({-\frac{1}{2}\|{\wbf'}\|^2}\right)\, ,\\
\mbox{ and } \quad 
    \posterior_\wb(h_{\wbf'})  \, &\eqdef\,   \LP \frac{1}{\sqrt{2\pii}} \RP^{ d}
  \exp\left({-\frac{1}{2}\|{\wbf'}-\wb\|^2}\right) \, .
 \end{align*}
 An interesting property of these Gaussian distributions is that the prediction of the \mbox{$\posterior_\wb$-weighted} majority vote $B_{\posterior_\wb}(\cdot)$ coincides with the one of the linear classifier $h_\wb(\cdot)$. Indeed, we have
 \begin{align*}
\forall\, \xb\in X,\ \forall\,\wb\in \Hcal, \quad 
h_\wb(\xb) \ &= \ B_{\posterior_\wb}(\xb) \\
 \ &= \ \sign\left[ \esp{h_{\wb'} \sim \posterior_\wb} h_{\wb'}(\xb) \right].
 \end{align*} 
 Moreover, the expected risk of the Gibbs classifier $G_{\posterior_\wb}$ on a domain $\PS$ is then given by
\begin{align*}
\RPS(G_{\posterior_\wb}) \  
 &= \esp{(\xb,y)\sim P_S} \  \esp{h_{\wbf'}\sim\posterior_\wb} \zoloss\big( h_{\wbf'}(\xb), y \big) \\    
  &= \esp{(\xb,y)\sim P_S} \   \esp{h_{\wbf'}\sim\posterior_\wb} \I\,\big( h_{\wbf'}(\xb) \neq  y \big) \\    
 &= \esp{(\xb,y)\sim P_S} \   \esp{h_{\wbf'}\sim\posterior_\wb}    \I\,\big(y \,\wb' \cdot \xb \leq 0\big)\\ 
 &=  \esp{(\xb,y)\sim P_S} \   \frac{1}{\sqrt{2\pii}}\  \int_{ \mathbb{R}^d} 
   \exp\left({-\frac{1}{2}\|{\wbf'}-\wb\|^2}\right) \, \I\,\big(y \,\wb' \cdot \xb \leq 0\big)\, d\, \wb' \\ 
 &= \esp{(\xb,y)\sim P_S}  \left[ 1 - \prob{t\sim\mathcal{N}(0,1)}\!\!\left(t\ \leq\  y\, \frac{\wb \cdot \xb}{\|\xb\|} \right) \right] \\
 &=  \esp{(\xb,y)\sim P_S} \Phi \left(  y\, \frac{\wb \cdot \xb}{\|\xb\|}  \right),
\end{align*}
where we defined
$$\Phi(a)  \ \eqdef  \ 
\frac{1}{2}  \left[1 - \Erf\left( \frac{a}{\sqrt{2}} \right) \right],$$
with $\Erf(\cdot)$ is the Gauss error function defined as
\begin{align}
\label{eq:erf}
\Erf\,(b)\ \eqdef\ \frac{2}{\sqrt{\pii}}\ \int_{0}^{b} \exp\left(-t^2\right) \text{d}t\,.
\end{align}
Finally, the KL-divergence between $\posterior_\wb$ and $\prior_\mathbf{0}$ becomes simply 
 $$\KL(\posterior_\wb \| \prior_\mathbf{0}) \ = \ \tfrac{1}{2}\| \wb \|^2\,. $$

\subsubsection{Objective Function and Gradient}
\label{sec:pbgd_objective}

Based on the specialization of the PAC-Bayesian theory to linear classifiers, \citet{germain2009pac} suggested minimizing a PAC-Bayesian bound on $\RPS(G_{\posterior_\wb})$. 
For sake of completeness, we provide here more mathematical details than in the original conference paper \citep{germain2009pac}. We will build on this PAC-Bayesian learning algorithm (for supervised leaning) in our domain adaptation work.

 Given a sample $S = \{(\xb^s_i, y^s_i)\}_{i=1}^m$ and a hyperparameter $C>0$,
the learning algorithm performs a gradient descent in order to find an optimal weight vector $\wb$ that minimizes
\begin{eqnarray}
\label{eq:prob_pbgd_primal}
F(\wb) 
& = & 
C m R_{S} (G_{\posterior_\wb})  + \KL(\posterior_\wb \| \prior_\mathbf{0}) \nonumber \\[1mm]
& = &
C  \displaystyle\sum_{i=1}^m \Phi \left(  y_i \frac{\wb \cdot \xb_i}{\|\xb_i\|}  \right)  
 + \frac{1}{2}\|\wb\|^2\,.
\end{eqnarray}It turns out that the optimal vector $\wb$ corresponds to the distribution $\posterior_\wb$ that minimizes the value of the bound on $\RPS(G_{\posterior_\wb})$ given by Theorem~\ref{thm:pacbayescatoni}, with the parameter $c$ of the theorem being the hyperparameter $C$ of the learning algorithm. It is important to point out that PAC-Bayesian theorems bound simultaneously $\RPS(G_{\posterior_\wb})$ \emph{for every $\posterior_\wb$ on $\Hcal$}. Therefore, one can ``freely'' explore the domain of objective function $F$ to choose a posterior distribution~$\posterior_\wb$ that gives, thanks to Theorem~\ref{thm:pacbayescatoni}, a bound valid with probability $1-\delta$.

The minimization of Equation~\eqref{eq:prob_pbgd_primal} by gradient descent corresponds to the learning  algorithm called {\small PBGD3} of \citet{germain2009pac}.
The gradient of $F(\wb)$ is given the vector~$\nabla F (\wb)$:
\begin{align*}
\nabla F(\wb) \ = \ C \sum_{i=1}^m 
\Phi' \LP y_i \frac{\wb\cdot\xb_i}{\|\xb_i\|} \RP  \frac{y_i\,\xb_i}{\|\xb_i\|}
+ \wb\,,
\end{align*}
where $\Phi'(a) = -\tfrac{1}{\sqrt{2\pii}} \exp\left(-\tfrac{1}{2} a^2\right)$ is the derivative of $\Phi(\cdot)$ at point $a$.

\medskip
Similarly to the SVM, the learning algorithm {\small PBGD3} realizes a trade-off between the empirical risk 
(expressed by the loss  $\Phi(\cdot)$) 
and the complexity of the learned linear classifier (expressed by the regularizer $\|\wb\|^2$).  
This similarity  increases when we use a kernel function, as described next.

\subsubsection{Using a kernel function}

The kernel trick allows to substitute inner products by a kernel
function $k:\R^d \times \R^d\rightarrow\R$ in
Equation~\eqref{eq:prob_pbgd_primal}. If $k$ is a Mercer kernel, it
implicitly represents a function $\phi:X\rightarrow\R^{d'}$ that maps
an example of $X$ into an arbitrary $d'$-dimensional space\footnote{We
consider here that the induced space is finite-dimensional.}, such that 
$$\forall(\xbf,\xbf')\in X^2,\quad k(\xb,\xb') \ =\ \phi(\xb)\cdot\phi(\xb')\,.$$ 
Then, a dual weight vector $\ab = (\alpha_1,\alpha_2, \ldots,\alpha_{m})\in\R^m$ encodes the linear classifier $\wb \in \R^{d'}$ as a linear combination of examples of $S$:
\begin{equation*}
\wb \ =\ 
\sum_{i=1}^m\alpha_i\, \phi(\xb_i)\,, 
\quad\mbox{ and thus } \quad
h_\wb(\xb) \ =\ 
\sgn\left[
\sum_{i=1}^m \alpha_i k(\xb_i, \xb)
\right].
\end{equation*}

By the representer theorem \citep{scholkopf-01}, the vector $\wb$ minimizing Equation~\eqref{eq:prob_pbgd_primal} can be recovered by finding the vector $\ab$ that minimizes
\begin{align} \label{eq:prob_pbgd_dual}
F(\ab) \ = \ 
C \sum_{i=1}^m  \Phi \left(  y_i \frac{\sum_{j=1}^{m} \alpha_j K_{i,j}}{ \sqrt{K_{i,i}}}  \right)+
\frac{1}{2} \sum_{i=1}^{m} \sum_{j=1}^{m} \alpha_i \alpha_j K_{i,j}\,,
\end{align}
where $K$ is the kernel matrix of size $m\times m$. That is, $K_{i,j} \eqdef \,k(\xb_i, \xb_j)\,.$
The gradient of $F(\ab)$ is simply given the vector $\nabla F (\ab) = (\alpha_1',\alpha_2', \ldots\alpha_{m}')$, with 
\begin{align*} 
 \alpha'_\# \ = \ 
 C \sum_{i=1}^m  \Phi \left(  y_i \frac{\sum_{j=1}^{m} \alpha_j K_{i,j}}{ \sqrt{K_{i,i}} }  \right)   \frac{y_i \,K_{i,\#}}{ \sqrt{K_{i,i}} } 
 + \sum_{j=1}^{m} \alpha_i K_{i,\#}\,,
 \quad \mbox{ for } \# \in \{1,2,\ldots,m\,\}\,.
\end{align*}

\subsubsection{Improving the Algorithm Using a Convex Objective }
\label{section:pbgd3_convex}
An annoying drawback of {\small PBGD3} is that the objective function is non-convex and the gradient descent implementation needs many random restarts. In fact, we  made extensive empirical experiments after the ones described by~\citet{germain2009pac} and saw that {\small PBGD3}  achieves an equivalent accuracy (and at a fraction of the running time) by replacing the  loss function $\Phi(\cdot)$ of Equations~\eqref{eq:prob_pbgd_primal} and~\eqref{eq:prob_pbgd_dual} by its convex relaxation, which is
\begin{align*}
\Phic(a) \ &\eqdef\ 
\max \left\{\Phi(a),\, \frac{1}{2} - \frac{a}{\sqrt{2\pii}} \right\}\\
\ &=
\,\left\{
\begin{array}{ll}
 \displaystyle \frac{1}{2} - \frac{a}{\sqrt{2\pii}} & \mbox{if $a\leq 0$},\\
 \Phi(a) & \mbox{otherwise}.
\end{array}\right.
\end{align*}
The derivative of $\Phic(\cdot)$ at point $a$ is then 
$\Phic'(a) = \frac{-1}{\sqrt{2\pii}}$ if $a<0$, and $\Phi'(a)$ otherwise.
Note that Figure~\ref{fig:phi3} in Section~\ref{sec:pbda} illustrates the functions $\Phi(\cdot)$ and $\Phic(\cdot)$\,.\\ 

\noindent In the following we present our contributions on PAC-Bayesian domain adaptation.


\section{PAC-Bayesian Theorems for Domain Adaptation}
\label{sec:our_bound}

The originality of our contribution is to theoretically design a domain adaptation framework for PAC-Bayesian approach.
In Section \ref{sec:distance}, we propose a domain comparison pseudometric suitable in this context.
We then derive PAC-Bayesian domain adaptation bounds in Section~\ref{sec:pacbayesdabound}, that improves the result proposed in \citet{pbda}. Finally, note that in Section~\ref{sec:pbda} we see that using the previous approach in a domain adaptation way is a relevant strategy:  we specialize our result to linear classifiers.

\subsection{A Domain Divergence for PAC-Bayesian Analysis}
\label{sec:distance}

In the following, while the domain adaptation bounds presented in Section~\ref{sec:notations} focus on a single classifier, we first define a $\posterior$-average disagreement measure to compare the marginals. Then, this leads us to derive our domain adaptation bound suitable for the PAC-Bayesian approach. 

As discussed in Section \ref{sec:necessity_dist}, the derivation of generalization ability in domain adaptation critically needs a  divergence measure between the source and target marginals.

\subsubsection{Designing the Divergence}
We define a \emph{domain disagreement pseudometric}\footnote{A pseudometric $d$ is a metric for which the property  \mbox{$d(x,y)=0 \Leftrightarrow x=y$} is relaxed to \mbox{$d(x,y)=0 \Leftarrow x=y$}.} to measure the structural difference between domain marginals in terms of posterior distribution $\posterior$ over $\Hcal$.
Since we are interested in learning a $\posterior$-weighted majority vote $B_\posterior$ leading to good generalization guarantees, we propose to follow the idea behind the $C$-bound presented in Equation~\eqref{eq:C-bound}: 
given $\PS$, $\PT$, and  $\posterior$, if $\RPS(G_\posterior)$ and $\RPT(G_\posterior)$ are similar, then $\RPS(B_\posterior)$ and  $\RPT(B_\posterior)$ are similar when $\esp{(h,h')\sim \posterior^2 }     \RDS(h,h')$ and $\esp{(h,h')\sim \posterior^2 }      \RDT(h, h')$ are also similar.
Thus, the domains $\PS$ and $\PT$  are close according to  $\posterior$ if the divergence between $\esp{(h,h')\sim \posterior^2 }     \RDS(h,h')$ and $\esp{(h,h')\sim \posterior^2 }      \RDT(h, h')$ tends to be low.
Our pseudometric is  defined as follows.
\begin{definition}
\label{def:disagreement}
Let  $\Hcal$ be a hypothesis class. For any marginal distributions $\DS$ and $\DT$ over $X$, any distribution $\posterior$ on $\Hcal$, the domain disagreement $\des(\DS,\DT)$ between $\DS$ and $\DT$ is defined by
\begin{align*}  
\des(\DS,\DT)  \ &\eqdef \ \left| \esp{(h,h')\sim \posterior^2 }    \Big[ \RDT(h,h')  - \RDS(h, h') \Big] \right|\\
 &= \  
\Big|\,\RDT(G_\posterior,G_\posterior)  - \RDS(G_\posterior,G_\posterior) \,\Big|
\,.
\end{align*}
\end{definition}
Note that $\des(\cdot, \cdot)$ is symmetric and fulfills the triangle inequality.

\subsubsection{Comparison of the {\small $\hdh$}-divergence and our domain disagreement} 
While  the {\small $\hdh$}-divergence of Theorem~\ref{theo:BenDavid} is difficult to jointly optimize with the empirical source error, our empirical disagreement measure is easier to manipulate:  we simply need to compute the $\posterior$-average of the classifiers disagreement instead of finding the pair of classifiers that maximizes the disagreement. 
Indeed, $\des(\cdot,\cdot)$ depends on the majority vote, which suggests that we can  directly minimize it via the empirical $\des(S,T)$ and the KL-divergence. This can be done without instance reweighing, space representation changing or family of classifiers modification.
On the contrary,~$\tfrac{1}{2}d_{\hdh}(\cdot,\cdot)$ is a supremum over all $h \in \Hcal$ and hence, does not depend on the $h$ on which the risk is considered.
Moreover, $\des(\cdot,\cdot)$ (the $\posterior$-average) is lower than the $\tfrac{1}{2}d_{\hdh}(\cdot,\cdot)$ (the worst case).
Indeed, for every $\Hcal$ and $\posterior$ over $\Hcal$, we~have
\begin{align*}
\tfrac{1}{2}\, d_{\hdh}(\DS,\DT)  \ & =\ \sup_{\substack{(h,h')\in\mathcal{H}^2}} \left|\RDT(h,h')-\RDS(h,h')\right| \\
&\geq\  \esp{(h,h')\sim\posterior^2} \left| \RDT(h,h')-\RDS(h,h')\right|\\
&\geq\   \des(\DS,\DT)\,.
\end{align*}

\subsubsection{PAC-Bayesian bounds for our domain disagreement}
\label{section:PB-dis}

The following theorems show that $\des(\DS,\DT)$ can be bounded in terms of the classical PAC-Bayesian quantities: the empirical disagreement $\des(S,T)$ estimated on the source and target samples, and the KL-divergence between the prior and posterior distribution on~$\Hcal$.

\medskip

For the sake of simplicity, let first suppose that $m = m'$, {\it i.e.}, the size of $S$ and $T$ are~equal. 
Here is a ``Seeger's type'' PAC-Bayesian bound for our domain disagreement~$\des$.
\begin{theorem} \label{thm:bound_dis_kl}
 For any distributions $\DS$ and $\DT$ over $X$, any set of hypotheses $\Hcal$,  and any prior distribution $\prior$ over $\Hcal$, any $\delta \in (0,1]$, with a probability at least $1 - \delta$ over the choice of $S \times  T \sim  (D_S \times  D_T)^m$, for every $\posterior$ on $\Hcal$, we have
 \begin{align*}
 \kl \left(  \frac{\des(S,T) +1}{2} \Bigg\|\frac{\des(D_S,D_T) +1}{2}  \right)\ \leq\     \frac{1}{m} \left[2\,\KL(\posterior\|\prior) +  \ln  \frac{2\sqrt{m}}{\delta}\right].
 \end{align*}
 \end{theorem}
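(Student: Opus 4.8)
The plan is to mirror the proof of the Seeger--Langford bound (Theorem~\ref{thm:pacbayesseeger}), but applied to \emph{pairs} of hypotheses $(h,h')$ drawn from the product distributions $\posterior^2$ and $\prior^2$, since this is exactly the object appearing in Definition~\ref{def:disagreement}. Because we assume $m=m'$, I would first pair the source and target samples index by index, so that $(\xbf^s_i,\xbf^t_i)$ are i.i.d.\ from $\DS\times\DT$. For a fixed pair $(h,h')$ I then introduce the per-example variable
$$Z_i \ \eqdef\ \tfrac{1}{2}\Big(\zoloss\big(h(\xbf^t_i),h'(\xbf^t_i)\big) - \zoloss\big(h(\xbf^s_i),h'(\xbf^s_i)\big) + 1\Big)\ \in\ \big\{0,\tfrac12,1\big\}.$$
The $Z_i$ are i.i.d., their empirical mean is $\tfrac12\big(\RT(h,h')-\RS(h,h')+1\big)$, and their expectation is $\tfrac12\big(\RDT(h,h')-\RDS(h,h')+1\big)$. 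This is the construction that lets the \emph{signed} domain disagreement play the role of a $[0,1]$-valued loss, so that the binary $\kl$ can act on it.

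The core step is the exponential-moment control. For a fixed pair, I would invoke Maurer's extension of the PAC-Bayesian argument to $[0,1]$-valued variables, giving $\E_{S\times T}\,e^{m\,\kl(\widehat Z\,\|\,\E Z)}\le 2\sqrt m$ for the binary $\kl$; this is precisely the place where the three-valued (rather than Bernoulli) nature of $Z_i$ must be absorbed, and it is why $2\sqrt m$ appears on the right-hand side. Plugging this into the Donsker--Varadhan change-of-measure inequality with prior $\prior^2$ and posterior $\posterior^2$, exchanging $\E_{S\times T}$ and $\E_{\prior^2}$ by Fubini, and applying Markov's inequality yields, with probability at least $1-\delta$ and simultaneously for all $\posterior$,
$$\esp{(h,h')\sim\posterior^2} m\,\kl\!\Big(\tfrac{\RT(h,h')-\RS(h,h')+1}{2}\,\Big\|\,\tfrac{\RDT(h,h')-\RDS(h,h')+1}{2}\Big)\ \le\ \KL(\posterior^2\|\prior^2)+\ln\tfrac{2\sqrt m}\delta.$$
Here I would use $\KL(\posterior^2\|\prior^2)=2\,\KL(\posterior\|\prior)$, which is the origin of the factor $2$, and then invoke the joint convexity of $\kl$ (Jensen) to pull the $\posterior^2$-average inside both arguments, replacing the empirical and true differences by $\RT(\GQ,\GQ)-\RS(\GQ,\GQ)$ and $\RDT(\GQ,\GQ)-\RDS(\GQ,\GQ)$.

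The remaining issue is the absolute value: the quantity produced so far is the signed difference, whereas $\des$ is its modulus. Writing $\widehat\Delta=\RT(\GQ,\GQ)-\RS(\GQ,\GQ)$ and $\Delta=\RDT(\GQ,\GQ)-\RDS(\GQ,\GQ)$, I would settle this with two elementary facts about the binary $\kl$. First, the symmetry $\kl(p\|q)=\kl(1-p\|1-q)$ shows the signed bound on $\kl\big(\tfrac{\widehat\Delta+1}{2}\,\big\|\,\tfrac{\Delta+1}{2}\big)$ equals the one obtained with the signs of $\widehat\Delta,\Delta$ flipped, so the sign of the true difference is immaterial. Second, for $p\ge\tfrac12$ and $q\le\tfrac12$,
$$\kl(p\|q)-\kl(p\|1-q)=(2p-1)\ln\tfrac{1-q}{q}\ \ge\ 0,$$
so reflecting a second argument on the ``wrong'' side of $\tfrac12$ only decreases the divergence. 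A short case analysis on the signs of $\widehat\Delta$ and $\Delta$ then gives $\kl\big(\tfrac{|\widehat\Delta|+1}{2}\,\big\|\,\tfrac{|\Delta|+1}{2}\big)\le\kl\big(\tfrac{\widehat\Delta+1}{2}\,\big\|\,\tfrac{\Delta+1}{2}\big)$, turning the signed differences into $\des(S,T)$ and $\des(\DS,\DT)$ and finishing the proof after dividing by $m$. I expect the exponential-moment bound for the three-valued variable (the appeal to Maurer's lemma) to be the genuine crux; the change of measure, the factor-$2$ bookkeeping, and the absolute-value reduction are routine once it is in place.
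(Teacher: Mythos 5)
Your proposal is correct and takes essentially the same route as the paper's proof (Appendix~\ref{app:seeger}): the same index-wise pairing of $S$ and $T$, the same $\{0,\tfrac{1}{2},1\}$-valued loss on ``abstract'' classifiers $(h,h')$ weighted by $\posterior^2$ and $\prior^2$, Maurer's lemma for the $2\sqrt{m}$ exponential-moment bound, Markov plus change of measure plus Jensen with $\KL(\posterior^2\,\|\,\prior^2)=2\,\KL(\posterior\,\|\,\prior)$, and finally the reduction from the signed difference to $\des$. Your absolute-value step, via the symmetry $\kl(p\|q)=\kl(1-p\|1-q)$ and the identity $\kl(p\|q)-\kl(p\|1-q)=(2p-1)\ln\tfrac{1-q}{q}$, is merely a compact rendering of the paper's four-case computation in Lemma~\ref{lem:claim}, so the two arguments coincide in substance.
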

\begin{proof}
Deferred to Appendix \ref{app:seeger}.
\end{proof}

Here is a ``McAllester's type'' PAC-Bayesian bound for our domain disagreement~$\des$ obtained straightforwardly from Theorem~\ref{thm:bound_dis_kl}.
\begin{corollary} \label{cor:bound_dis_kl_pinsker}
 For any distributions $\DS$ and $\DT$ over $X$, any set of hypotheses $\Hcal$,  and any prior distribution $\prior$ over $\Hcal$, any $\delta \in (0,1]$, with a probability at least $1 - \delta$ over the choice of $S \times  T \sim  (D_S \times  D_T)^m$, for every $\posterior$ on $\Hcal$, we have
 \begin{align*}
\Big| \des(D_S,D_T) - \des(S,T)  \Big|\ \leq\    2\times\sqrt{ \frac{1}{2m} \left[2\,\KL(\posterior\|\prior) +  \ln  \frac{2\sqrt{m}}{\delta}\right]}.
 \end{align*}
 \end{corollary}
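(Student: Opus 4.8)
The plan is to obtain this McAllester-type bound as a one-step consequence of the Seeger-type bound of Theorem~\ref{thm:bound_dis_kl} via Pinsker's inequality~\eqref{eq:pinsker}, in exactly the same way that Theorem~\ref{thm:pacbayesallester} is deduced from Theorem~\ref{thm:pacbayesseeger}. First I would set $q = \tfrac12\big(\des(S,T)+1\big)$ and $p = \tfrac12\big(\des(D_S,D_T)+1\big)$. Since $\des(\cdot,\cdot)$ is the absolute value of a difference of two quantities each lying in $[0,1]$, it takes values in $[0,1]$; hence both $q$ and $p$ lie in $[\tfrac12,1]\subseteq[0,1]$ and are legitimate Bernoulli success probabilities, so that $\kl(q\|p)$ and Pinsker's inequality are well defined on them.

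Next I would work on the event of probability at least $1-\delta$ provided by Theorem~\ref{thm:bound_dis_kl} and chain the two inequalities there. Pinsker gives $2(q-p)^2\le\kl(q\|p)$, while Theorem~\ref{thm:bound_dis_kl} bounds $\kl(q\|p)$ from above, so that
\[
2(q-p)^2 \ \le\ \frac{1}{m}\left[2\,\KL(\posterior\|\prior)+\ln\frac{2\sqrt m}{\delta}\right].
\]
It then remains only to rewrite the left-hand side in terms of the domain disagreements and rearrange.

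The sole point demanding care is the bookkeeping of the constants introduced by the affine rescaling $x\mapsto\tfrac12(x+1)$. Because this map has slope $\tfrac12$, the difference contracts as $q-p=\tfrac12\big(\des(S,T)-\des(D_S,D_T)\big)$, so $2(q-p)^2=2\cdot\tfrac14\big(\des(D_S,D_T)-\des(S,T)\big)^2=\tfrac12\big(\des(D_S,D_T)-\des(S,T)\big)^2$. Substituting this into the display, multiplying through by $2$, and taking the square root yields $\big|\des(D_S,D_T)-\des(S,T)\big|\le 2\sqrt{\tfrac1{2m}\big[2\,\KL(\posterior\|\prior)+\ln\tfrac{2\sqrt m}{\delta}\big]}$, which is precisely the claim, the outer factor $2$ being exactly the reciprocal of the slope $\tfrac12$. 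There is no genuine obstacle at this stage — the entire mathematical content sits in Theorem~\ref{thm:bound_dis_kl} — and the present corollary merely trades the implicit, tight $\kl$-form for an explicit and more interpretable linear gap, at the usual cost of the slack in Pinsker's inequality.
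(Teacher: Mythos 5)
Your proof is correct and is exactly the paper's own argument: the corollary is obtained by applying Pinsker's inequality (Equation~\eqref{eq:pinsker}) to the Seeger-type bound of Theorem~\ref{thm:bound_dis_kl}, and your bookkeeping of the affine rescaling---the slope $\tfrac12$ contracting the difference and thus producing the outer factor $2$ after taking the square root---matches the stated constant precisely.
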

\begin{proof}
The result is obtained by using Pinsker's inequality (Equation~\eqref{eq:pinsker}) on Theorem~\ref{thm:bound_dis_kl}.
\end{proof}

Here is a ``Catoni's type'' PAC-Bayesian bound which helps us to derive a domain adaptation algorithm in the following.
\begin{theorem} \label{thm:bound_dis_rho}
 For any distributions $\DS$ and $\DT$ over $X$, any set of hypotheses $\Hcal$,  any prior distribution $\prior$ over $\Hcal$, any $\delta \in (0,1]$, and any real number $\alpha > 0$,  with a probability at least $1 - \delta$ over the choice of $S \times  T \sim  (D_S \times  D_T)^m$, for every $\posterior$ on $\Hcal$, we have
 \begin{align*}
\des(\DS,\DT)\ \leq\   \frac{2\alpha }{1 -e^{-2\alpha}}  \left[ \des(S,T)  +  \frac{2\,\KL(\posterior\|\prior)  +  \ln  \frac{2}{\delta}}{m\times\alpha} + 1\right] - 1\,.
\end{align*}
\end{theorem}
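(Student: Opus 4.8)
The plan is to reduce the claim to a Catoni-type control of the affinely rescaled quantity $\tfrac12\big(\des(\DS,\DT)+1\big)$, which lives in $[0,1]$, and then to undo the rescaling at the very end. Since $m=m'$, I would first pair the $i$-th source example with the $i$-th target example and, for a fixed pair $(h,h')\in\Hcal^2$, introduce the i.i.d.\ variables
\[
Y_i \ \eqdef\ \tfrac12\Big(\zoloss\big(h(\xb^t_i),h'(\xb^t_i)\big) - \zoloss\big(h(\xb^s_i),h'(\xb^s_i)\big) + 1\Big)\ \in\ \{0,\tfrac12,1\}\,,
\]
whose expectation over $\DS\times\DT$ equals $\tfrac12\big((\RDT(h,h')-\RDS(h,h'))+1\big)$ and whose empirical average over the paired sample equals $\tfrac12\big((\RT(h,h')-\RS(h,h'))+1\big)$. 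Averaging over $(h,h')\sim\posterior^2$ turns these into $\tfrac12\big((\RDT(\GQ,\GQ)-\RDS(\GQ,\GQ))+1\big)$ and its empirical counterpart, i.e.\ the signed versions of $\des$.

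The next step is the generic PAC-Bayesian change-of-measure argument that also underlies Theorem~\ref{thm:bound_dis_kl}: for a convex function $\mathcal{D}$, Jensen's inequality moves the $\posterior^2$-average inside $\mathcal{D}$, the change of measure from $\posterior^2$ to $\prior^2$ produces the term $2\,\KL(\posterior\|\prior)$ (the factor $2$ coming from $\KL(\posterior^2\|\prior^2)=2\KL(\posterior\|\prior)$), and Markov's inequality introduces $\ln\tfrac1\delta$ together with the moment $\mathcal{I}(m)=\E_{S\times T}\,\E_{(h,h')\sim\prior^2}\, e^{m\mathcal{D}(\widehat p,p)}$. To obtain a Catoni-type statement I would take the convex function $\mathcal{D}_\alpha(\widehat p,p)=-\ln\!\big(1-(1-e^{-2\alpha})p\big)-2\alpha\,\widehat p$, i.e.\ set the Catoni constant $c=2\alpha$. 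The hard part is that here the underlying variable $Y_i$ is \emph{three-valued} rather than Bernoulli, so the exact identity $\E[e^{-cY}]=1-p(1-e^{-c})$ no longer holds; I would resolve this with the chord (convexity) inequality $e^{-cx}\le 1-x(1-e^{-c})$, valid for $x\in[0,1]$, which gives $\E[e^{-cY_i}]\le 1-p(1-e^{-c})$ and hence $\mathcal{I}(m)\le 1$, exactly as in the classical risk case of Theorem~\ref{thm:pacbayescatoni}.

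Finally, since $\des$ carries an absolute value, I would run the argument in both directions, bounding $\RDT-\RDS$ and $\RDS-\RDT$ separately, each at confidence $1-\tfrac\delta2$; the union bound yields the $\ln\tfrac2\delta$ term (and, taking the larger of the two empirical quantities, replaces the signed disagreements by $\des(S,T)$ and $\des(\DS,\DT)$). This produces $-\ln\!\big(1-(1-e^{-2\alpha})p\big)\le 2\alpha\,\widehat p + \tfrac1m\big(2\KL(\posterior\|\prior)+\ln\tfrac2\delta\big)$ with $p=\tfrac12(\des(\DS,\DT)+1)$ and $\widehat p=\tfrac12(\des(S,T)+1)$. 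Isolating $p$, using $1-e^{-x}\le x$ to linearize, and multiplying through by $2$ before subtracting $1$ to undo the affine change of variables then gives exactly the stated inequality. The only genuinely delicate point is the moment bound of the previous paragraph; everything after it is bookkeeping.
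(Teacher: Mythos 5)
Your proposal is correct and follows essentially the same route as the paper's own proof (Appendix~\ref{app:catoni}): rescale the signed disagreement to a $[0,1]$-valued loss on paired source--target examples, run Catoni's argument with $c=2\alpha$ (Markov at level $\delta/2$, change of measure with $\KL(\posterior^2\|\prior^2)=2\,\KL(\posterior\|\prior)$, Jensen on the convex $\Fcal(x)=-\ln[1-(1-e^{-2\alpha})x]$), treat the two signed directions separately and union-bound, then linearize with $1-e^{-x}\le x$ and undo the affine rescaling. Your chord inequality $e^{-cx}\le 1-x\,(1-e^{-c})$ for $x\in[0,1]$ is an inline equivalent of the paper's handling of the three-valued loss, which invokes Maurer's Bernoulli-comparison lemma and then computes the binomial moment exactly; the two devices yield the identical moment bound $\E\, e^{m(\Fcal(p)-2\alpha\widehat{p})}\le 1$.
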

\begin{proof} 
Deferred to Appendix \ref{app:catoni}.
\end{proof}
Similarly to the empirical risk bound of~\citet{catoni2007pac} shown by Theorem~\ref{thm:pacbayescatoni}, the above domain disagreement bound is consistent if one puts $\alpha = \frac{1}{2\sqrt{m}}$.
Indeed, it converges to $1 \times \left[\des(S,T) + 0 + 1\right] - 1$ as $m$ grows.

\medskip 
The last result of this section tackles the situation where  $m  \neq m'$, {\it i.e.}, the sizes of~$S$ and~$T$ are different.

\begin{theorem}\label{thm:bound_dis_rho_allester}
For any marginal distributions $\DS$ and $\DT$ over $X$,  any set of hypotheses  $\Hcal$, any prior distribution $\prior$ over $\Hcal$, any $\delta\in(0,1]$,  with a probability at least $1-\delta$ over the choice of $S\sim (\DS)^m$ and $T\sim (\DT)^{m'}$, for every $\posterior$ over $\Hcal$, we have
\begin{equation*}
\bigg|\,\des(\DS, \DT)  -  \des(S,T)\, \bigg|
\  \leq \  
 \sqrt{ \frac{ 2\,\KL(\posterior\|\prior) + \ln\!\frac{4\sqrt{m}}{\delta}}{2m}}
 + 
  \sqrt{ \frac{ 2\,\KL(\posterior\|\prior) + \ln\!\frac{4\sqrt{m'}}{\delta}}{2m'}}\,.
\end{equation*}
\end{theorem}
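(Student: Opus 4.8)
The plan is to reduce this unequal-size, two-sample statement to two independent applications of the single-sample McAllester bound (Theorem~\ref{thm:pacbayesallester}), one on the source side and one on the target side. First I would unfold the definition of the domain disagreement, writing $\des(\DS,\DT)=\big|\RDT(\GQ,\GQ)-\RDS(\GQ,\GQ)\big|$ and $\des(S,T)=\big|\RT(\GQ,\GQ)-\RS(\GQ,\GQ)\big|$, and then apply the reverse triangle inequality twice:
\begin{align*}
\Big|\des(\DS,\DT)-\des(S,T)\Big|
&\leq \Big|\big(\RDT(\GQ,\GQ)-\RDS(\GQ,\GQ)\big)-\big(\RT(\GQ,\GQ)-\RS(\GQ,\GQ)\big)\Big| \\
&\leq \Big|\RDT(\GQ,\GQ)-\RT(\GQ,\GQ)\Big| + \Big|\RDS(\GQ,\GQ)-\RS(\GQ,\GQ)\Big|\,.
\end{align*}
This isolates a pure target-side deviation and a pure source-side deviation, each depending on only one of the two samples, which is exactly what is needed to handle $m\neq m'$ (and what prevents us from using the single joint $\kl$-bound of Theorem~\ref{thm:bound_dis_kl}).

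The key observation is that the expected disagreement $\RDS(\GQ,\GQ)=\esp{(h,h')\sim\posterior^2}\RDS(h,h')$ is nothing but a Gibbs risk on the product hypothesis space $\Hcal\times\Hcal$: treating each pair $(h,h')$ as a single hypothesis equipped with the $[0,1]$-valued, label-free loss $\zoloss\big(h(\xb),h'(\xb)\big)$, the product posterior $\posterior\otimes\posterior$ plays the role of $\posterior$ and the product prior $\prior\otimes\prior$ plays the role of $\prior$. Since the Kullback-Leibler divergence is additive over product measures, $\KL(\posterior\otimes\posterior\,\|\,\prior\otimes\prior)=2\,\KL(\posterior\|\prior)$, which is precisely where the factor $2$ in front of $\KL(\posterior\|\prior)$ in the statement comes from.

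Then I would invoke Theorem~\ref{thm:pacbayesallester} on this product space separately for each sample. On the source side, with probability at least $1-\tfrac{\delta}{2}$ over $S\sim(\DS)^m$, the source deviation is at most $\sqrt{\tfrac{1}{2m}[2\KL(\posterior\|\prior)+\ln\tfrac{2\sqrt m}{\delta/2}]}=\sqrt{\tfrac{2\KL(\posterior\|\prior)+\ln(4\sqrt m/\delta)}{2m}}$; symmetrically, with probability at least $1-\tfrac{\delta}{2}$ over $T\sim(\DT)^{m'}$, the target deviation is at most $\sqrt{\tfrac{2\KL(\posterior\|\prior)+\ln(4\sqrt{m'}/\delta)}{2m'}}$. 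Because $S$ and $T$ are drawn independently, a union bound makes both events hold simultaneously with probability at least $1-\delta$; substituting these two estimates into the inequality of the first paragraph yields exactly the claimed bound.

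The only non-routine step is the product-space reinterpretation: one must verify that the $\posterior^2$-averaged disagreement is genuinely a Gibbs risk on $\Hcal\times\Hcal$, that Theorem~\ref{thm:pacbayesallester}—stated for the $\zo$ loss on labeled examples—transfers verbatim to this bounded, label-free loss, and that the complexity term doubles to $2\,\KL(\posterior\|\prior)$. Once this viewpoint is accepted, the remainder is an elementary reverse-triangle-inequality split, a union bound, and two substitutions $\delta\mapsto\delta/2$, with no further calculation required.
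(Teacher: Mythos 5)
Your proof is correct, and its skeleton is the same as the paper's: the paper's proof in Appendix~\ref{app:mcallester} also reduces the claim to the two one-sample deviations $\big|\RDS(\GQ,\GQ)-\RS(\GQ,\GQ)\big|$ and $\big|\RDT(\GQ,\GQ)-\RT(\GQ,\GQ)\big|$, each holding with confidence $1-\tfrac{\delta}{2}$, and recombines them by a union bound together with exactly your triangle-inequality splitting (it closes with the remark that $|a_1-b_1|\le c_1$ and $|a_2-b_2|\le c_2$ imply $|(a_1-a_2)-(b_1-b_2)|\le c_1+c_2$). Where you genuinely differ is in how each per-sample deviation is obtained. The paper re-derives it from first principles: Markov's inequality applied to $\esp{(h,h')\sim\prior^2} e^{2m(\RDS(h,h')-\RS(h,h'))^2}$, Pinsker's inequality to pass to the $\kl$ form, Maurer's lemma for the $2\sqrt{m}$ moment bound, then the change of measure inequality and Jensen's inequality, with Lemma~\ref{lem:2KL} supplying $\KL(\posterior^2\|\prior^2)=2\,\KL(\posterior\|\prior)$. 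You instead invoke Theorem~\ref{thm:pacbayesallester} as a black box on the product space $\Hcal\times\Hcal$ with product prior and posterior; your constant bookkeeping is right ($2\,\KL(\posterior\|\prior)$ from additivity, and $\ln\tfrac{2\sqrt{m}}{\delta/2}=\ln\tfrac{4\sqrt{m}}{\delta}$), and restricting the ``for every posterior'' guarantee of Theorem~\ref{thm:pacbayesallester} to product posteriors is legitimate since the bound holds simultaneously for all distributions on $\Hcal^2$. The one step you rightly flag as non-routine is that Theorem~\ref{thm:pacbayesallester} is stated for the labeled $\zo$ risk over $X\times Y$, whereas the disagreement is a label-free $[0,1]$-valued loss on unlabeled points; the transfer is sound, because the proof of Theorem~\ref{thm:pacbayesallester} (via Theorem~\ref{thm:pacbayesseeger} and Pinsker) only uses that the per-example losses are i.i.d.\ $[0,1]$ random variables, which is all that Maurer's lemma requires---and the paper's own appendices employ precisely this ``abstract classifier with abstract loss'' device to make such transfers explicit. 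In short, your modular route buys brevity, while the paper's direct derivation buys self-containedness: nothing in the paper's stated theorems literally licenses the black-box extension, so the paper redoes the argument inline, and that inline redo is exactly the McAllester bound on the product space that you cite.
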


\begin{proof}Deferred to Appendix \ref{app:mcallester}.
\end{proof}
 Note that Theorem~\ref{thm:bound_dis_rho_allester} is very similar to the result of Corollary~\ref{cor:bound_dis_kl_pinsker}. 
In fact, in the particular case $m=m'$, Theorem~\ref{thm:bound_dis_rho_allester} differs from Corollary~\ref{cor:bound_dis_kl_pinsker} only by the $4\sqrt{m}$ term inside the logarithm, instead of $2\sqrt{m}$.

\subsection{PAC-Bayesian Theorems for Domain Adaptation}
\label{sec:pacbayesdabound}

We now derive our main result in the following theorem: a domain adaptation bound relevant in a PAC-Bayesian setting. 

\subsubsection{A domain adaptation bound for the stochastic Gibbs classifier}

Theorem~\ref{thm:pacbayesdabound} below relies on the domain disagreement of Definition~\ref{def:disagreement}, and also on  \emph{expected  joint error} of Equation~\eqref{eq:eP}.

\begin{theorem}
\label{thm:pacbayesdabound}
Let ${\cal H}$ be a hypothesis class. We have 
\begin{align*}
\nonumber \forall \posterior&\mbox{ on }\Hcal,\  \RPT(G_\posterior)\ \leq \  \RPS(G_\posterior) +  \frac{1}{2}\des(\DS,\DT) + \lambda_\posterior\,, 
\end{align*}
where $\lambda_\rho$  
is the deviation between the expected joint errors of $G_\posterior$ on the target and source domains:
 \begin{eqnarray} \label{eq:lambda_rho}
 \nonumber
 \lambda_\posterior\!&\eqdef& \left|\esp{(h,h')\sim\posterior^2}\!\!\left[\esp{(\xbf,y) \sim \PT}\!\!\!\! \zoloss\big( h(\xb), y \big)\zoloss\big( h'(\xb), y \big) -\!\! \esp{(\xbf,y) \sim \PS}\!\!\!\! \zoloss\big( h(\xb), y \big) \zoloss\big( h'(\xb), y \big)\right]\right|\\
 &=&
 \Big|\, \ePT(G_\posterior, G_\posterior) - \ePS(G_\posterior,G_\posterior) \,\Big|\,.
\end{eqnarray}
\end{theorem}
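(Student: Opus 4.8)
The plan is to establish a pointwise decomposition of the Gibbs risk into half the expected disagreement plus the expected joint error, valid on each domain separately, and then to compare the two domains term by term.

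First I would fix a pair $(h,h')$ and an example $(\xbf,y)$, and note that since $Y=\{-1,+1\}$ the two hypotheses disagree on $\xbf$ exactly when precisely one of them errs. Writing $a=\zoloss(h(\xbf),y)$ and $b=\zoloss(h'(\xbf),y)$, this gives the elementary identity $\zoloss(h(\xbf),h'(\xbf)) = a+b-2ab$, which one checks on the three cases (both err, both correct, exactly one errs). Taking the expectation over $(h,h')\sim\posterior^2$ and $(\xbf,y)\sim\PS$, the terms $\E[a]$ and $\E[b]$ each collapse to $\RPS(\GQ)$ (the spare hypothesis integrates out), while $\E[ab]$ is precisely $\ePS(\GQ,\GQ)$ by Equation~\eqref{eq:eP}. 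This yields the key identity
\[
\RDS(\GQ,\GQ)\,=\,2\,\RPS(\GQ)-2\,\ePS(\GQ,\GQ),
\]
equivalently $\RPS(\GQ)=\tfrac12\RDS(\GQ,\GQ)+\ePS(\GQ,\GQ)$, and the identical computation on $\PT$ gives $\RPT(\GQ)=\tfrac12\RDT(\GQ,\GQ)+\ePT(\GQ,\GQ)$.

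Starting from the target identity, I would add and subtract the matching source quantities to write
\[
\RPT(\GQ)=\Big[\tfrac12\RDS(\GQ,\GQ)+\ePS(\GQ,\GQ)\Big]+\tfrac12\big[\RDT(\GQ,\GQ)-\RDS(\GQ,\GQ)\big]+\big[\ePT(\GQ,\GQ)-\ePS(\GQ,\GQ)\big].
\]
The first bracket is exactly $\RPS(\GQ)$; the second is at most $\tfrac12\big|\RDT(\GQ,\GQ)-\RDS(\GQ,\GQ)\big|=\tfrac12\des(\DS,\DT)$ by Definition~\ref{def:disagreement}; and the third is at most $\big|\ePT(\GQ,\GQ)-\ePS(\GQ,\GQ)\big|=\lambda_\posterior$ by Equation~\eqref{eq:lambda_rho}. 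Chaining these three bounds delivers the claimed inequality.

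The argument is an exact identity followed by two triangle-inequality relaxations, so no delicate estimation arises; the one step requiring care is the pointwise relation $\zoloss(h(\xbf),h'(\xbf))=a+b-2ab$, which genuinely exploits the binary label set and is exactly what locks the disagreement, the Gibbs risk, and the joint error together algebraically. Everything else is bookkeeping, and notably the statement is purely distributional, so none of the sample-based PAC-Bayesian machinery of Section~\ref{section:PB-dis} is invoked here.
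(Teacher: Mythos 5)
Your proof is correct and matches the paper's own argument essentially step for step: the pointwise identity $\zoloss(h(\xbf),h'(\xbf))=a+b-2ab$ is exactly the algebraic fact behind the paper's Equation~\eqref{eq:rde}, $R_P(\GQ)=\tfrac{1}{2}R_D(\GQ,\GQ)+\eP(\GQ,\GQ)$, and your add-and-subtract step followed by two triangle inequalities is the same final bound the paper performs on $\RPT(\GQ)-\RPS(\GQ)$. Nothing is missing, and your observation that the result is purely distributional (no sample-based machinery needed) is likewise consistent with the paper, which only invokes the PAC-Bayesian bounds later to make this theorem empirical.
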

\smallskip

\begin{proof}
First, notice that for any distribution $P$ on $X\times Y$ (and corresponding marginal distribution $D$ on $X$), we have
\begin{equation} \label{eq:rde}
R_P(G_\posterior) \ = \ \frac{1}{2} \RD(G_\posterior,\GQ) + \eP(G_\posterior,G_\posterior)\,,
\end{equation}
as
\begin{eqnarray*}
2\, R_P(G_\posterior)
&=& 
\esp{(h,h') \sim\posterior^2}\esp{(\xbf,y) \sim P} 
\Big[ \zoloss\big( h(\xb), y \big) + \zoloss\big(h'(\xb), y \big) \Big] \\
&=& 
\esp{(h,h') \sim\posterior^2}\esp{(\xbf,y) \sim P} 
\Big[ 1\times\zoloss\big( h(\xb), h'(\xb) \big) + 2\times\zoloss\big(h(\xb), y \big)\, \zoloss\big(h'(\xb), y \big) \Big]\\[1.5mm]
&=& 
\RD(G_\posterior,\GQ) + 2\times \eP(G_\posterior,G_\posterior)\,.
\end{eqnarray*}
Therefore,
\begin{eqnarray*}
\nonumber \RPT(G_\posterior)-\RPS(G_\posterior)\!\!
& = &
\nonumber \frac{1}{2} \Big(\RDT(G_\posterior,\GQ)-\RDS(G_\posterior,\GQ)\Big) \!+\!\Big(\ePT(G_\posterior,G_\posterior)-\ePS(G_\posterior,G_\posterior)\Big) \\
&\leq&
\nonumber \frac{1}{2} \Big|\RDT(G_\posterior,\GQ)-\RDS(G_\posterior,\GQ)\Big| +\Big|\ePT(G_\posterior, G_\posterior)-\ePS(G_\posterior, G_\posterior)\Big|  \\
&=&
\frac{1}{2} \des(\DS,\DT)  + \lambda_\posterior \,.
\end{eqnarray*}
\end{proof}

Our bound is, in general, incomparable with the ones of Theorems~\ref{theo:BenDavid} and~\ref{theo:Mansour}.
It can be seen as a trade-off  between different quantities.
The terms $\RPS(G_\posterior)$ and  $\des(\DS,\DT)$ are similar to the first two terms of the domain adaptation bound of \citet{BenDavid-MLJ2010} (Equation \eqref{eq:da}):   $\RPS(G_\posterior)$ is the $\posterior$-average risk over $\Hcal$ on the source domain,  and $\des(\DT,DS)$ measures the $\posterior$-average disagreement between the marginals but is specific to the current $\posterior$.
The other term $\lambda_\posterior$ measures the deviation between the expected joint target and source errors of $\GQ$.
According to this theory, a good domain adaptation is possible if this deviation is low. 
However, since we suppose that we do not have any label in the target sample, we cannot control or estimate it.
In practice, we suppose that $\lambda_\posterior$ is low and we neglect
it. In other words, we assume that the labeling information between
the two domains is related and that considering only the marginal
agreement and the source labels is sufficient to  find a good majority
vote. 
Another important point comes from the fact that this bound is not
degenerated when the source and target distributions are the same or
close, see Section \ref{sec:comparaison} for a discussion on this point.

In the next section, we provide three PAC-Bayesian theorems that justifies the empirical optimization of the bound of Theorem~\ref{thm:pacbayesdabound}.

\subsubsection{PAC-Bayesian theorems for domain adaptation}
\label{sec:3PB}
Finally, our Theorem~\ref{thm:pacbayesdabound} leads to a PAC-Bayesian bound based on both the empirical source error of the Gibbs classifier  and the empirical domain disagreement pseudometric estimated on a source and target samples.

From the preceding ``Seeger's type'' results, one can then obtain the following PAC-Bayesian domain adaptation bound.
\begin{theorem}
 \label{theo:pacbayesdabound_bis_seeger}
 For any domains $\PS$ and $\PT$ (respectively with marginals $\DS$ and $\DT$) over $X \times  Y$, any set of hypotheses  $\Hcal$,  any prior distribution $\prior$ over $\Hcal$,  and any $\delta \in (0,1]$, with a probability at least $1-\delta$ over the choice of $S \times  T  \sim (\PS \times  D_T)^m $, we have
\begin{align*}
\RPT(G_\posterior) 
\ \leq  \ \sup \mathcal{R}_\posterior  + \tfrac{1}{2} \sup \mathcal{D}_\posterior  + \lambda_\posterior\,,
\end{align*}
where $\lambda_\rho$ is defined by Equation~\eqref{eq:lambda_rho}, 
and
\begin{align*}
\mathcal{R}_\posterior   \ \eqdef\  &\left\lbrace r  :  
\kl\big(\RS(G_{\posterior})\big\| r\big) \leq   \tfrac{1}{m} \left[\KL(\posterior\|\prior)+ \ln \tfrac{4\sqrt{m}}{\delta}\right]
 \right\rbrace ,\\
 \mathcal{D}_\posterior  \ \eqdef \  & \left\lbrace d  :  
 \kl\big(\tfrac{\des(S,T) +1}{2}\big\| \tfrac{d+1}{2}\big)  \leq    \tfrac{1}{m}  \left[2\,\KL(\posterior\|\prior) +  \ln \tfrac{4\sqrt{m}}{\delta}\right]
   \right\rbrace  .
\end{align*}
\end{theorem}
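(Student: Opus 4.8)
The plan is to bound $\RPT(G_\posterior)$ by combining the \emph{deterministic} inequality of Theorem~\ref{thm:pacbayesdabound}, namely
$$\RPT(G_\posterior) \ \leq\ \RPS(G_\posterior) + \tfrac{1}{2}\des(\DS,\DT) + \lambda_\posterior\,,$$
with the two ``Seeger's type'' probabilistic guarantees already established: Theorem~\ref{thm:pacbayesseeger} controls the unobservable Gibbs risk $\RPS(G_\posterior)$ through the empirical risk $\RS(G_\posterior)$, while Theorem~\ref{thm:bound_dis_kl} controls the unobservable domain disagreement $\des(\DS,\DT)$ through its empirical counterpart $\des(S,T)$. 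Since each of these two tools is a statement holding only with high probability, the backbone of the argument is a union bound that splits the failure probability $\delta$ evenly between them.

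Concretely, I would first apply Theorem~\ref{thm:pacbayesseeger} with confidence parameter $\delta/2$, so that with probability at least $1-\delta/2$ over $S\sim(\PS)^m$, simultaneously for every $\posterior$,
$$\kl\big(\RS(G_\posterior)\,\big\|\,\RPS(G_\posterior)\big)\ \leq\ \tfrac{1}{m}\big[\KL(\posterior\|\prior) + \ln\tfrac{4\sqrt{m}}{\delta}\big]\,,$$
where the $2\sqrt{m}$ of the original statement turns into $4\sqrt{m}$ precisely because $\delta$ has been replaced by $\delta/2$. This inequality says exactly that $\RPS(G_\posterior)$ is an admissible value $r$ in the definition of $\mathcal{R}_\posterior$, whence $\RPS(G_\posterior)\leq\sup\mathcal{R}_\posterior$. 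Symmetrically, applying Theorem~\ref{thm:bound_dis_kl} with confidence $\delta/2$ yields, with probability at least $1-\delta/2$ over $S\times T$, that $\tfrac{\des(\DS,\DT)+1}{2}$ satisfies the $\kl$-constraint defining $\mathcal{D}_\posterior$, so that $\des(\DS,\DT)\leq\sup\mathcal{D}_\posterior$.

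I would then combine the two events by a union bound, so that both inequalities hold simultaneously with probability at least $1-\delta$ over the joint draw $S\times T\sim(\PS\times D_T)^m$ (both events are measurable on this product space, the first depending on $S$ alone and the second on $(S,T)$). On this event, substituting the two bounds into the inequality of Theorem~\ref{thm:pacbayesdabound} gives
$$\RPT(G_\posterior)\ \leq\ \sup\mathcal{R}_\posterior + \tfrac{1}{2}\sup\mathcal{D}_\posterior + \lambda_\posterior\,,$$
which is the claim. The only point that deserves a word of care is the set-theoretic reading of $\sup\mathcal{R}_\posterior$ and $\sup\mathcal{D}_\posterior$: because $r\mapsto\kl(\RS(G_\posterior)\|r)$ and $d\mapsto\kl(\tfrac{\des(S,T)+1}{2}\|\tfrac{d+1}{2})$ are convex and vanish at the empirical values, each constraint set is an interval containing the empirical quantity, and the probabilistic bounds place the true quantities $\RPS(G_\posterior)$ and $\des(\DS,\DT)$ inside these intervals; taking the supremum then selects the upper endpoint and legitimately dominates the true value. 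No single step is a genuine obstacle — the whole difficulty, such as it is, reduces to keeping the $\delta/2$ bookkeeping consistent with the $\ln\tfrac{4\sqrt{m}}{\delta}$ terms appearing in the definitions of $\mathcal{R}_\posterior$ and $\mathcal{D}_\posterior$.
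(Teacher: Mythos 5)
Your proof is correct and takes essentially the same route as the paper's own (one-sentence) proof: insert Theorems~\ref{thm:pacbayesseeger} and~\ref{thm:bound_dis_kl}, each instantiated with confidence parameter $\delta/2$, into the deterministic bound of Theorem~\ref{thm:pacbayesdabound}, combining the two events by a union bound. Your elaboration of the $\delta/2$ bookkeeping (which produces the $\ln\tfrac{4\sqrt{m}}{\delta}$ terms) and of why membership of the true quantities in $\mathcal{R}_\posterior$ and $\mathcal{D}_\posterior$ gives $\RPS(G_\posterior)\leq\sup\mathcal{R}_\posterior$ and $\des(\DS,\DT)\leq\sup\mathcal{D}_\posterior$ merely makes explicit what the paper leaves implicit.
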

\begin{proof}
The result is obtained by inserting Theorems~\ref{thm:pacbayesseeger} and~\ref{thm:bound_dis_kl} (with $\delta := \frac{\delta}{2}$) in Theorem~\ref{thm:pacbayesdabound}.
\end{proof}

The following bound is based on Catoni's approach and corresponds to the one from which we derive---in Section~\ref{sec:pbda}---our algorithm for PAC-Bayesian domain adaptation.
\begin{theorem}
 \label{theo:pacbayesdabound_catoni_bis}
 For any domains $\PS$ and $\PT$ (resp. with marginals $\DS$ and $\DT$) over $X \times  Y$, any set of hypotheses  $\Hcal$,  any prior distribution $\prior$ over $\Hcal$, any $\delta \in (0,1]$, any real numbers $\alpha > 0$ and $c > 0$,  with a probability at least $1-\delta$ over the choice of $S \times  T  \sim (\PS \times  D_T)^m $, for every posterior distribution $\posterior$ on $\Hcal$, we have
 \begin{align*} 
\RPT(G_\posterior)  
\ \leq\  c'\, \RS(G_\posterior)  +  \alpha'\, \tfrac{1}{2} \des(S,T) + 
  \left( \frac{c'}{c} + \frac{\alpha'}{\alpha} \right)  \frac{\KL(\posterior\|\prior)+\ln\frac{3}{\delta}}{m}
   + \lambda_\posterior + \tfrac{1}{2} (\alpha'\!-\! 1)
   \,,
 \end{align*}
where 
$\lambda_\rho$ is defined by Equation~\eqref{eq:lambda_rho}, 
and where \,
$\displaystyle c'\eqdef\frac{c}{1 -e^{-c}}$, \, and \, $\displaystyle \alpha'\eqdef \frac{2\alpha}{1 -e^{-2\alpha}}$\,.
\end{theorem}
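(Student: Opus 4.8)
The plan is to derive this ``Catoni's type'' domain adaptation bound by plugging the two probabilistic Catoni-style estimates---Theorem~\ref{thm:pacbayescatoni} for the source risk and Theorem~\ref{thm:bound_dis_rho} for the domain disagreement---into the deterministic decomposition of Theorem~\ref{thm:pacbayesdabound}, and then combining the two confidence statements with a union bound. Concretely, I start from
$$\RPT(G_\posterior)\ \leq\ \RPS(G_\posterior) + \tfrac12\des(\DS,\DT) + \lambda_\posterior\,,$$
which holds \emph{deterministically} for every $\posterior$ by Theorem~\ref{thm:pacbayesdabound}, and I upper-bound its first two (unobservable) terms by their empirical Catoni surrogates.

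First I would apply Theorem~\ref{thm:pacbayescatoni} with confidence parameter $\delta_1$ and hyperparameter $c$ to obtain, with probability at least $1-\delta_1$ over $S$ and for every $\posterior$,
$$\RPS(G_\posterior)\ \leq\ c'\,\RS(G_\posterior) + \frac{c'}{c}\cdot\frac{\KL(\posterior\|\prior)+\ln\frac1{\delta_1}}{m}\,,$$
where $c'=\frac{c}{1-e^{-c}}$. Next I would apply Theorem~\ref{thm:bound_dis_rho} with confidence $\delta_2$ and hyperparameter $\alpha$, multiply the resulting inequality by $\tfrac12$, and distribute $\alpha'=\frac{2\alpha}{1-e^{-2\alpha}}$, which gives, with probability at least $1-\delta_2$ over $S\times T$ and for every $\posterior$,
$$\tfrac12\des(\DS,\DT)\ \leq\ \frac{\alpha'}{2}\des(S,T) + \frac{\alpha'}{\alpha}\cdot\frac{\KL(\posterior\|\prior)+\tfrac12\ln\frac2{\delta_2}}{m} + \tfrac12(\alpha'-1)\,.$$
Since the event controlling $\RPS(G_\posterior)$ depends only on $S$ while the one controlling $\des(\DS,\DT)$ depends on $S\times T$, a union bound shows that both inequalities hold simultaneously for every $\posterior$ with probability at least $1-(\delta_1+\delta_2)$ over the draw of $S\times T$.

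Finally I would set $\delta_1+\delta_2=\delta$ (taking $\delta_1=\tfrac\delta3$ and $\delta_2=\tfrac{2\delta}3$) and substitute the two estimates into the decomposition. Collecting the $\KL$ contributions produces the two coefficients $\frac{c'}{c}$ and $\frac{\alpha'}{\alpha}$, while $\lambda_\posterior$ and the residual $\tfrac12(\alpha'-1)$ are carried through unchanged, reproducing the stated right-hand side up to the logarithmic terms. The only delicate point---and the step I expect to be the main obstacle---is the bookkeeping on those logarithmic terms: with this split the source term contributes exactly $\ln\frac3\delta$, whereas the disagreement term contributes $\tfrac12\ln\frac2{\delta_2}=\tfrac12\ln\frac3\delta$, and since $\frac3\delta\geq1$ one has $\tfrac12\ln\frac3\delta\leq\ln\frac3\delta$, so both can be uniformly upper-bounded by the common $\ln\frac3\delta$ that appears inside the factor $\big(\frac{c'}{c}+\frac{\alpha'}{\alpha}\big)\frac{\KL(\posterior\|\prior)+\ln\frac3\delta}{m}$. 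Everything else reduces to routine algebra in $c'$ and $\alpha'$.
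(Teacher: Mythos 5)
Your proof is correct and takes essentially the same route as the paper's: you plug Theorem~\ref{thm:pacbayescatoni} (with confidence $\delta/3$) and Theorem~\ref{thm:bound_dis_rho} (with confidence $2\delta/3$) into the deterministic decomposition of Theorem~\ref{thm:pacbayesdabound} and combine the two events by a union bound. The ``delicate'' logarithmic bookkeeping you flag, namely $\tfrac12\ln\tfrac{3}{\delta}\leq\ln\tfrac{3}{\delta}$, is exactly the paper's inequality $2\,\KL(\posterior\|\prior)+\ln\tfrac{3}{\delta}<2\left(\KL(\posterior\|\prior)+\ln\tfrac{3}{\delta}\right)$ in disguise.
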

\begin{proof}
In Theorem~\ref{thm:pacbayesdabound}, we replace $\RS(G_\posterior)$ and $\des(S,T)$ by their upper bound, obtained from Theorem~\ref{thm:pacbayescatoni} and Theorem~\ref{thm:bound_dis_rho}, with $\delta$ chosen respectively as $\frac{\delta}{3}$ and $\frac{2\delta}{3}$. In the latter case, we use 
\begin{align*}2\,\KL(\posterior\|\prior) + \ln\tfrac{2}{2\delta/3}
\ &=\ 2\,\KL(\posterior\|\prior) +\ln\tfrac{3}{\delta} \\
\ &<\  2\left( \KL(\posterior\|\prior) +\ln\tfrac{3}{\delta} \right)\,.
\end{align*}
\end{proof}

We now present a result based on the McAllester bound, which allows us to easily deal with different sizes of samples.
\begin{theorem}
 \label{theo:pacbayesdabound_bis}
 For any domains $\PS$ and $\PT$ (respectively with marginals $\DS$ and $\DT$) over $X\times Y$, and for any set $\Hcal$ of hypotheses, for any prior distribution $\prior$ over $\Hcal$, any $\delta\in(0,1]$,  with a probability at least $1-\delta$ over the choice of $S_1\sim (\PS)^{m_1}$, $S_2\sim (D_S)^{m_2}$, and $T\sim (D_T)^{m'}$, for every $\posterior$ over $\Hcal$, we have
\begin{align*}
\RPT(G_\posterior) 
\, \leq \   
& R_{S_1}(G_\posterior)  + \tfrac{1}{2}\des(S_2,T) + \lambda_\posterior \\
&+ 
\sqrt{\frac{\KL(\posterior\|\prior)  +  \ln  \tfrac{4\sqrt{m_1}}{\delta}}{2m_1}}
+ 
 \sqrt{\frac{2\,\KL(\posterior\|\prior)  +  \ln  \tfrac{8\sqrt{m_2}}{\delta}}{8m_2}}
+ 
\sqrt{\frac{2\,\KL(\posterior\|\prior)  +  \ln  \tfrac{8\sqrt{m'}}{\delta}}{8m'}}\,,
       \end{align*}
where 
$\lambda_\rho$ is defined by Equation~\eqref{eq:lambda_rho}.
\end{theorem}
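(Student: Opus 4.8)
The plan is to read Theorem~\ref{theo:pacbayesdabound_bis} as an immediate \emph{empirical} consequence of the deterministic inequality of Theorem~\ref{thm:pacbayesdabound}, namely
\begin{equation*}
\RPT(G_\posterior)\ \leq\ \RPS(G_\posterior) + \tfrac{1}{2}\des(\DS,\DT) + \lambda_\posterior\,,
\end{equation*}
which already isolates $\lambda_\posterior$ and leaves only two population quantities, $\RPS(G_\posterior)$ and $\des(\DS,\DT)$, to be replaced by their sample estimates plus deviation terms. Since we are given three independently drawn samples $S_1$, $S_2$ and $T$, the idea is to estimate the source risk on the labeled sample $S_1$ and the domain disagreement on the unlabeled pair $(S_2,T)$, each through a McAllester-style concentration result, and then to merge the two high-probability guarantees by a union bound.

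Concretely, I would first invoke Theorem~\ref{thm:pacbayesallester} on $S_1\sim(\PS)^{m_1}$, run with confidence parameter $\tfrac{\delta}{2}$, to obtain, simultaneously for all $\posterior$,
\begin{equation*}
\RPS(G_\posterior)\ \leq\ R_{S_1}(G_\posterior) + \sqrt{\frac{\KL(\posterior\|\prior)+\ln\frac{4\sqrt{m_1}}{\delta}}{2m_1}}\,,
\end{equation*}
where $\ln\frac{2\sqrt{m_1}}{\delta/2}=\ln\frac{4\sqrt{m_1}}{\delta}$ produces the first logarithmic term. Next I would apply Theorem~\ref{thm:bound_dis_rho_allester}, which is precisely tailored to unequal sample sizes, to $(S_2,T)$ with $S_2\sim(\DS)^{m_2}$ and $T\sim(\DT)^{m'}$, again at confidence $\tfrac{\delta}{2}$, giving
\begin{equation*}
\des(\DS,\DT)\ \leq\ \des(S_2,T) + \sqrt{\frac{2\,\KL(\posterior\|\prior)+\ln\frac{8\sqrt{m_2}}{\delta}}{2m_2}} + \sqrt{\frac{2\,\KL(\posterior\|\prior)+\ln\frac{8\sqrt{m'}}{\delta}}{2m'}}\,.
\end{equation*}
A union bound over these two events (each failing with probability at most $\tfrac{\delta}{2}$) makes both inequalities hold simultaneously with probability at least $1-\delta$.

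Substituting the two bounds into the deterministic inequality then yields the claim, provided the constants are tracked carefully. The only delicate point is the factor $\tfrac12$ multiplying $\des(\DS,\DT)$: pushing it inside each deviation term replaces $2m$ by $8m$ under the root, since $\tfrac12\sqrt{\tfrac{X}{2m}}=\sqrt{\tfrac{X}{8m}}$, which is exactly how the denominators $8m_2$ and $8m'$ arise in the statement. There is no genuine obstacle here; the \emph{hard part} is purely cosmetic bookkeeping, namely (i) splitting $\delta$ as $\tfrac\delta2+\tfrac\delta2$ and checking the logarithmic constants match, and (ii) absorbing the $\tfrac12$ into the square roots. The independence of $S_1$, $S_2$ and $T$ is used only to make each concentration statement a valid instance of its respective theorem; the final combination requires nothing beyond the union bound.
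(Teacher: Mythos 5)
Your proposal is correct and coincides with the paper's own proof, which likewise inserts Theorem~\ref{thm:pacbayesallester} and Theorem~\ref{thm:bound_dis_rho_allester} (each instantiated with $\delta := \frac{\delta}{2}$) into the deterministic bound of Theorem~\ref{thm:pacbayesdabound} and combines them by a union bound. Your constant tracking is also right: $\ln\frac{2\sqrt{m_1}}{\delta/2}=\ln\frac{4\sqrt{m_1}}{\delta}$, $\ln\frac{4\sqrt{m_2}}{\delta/2}=\ln\frac{8\sqrt{m_2}}{\delta}$, and absorbing the factor $\tfrac12$ via $\tfrac12\sqrt{X/(2m)}=\sqrt{X/(8m)}$ yields exactly the stated denominators $8m_2$ and $8m'$.
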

\begin{proof}
We insert Theorems~\ref{thm:pacbayesallester} and  \ref{thm:bound_dis_rho_allester} (with $\delta := \frac{\delta}{2}$) in Theorem~\ref{thm:pacbayesdabound}.
\end{proof}

Under the assumption that the domains are somehow related in terms of labeling agreement on $\PS$ and $\PT$ (for every distribution $\posterior$ over $\Hcal$), {\it i.e.}, a low  $\des(\DS,\DT)$ implies a negligible $\lambda_\posterior$, 
a natural solution for a PAC-Bayesian domain adaptation algorithm without target label is to minimize the bound of Theorem~\ref{theo:pacbayesdabound_catoni_bis} by disregarding 
$\lambda_\posterior$.
Notice that a major advantage of our domain adaptation bound is that we can jointly optimize the risk and the divergence with a theoretical justification.


\section{PAC-Bayesian Domain Adaptation Learning of Linear Classifiers} 
\label{sec:dapbgd}
\label{sec:pbda}

In this section,  we design a learning algorithm for domain adaptation inspired by the PAC-Bayesian learning algorithm of~\citet{germain2009pac}. That is, we adopt the specialization of the PAC-Bayesian theory to linear classifiers described in Section~\ref{sec:pbgd}. 
Note that the code of our algorithm is available on-line.\footnote{See \mbox{\url{http://graal.ift.ulaval.ca/pbda}}.}

\subsection{Minimizing the PAC-Bayesian Domain Adaptation Bound}

Let us consider a prior $\prior_\mathbf{0}$ and a posterior $\posterior_\wb$ that are spherical Gaussian distributions over a space of linear classifiers, exactly as defined in Section~\ref{sec:pbgd}.

Given a source sample $S = \{(\xb^s_i, y^s_i)\}_{i=1}^m$ and a target sample $T = \{(\xb^t_i)\}_{i=1}^m$, 
we focus on the minimization of the bound given by Theorem~\ref{theo:pacbayesdabound_catoni_bis}. We work under the assumption that the term $\lambda_{\posterior_\wb}$ of the bound is negligible.
Thus, the posterior distribution $\posterior_\wb$ that minimizes the bound on $\RT(G_{\posterior_\wb})$ is the same that minimizes
\begin{equation} \label{eq:probpbda1}
C\,m\, \RS(G_{\posterior_\wb})  +  A \,m\, \desw(S,T) +  \KL(\posterior_\wb \| \prior_\mathbf{0})\,.
\end{equation}
The values $A>0$ and $C>0$ are hyperparameters of the algorithm. Note that the constants $\alpha$ and $c$ of Theorem~\ref{theo:pacbayesdabound_catoni_bis} can be recovered from any $A$ and $C$. 

\subsubsection{Domain Disagreement of Linear Classifiers}
\label{sec:pbda_dislinear}

We know from Equation~\eqref{eq:prob_pbgd_primal} how to compute the terms 
$\RS(G_{\posterior_\wb})$ and $\KL(\posterior_\wb \| \prior_\mathbf{0})$ of Equation~\eqref{eq:probpbda1}.
Let us now derive the value of $\desw(S,T)$,  \emph{i.e.}, the empirical domain disagreement between $S$ and $T$ of a distribution $\posterior_\wb$ over linear classifiers.

\smallskip
\noindent
First, for any marginal $D$, we obtain
\begin{align*}
R_D(G_{\posterior_\wb}, G_{\posterior_\wb}) \ 
=&\  \esp{\xbf\sim D}\  \esp{(h,h')\sim \posterior_\wb^2} \zoloss\big( h(\xb), h'(\xb) \big) \\
=&\  \esp{\xbf\sim D}\  \esp{(h,h')\sim \posterior_\wb^2} \I [h(\xb)\neq h'(\xb)]\\
=& \ \esp{\xbf\sim D}\  \esp{(h,h')\sim \posterior_\wb^2} 
\Big(\I [h(\xb)=1] \, \I [h'(\xb)=-1] + \I [h(\xb)=-1] \, \I [h'(\xb)=1]\Big)\\
=& \ 2\  \esp{\xbf\sim D}\  \esp{(h,h')\sim \posterior_\wb^2} \I [h(\xb)=1] \, \I [h'(\xb)=-1]\\
=& \ 2 \ \esp{\xbf\sim D}\  \esp{h\sim \posterior_\wb} \I [h(\xb)=1] \,  \esp{h'\sim \posterior_\wb}\I [h'(\xb)=-1]\\
=& \ 2\  \esp{\xbf\sim D} \Phi \left( \frac{\wb \cdot \xb}{\|\xb\|}  \right) \  \Phi \left(  - \frac{\wb \cdot \xb}{\|\xb\|}  \right). 
\end{align*}
Thus,
\begin{eqnarray*}
\desw(S,T)  &=& \Big| \,  \RS(G_{\posterior_\wb},G_{\posterior_\wb})-      \RT(G_{\posterior_\wb},G_{\posterior_\wb})\,\Big| \\
&=& 
\left| \frac{1}{m} \sum_{i=1}^m \Phidis \left(  \frac{\wb \cdot \xb^s_i}{\|\xb^s_i\|}  \right)    -  \frac{1}{m}\sum_{i=1}^m  \Phidis \left(  \frac{\wb \cdot \xb^t_i}{\|\xb^t_i\|}  \right)  \right|,
\end{eqnarray*}
where
\begin{equation}
\Phidis(a)\ \eqdef \ 2\,\Phi(a)\,\Phi(-a)\,.
\end{equation}

\subsubsection{Objective Function and Gradient}

From the results of Sections~\ref{sec:pbgd_objective} and~\ref{sec:pbda_dislinear}, we obtain that Equation~\eqref{eq:probpbda1} equals to
\begin{equation*} \label{eq:probpbda2}
C \sum_{i=1}^m  \Phi \left(  y^s_i \frac{\wb \cdot \xb^s_i}{\|\xb^s_i\|}  \right)   +  
A  \left| \sum_{i=1}^m  \left[ \Phidis \left(  \frac{\wb \cdot \xb^s_i}{\|\xb^s_i\|}  \right)    -  \Phidis \left(  \frac{\wb \cdot \xb^t_i}{\|\xb^t_i\|}  \right) \right] \right|
  +   \frac{1}{2}\|\wb\|^2\,,
\end{equation*}
\noindent
which is highly non-convex.
To make the optimization problem more tractable, we replace the loss function $\Phi(\cdot)$ by its convex relaxation $\Phic(\cdot)$ (as in Section~\ref{section:pbgd3_convex}) and minimize the resulting cost function by gradient descent. 
Even if this optimization task is still not convex ($\Phidis(\cdot)$ is quasiconcave), our empirical study shows no need to perform many restarts to find a suitable solution.\footnote{We observe empirically that a good strategy is to first find the vector $\wb$ minimizing the convex problem of {\small PBGD3} described in Section~\ref{section:pbgd3_convex}, and then use this $\wb$ as a starting point for the gradient descent of \PBDA.}  

\smallskip
We name this domain adaptation algorithm \PBDA.
To sum up, given a source sample $S = \{(\xb^s_i, y^s_i)\}_{i=1}^m$, a target sample $T = \{(\xb^t_i)\}_{i=1}^m$, and hyperparameters $A$ and $C$, the algorithm \PBDA performs gradient descent to minimize the following objective function:
\begin{equation} \label{eq:probpbda3}
G(\wb) = C \sum_{i=1}^m  \Phic \left(  y^s_i \frac{\wb \cdot \xb^s_i}{\|\xb^s_i\|}  \right)   +  
A  \left| \sum_{i=1}^m \left[ \Phidis \left(  \frac{\wb \cdot \xb^s_i}{\|\xb^s_i\|}  \right)    -  \Phidis \left(  \frac{\wb \cdot \xb^t_i}{\|\xb^t_i\|}  \right)  \right]  \right|
  +   \frac{1}{2}\|\wb\|^2\,,
\end{equation}
where
\begin{eqnarray*}
\Phi(a) & \eqdef &  \frac{1}{2}  \left[1 - \Erf\left( \frac{a}{\sqrt{2}} \right) \right]\,,\\
\Phic(a) & \eqdef & 
\max \left\{\Phi(a),\, \frac{1}{2} - \frac{a}{\sqrt{2\pii}} \right\}\,,\\
\Phidis(a)& \eqdef & 2\times\Phi(a)\times\Phi(-a) \,,
\end{eqnarray*}
with $\Erf(\cdot)$  the Gauss error function defined in Equation~\eqref{eq:erf}.
Figure~\ref{fig:phi3} illustrates these three functions.
\begin{figure} 
\centering
\includegraphics[width=0.7\columnwidth]{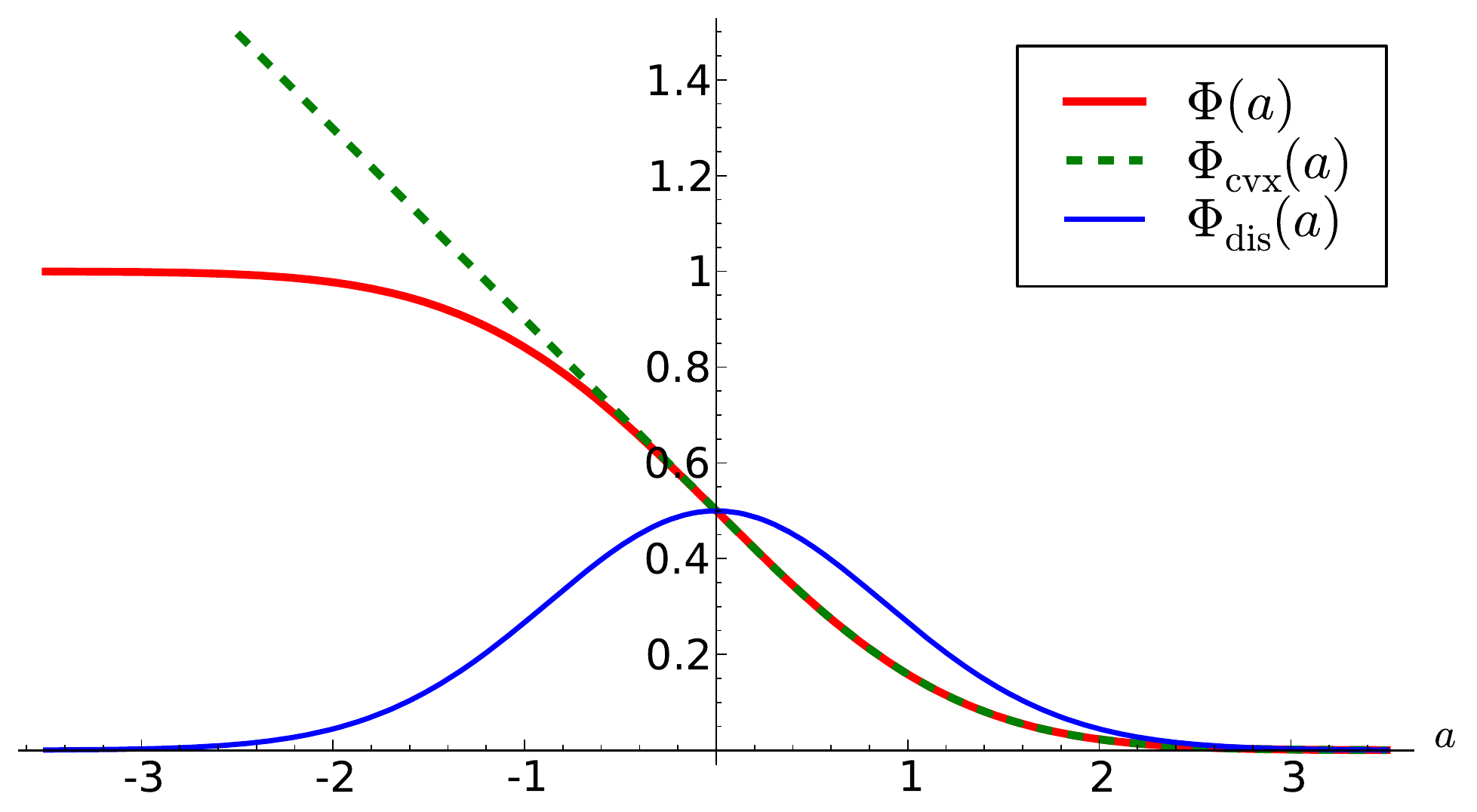}
\caption{Behavior of functions $\Phi(\cdot)$, $\Phic(\cdot)$ and $\Phidis(\cdot)$.}
\label{fig:phi3} 
\end{figure}

\smallskip
\noindent
The gradient $\nabla G (\wb)$ of the Equation~\eqref{eq:probpbda3} is then given by
\begin{align*}
\nabla G (\wb)\ =\ &C \sum_{i=1}^m 
\Phic' \LP\frac{y^s_i\wb\cdot\xb^s_i}{\|\xb^s_i\|} \RP  \frac{y^s_i\xb^s_i}{\|\xb^s_i\|} +\wb\\
&+ s\!\times\! A\left(\sum_{i=1}^m \left[\Phi'_{\rm dis} \LP\frac{\wb\cdot\xb^t_i}{\|\xb^t_i\|} \RP  \frac{\xb^t_i}{\|\xb^t_i\|} -
\Phi'_{\rm dis} \LP\frac{\wb\cdot\xb^s_i}{\|\xb^s_i\|} \RP  \frac{\xb^s_i}{\|\xb^s_i\|} \right]\right),
\end{align*}
where $\Phic'(a)$ and $\Phidis'(a)$ are respectively the derivatives of functions $\Phic(\cdot)$ and $\Phidis(\cdot)$ evaluated at point $a$, and 
$$
s = \sgn \left(\ \displaystyle\sum_{i=1}^m  \left[ \Phidis \left(  \frac{\wb \cdot \xb^s_i}{\|\xb^s_i\|}  \right)    -  \Phidis \left(   \displaystyle\frac{\wb \cdot \xb^t_i}{\|\xb^t_i\|}  \right)  \right] \right)\,.
$$
We extend these equations to kernels in the following subsection.

\subsubsection{Using a Kernel Function}

The kernel trick allows us to work with
dual weight vector $\ab\in \R^{2m}$ that is a linear classifier in an augmented space.  Given a kernel $k:\R^d \times \R^d\rightarrow\R$, we have
\begin{equation*}
h_\wb(\xb) \ =\ 
\sgn\left[
\sum_{i=1}^m \alpha_i k(\xb^s_i, \xb) +  \sum_{i=1}^m \alpha_{i+m} k(\xb^t_i, \xb)
\right].
\end{equation*}
Let us denote $K$ the kernel matrix of size $2m\times 2m$ such as
$K_{i,j} \eqdef k(\xb_i, \xb_j)\,,$
where
$$\xb_\# \, =\, 
\begin{cases}
\xb^s_\# & \mbox{if } \# \leq m \\
\xb^t_{\#-m} &\mbox{otherwise.} 
\end{cases} 
$$
In that case, the objective function of Equation~\eqref{eq:probpbda3} is rewritten in terms of the vector $\ab = (\alpha_1,\alpha_2, \ldots\alpha_{2m})$ as
\begin{align*} 
G(\ab) =  
 C &\sum_{i=1}^m  \Phic \left(  y^s_i \frac{\sum_{j=1}^{2m} \alpha_j K_{i,j}}{ \sqrt{K_{i,i}} }  \right) \\ 
 &{}+ 
A  \left| \sum_{i=1}^m  \left[\Phidis \left(  \frac{\sum_{j=1}^{2m} \alpha_j K_{i,j}}{ \sqrt{K_{i,i}} }  \right)    -  \Phidis \left(  \frac{\sum_{j=1}^{2m} \alpha_j K_{i+m,j}}{ \sqrt{K_{i+m,i+m}} }  \right) \right] \right| 
+ \frac{1}{2} \sum_{i=1}^{2m} \sum_{j=1}^{2m} \alpha_i \alpha_j K_{i,j} \,.
\end{align*}
The gradient of the latter equation is given by the vector $\nabla G (\ab) = (\alpha_1',\alpha_2', \ldots\alpha_{2m}')$, with
\begin{align*} 
\alpha'_\#\ =\  
 &C  \sum_{i=1}^m  \Phic' \left(  y^s_i \frac{\sum_{j=1}^{2m} \alpha_j K_{i,j}}{ \sqrt{K_{i,i}} }  \right) 
  \tfrac{y^s_i \,K_{i,\#}}{ \sqrt{K_{i,i}} }  + \sum_{j=1}^{2m} \alpha_i K_{i,\#}\\
 &+ 
s \! \times \! A  \left( \sum_{i=1}^m  \left[\Phidis' \left(  \frac{\sum_{j=1}^{2m} \alpha_j K_{i,j}}{ \sqrt{K_{i,i}} }  \right)
\tfrac{K_{i,\#}}{ \sqrt{K_{i,i}} } -  \Phidis' \left(  \frac{\sum_{j=1}^{2m} \alpha_j K_{i+m,j}}{ \sqrt{K_{i+m,i+m}} }  \right) 
  \tfrac{K_{i+m,\#}}{ \sqrt{K_{i+m,i+m}} }\right]\right),
\end{align*}
where
$$
s = \sgn \left( \displaystyle\sum_{i=1}^m  \left[\Phidis \left(  \frac{\sum_{j=1}^{2m} \alpha_j K_{i,j}}{ \sqrt{K_{i,i}} }  \right)    -  \Phidis \left(  \frac{\sum_{j=1}^{2m} \alpha_j K_{i+m,j}}{ \sqrt{K_{i+m,i+m}} }  \right)  \right] \right).
$$


\section{Experiments}
\label{sec:expe}
\subsection{General Setup}
\PBDA\footnote{We made our code available at the following URL: \url{http://graal.ift.ulaval.ca/pbda/}} has been evaluated on a toy problem and a sentiment dataset.
For our experiments, we minimize the objective function using a \emph{Broyden-Fletcher-Goldfarb-Shanno method (BFGS)} implemented in the \emph{scipy} python library\footnote{Available at \url{http://www.scipy.org/}}. 
\noindent\PBDA has been compared with:  
\begin{itemize}
\item \SVM learned only from the source domain, {\it i.e.}, without adaptation. We made use of the SVM-light library \citep{Joachims99}.
\item {\small PBGD3}, presented in Section~\ref{sec:pbgd}, and learned only from the source domain, {\it i.e.}, without adaptation.
\item {\small DASVM} of \citet{BruzzoneM10S}, an iterative domain adaptation algorithm which tries to maximize iteratively a notion of margin on self-labeled target examples. We implemented DASVM with the LibSVM library \citep{libsvm}.
\item {\small CODA} of \citet{ChenWB11}, a co-training domain adaptation algorithm, which looks iteratively for target features related to the training set. We used the implementation provided by the authors.  Note that \citet{ChenWB11} have shown best results on the dataset considered in our Section~\ref{sec:sentiments}.
\end{itemize}
Each parameter is selected with a grid search via a  classical $5$-folds cross-validation (${}^{CV}$) on the source sample for {\small PBGD3} and \SVM, and via a $5$-folds reverse/circular validation (${}^{RCV}$) on the source and the (unlabeled) target samples  for {\small CODA}, {\small DASVM}, and \PBDA. We describe this latter point in the following section.
Note that for \PBDA we search on a $20\times 20$ parameter grid for a $A$ between $0.01$ and $10^6$ and a parameter $C$ between $1.0$ and $10^8$, both on a logarithm scale.

\subsection{A Note about the Reverse Validation}
A crucial question in domain adaptation is the validation of the hyperparameters.
One solution is to follow the principle proposed by \citet{Zhong-ECML10} which relies on the use of a reverse validation approach.
This approach is based on a so-called reverse classifier evaluated on the source domain.
We propose to follow it for tuning the parameters of \PBDA, {\small DASVM} and {\small CODA}.
Note that \citet{BruzzoneM10S} have proposed a similar method, called circular validation, in the context of {\small DASVM}.

Concretely, in our setting, given $k$-folds on the source labeled sample ($S=S_1\cup\ldots\cup S_k$), $k$-folds on the unlabeled target $T$ sample ($T=T_1\cup\ldots\cup T_k$) and a learning algorithm (parametrized by a fixed tuple of hyperparameters),  the reverse cross validation risk on the $i^{\rm th}$ fold is computed as follows.
Firstly, the source set $S\setminus S_i$ is used as a labeled sample and the target set  $T\setminus T_i$ is used as an unlabeled sample for learning a classifier $h'$.
Secondly, using the same algorithm, a reverse classifier $h'^r$ is learned using the \emph{self-labeled} sample $\{(\xbf,  h'(\xbf))\}_{\xbf\in T\setminus T_i}$ as the source set and the unlabeled part of $S\setminus S_i$ as target sample.
Finally, the reverse classifier $h'^r$ is evaluated on $S_i$. 
We summarize this principle on Figure~\ref{fig:rev}. The process is repeated $k$ times to obtain the reverse cross validation risk averaged across all folds.

\begin{figure}[t]
\centering \includegraphics[width=0.7\textwidth]{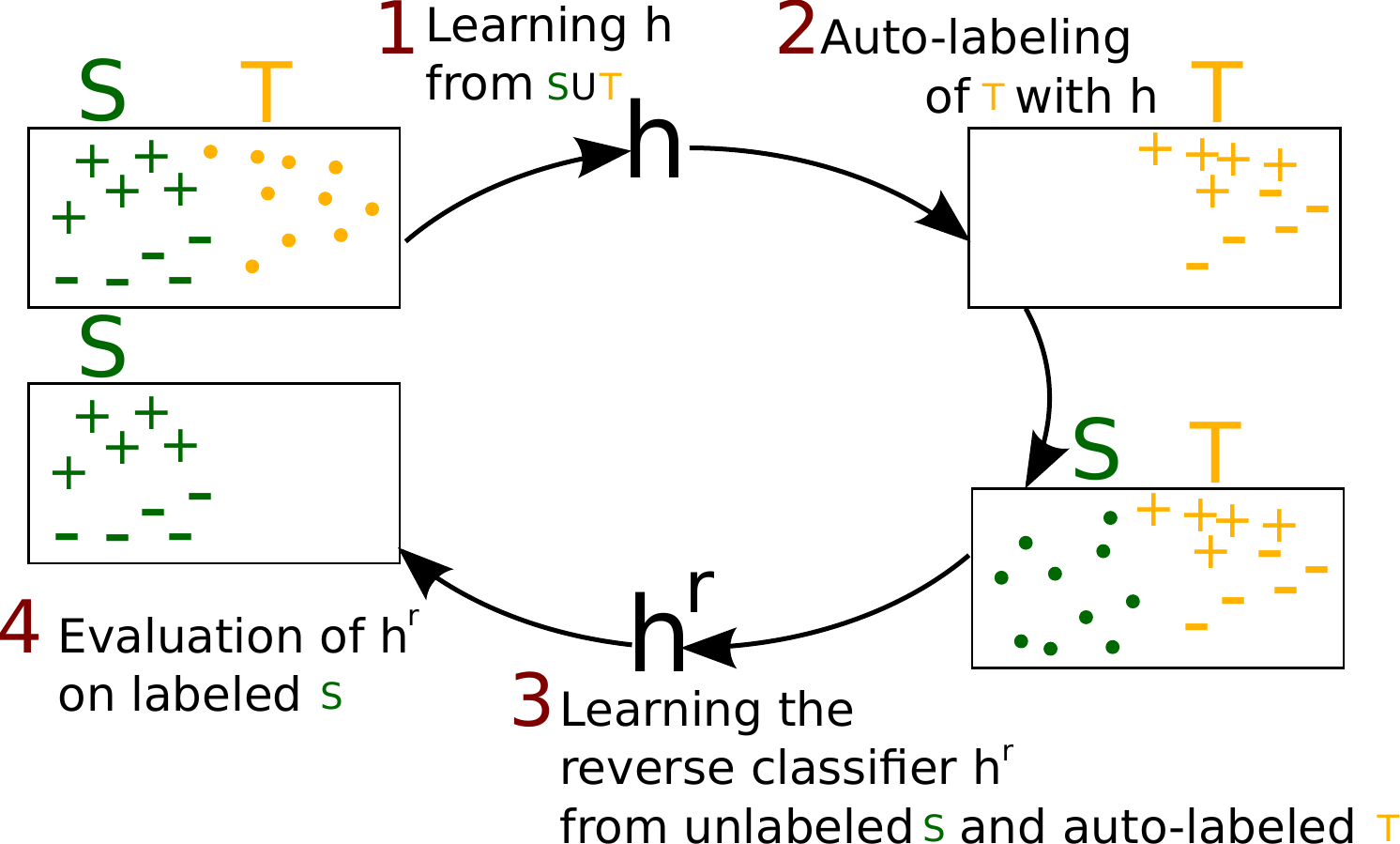}
\caption{\label{fig:rev}The principle of the reverse/circular validation in our setting. 
} 
\end{figure}

\subsection{Toy Problem: Two Inter-Twinning Moons}
The source domain considered here is the classical binary problem with two inter-twinning moons, each class corresponding to one moon (Figure~\ref{fig:moons}).
We then consider seven different target domains by rotating anticlockwise the source domain according to seven angles (from $10\degree$ to $90\degree$).
The higher the angle, the more difficult the problem becomes.
For each domain, we generate $300$ instances ($150$ of each class).
Moreover, to assess the generalization ability of our approach, we evaluate each algorithm on an independent test set of $1, 000$ target points (not provided to the algorithms).
We make use of a Gaussian kernel for all the methods.
Each domain adaptation problem is repeated ten times, and we report the average error rates on Table~\ref{tab:res}.
Note that since {\small CODA} decomposes features for applying co-training, it is not appropriate here (we have only two features).

We remark that our \PBDA provides the best performances except for $50\degree$ and $20\degree$, indicating that \PBDA 
accurately tackles domain adaptation tasks.
It shows a nice adaptation ability, especially for the hardest problem, probably due to the fact that $\des$ is tighter and seems to be a good regularizer in a domain adaptation situation.
 The adaptation versus risk minimization trade-off suggested by Theorem~\ref{theo:pacbayesdabound_bis} appears in Figure~\ref{fig:moons}. Indeed, the plot  illustrates that \PBDA 
accepts to have a lower source accuracy to maintain its performance on the target domain, at least when the source and the target domains are not so different. Note, however, that for large angles, \PBDA prefers to ``focus''  on the source accuracy. We claim that this is a reasonable behavior for a domain adaptation algorithm.

\begin{table}[t]
\caption{Average error rate results for seven rotation angles.\label{tab:res}}
\rowcolors{2}{}{black!10}

\centering
 \begin{tabular}{|c||c|c|c|c|}
          \toprule
    & {\small PBGD3}$^{CV}$      &  {\small SVM}$^{CV}$    &    {\small DASVM}$^{RCV}$    &     {\small PBDA}$^{RCV}$ \\
          \midrule
$\ \  10\degree\ $& ${\it 0}$ &   ${\it 0}$     &     ${\it 0}$     &${\it 0}$  \\
$\ \ 20\degree\ $& $0.088$ & $0.104$     &   ${\it 0}$&  $0.094\,$ \\
$\ \ 30\degree\ $&   $0.210$  &  $0.24$     & $0.259$&  ${\it 0.103}$ \\
$\ \ 40\degree\ $&  $0.273$  &   $0.312$    & $0.284$& ${\it 0.225}$ \\
$\ \ 50\degree\ $&   $0.399$ &   $0.4$   & ${\it 0.334}$ & $0.412$ \\
$\ \ 70\degree\ $&  $0.776$&   $0.764$    &  $0.747$ &   ${\it 0.626}\,$  \\
$\ \ 90\degree\ $&  $0.824$&  $0.828$ &  $0.82$ &     ${\it 0.687}$ \\
\bottomrule
\end{tabular}
\end{table}

\begin{figure}[t]
\centering
\includegraphics[width=0.3\textwidth, trim=17mm 10mm 17mm 14mm, clip]{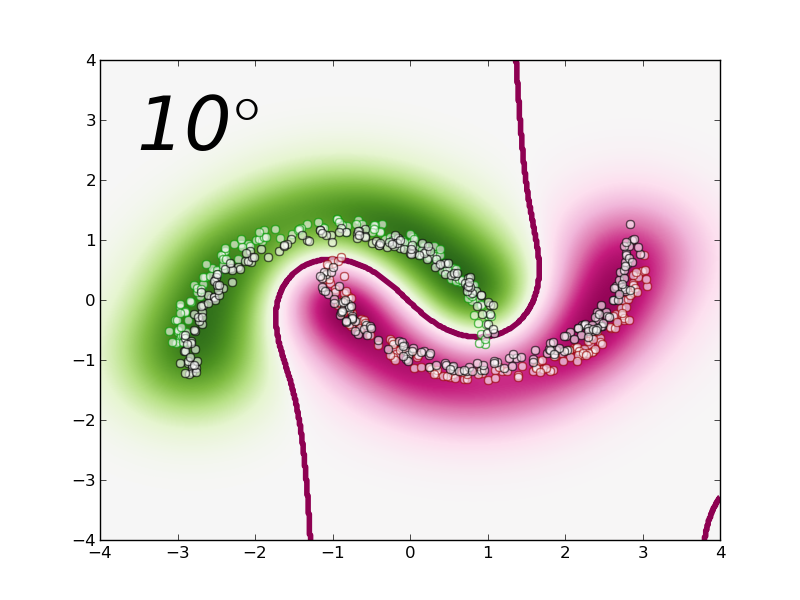}\hfil \includegraphics[width=0.3\textwidth, trim=17mm 10mm 17mm 14mm, clip]{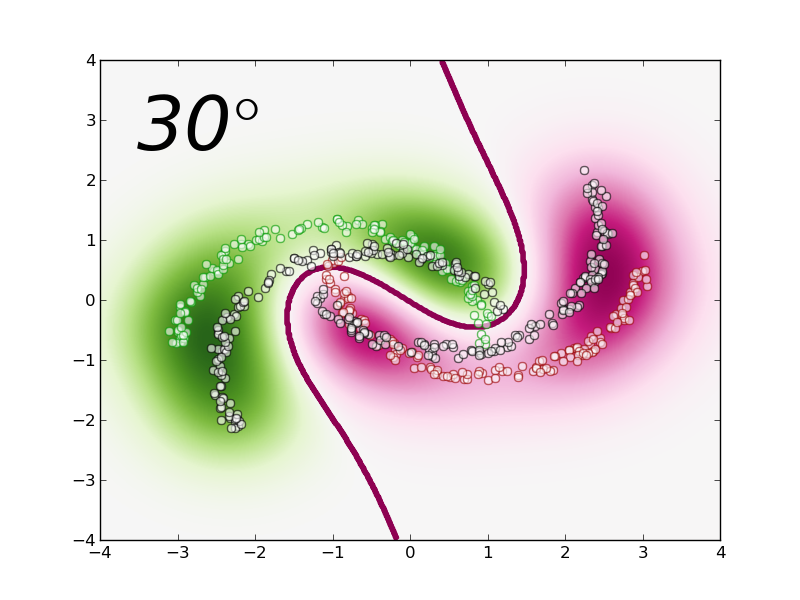}\hfil \includegraphics[width=0.3\textwidth, trim=17mm 10mm 17mm 14mm, clip]{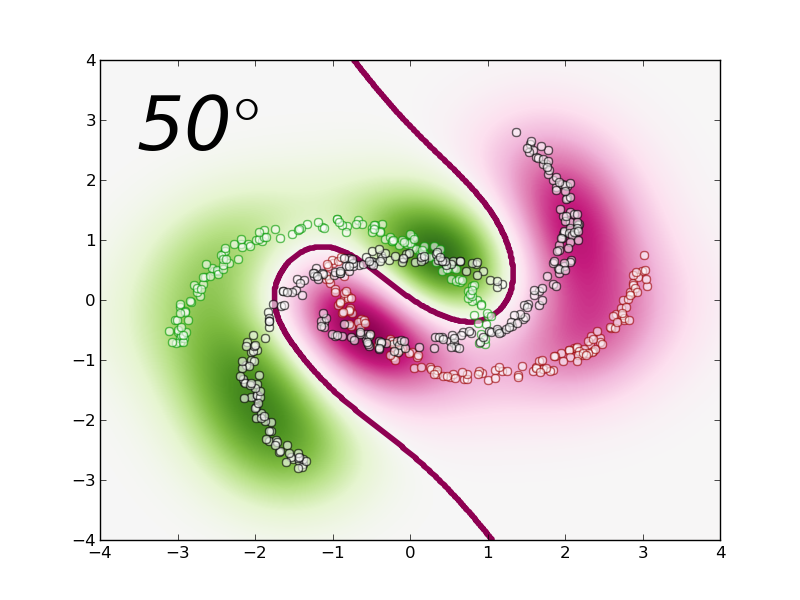}\\[5mm]
\includegraphics[width=0.5\textwidth]{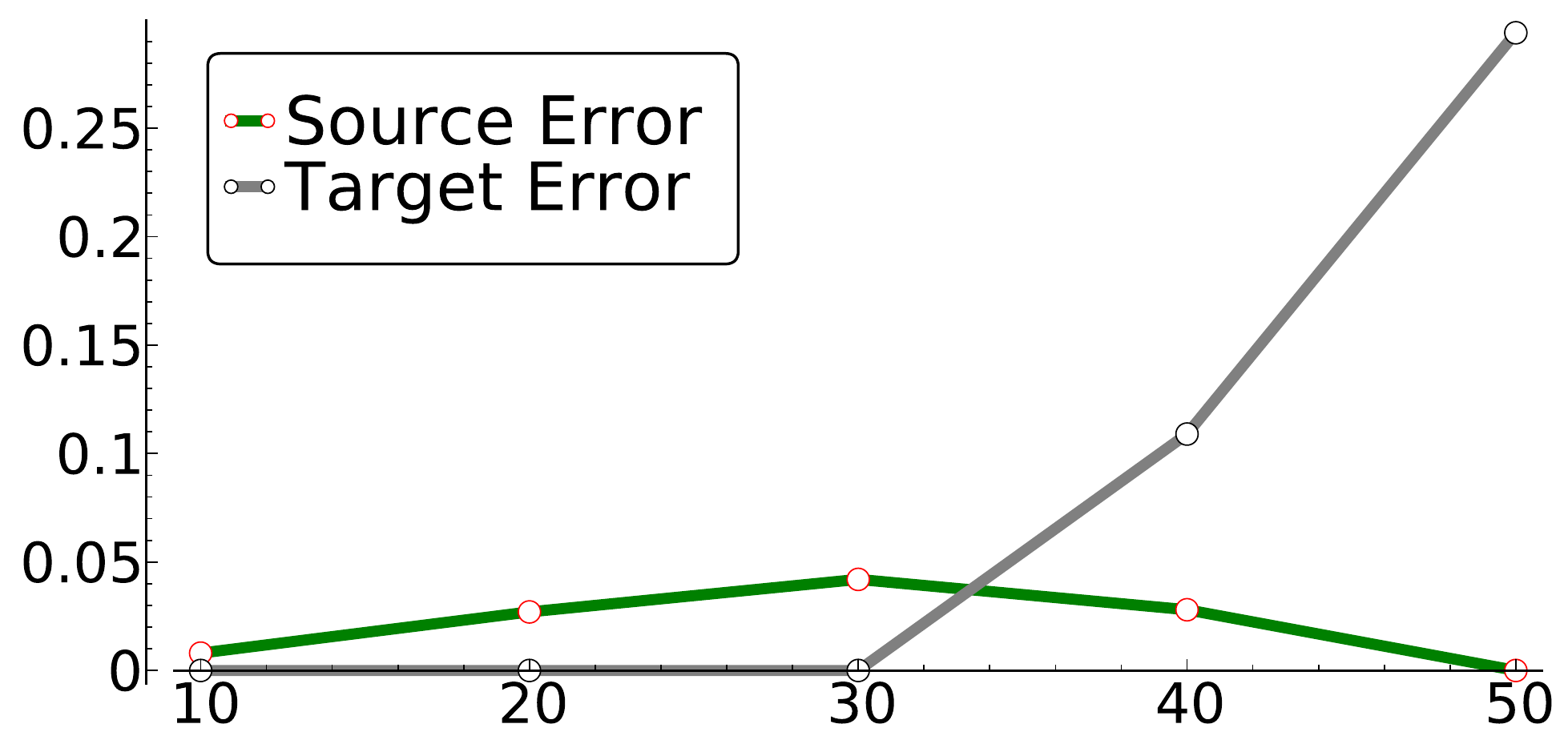}
\caption{Illustration of the decision boundary of \PBDA on three rotations angles for fixed parameters $A = C = 1$.  The two classes of the source sample are green and pink, and target (unlabeled) sample is gray. The bottom plot shows corresponding source and target errors. We intentionally avoid tuning PBDA parameters  to highlight its inherent adaptation behavior.\label{fig:moons}}
\end{figure}

\begin{table}[t]
\centering
\caption{Error rates for sentiment analysis dataset. B, D, E, K respectively denotes  books, DVDs,  electronics,  kitchen. 
\label{tab:res_sentiments}}
\rowcolors{2}{}{black!10}
\begin{tabular}{|c||cc|ccc|}
\toprule
$ $ & {\small PBGD3}$^{CV} $ & {\small SVM}$^{CV} $ & {\small DASVM}$^{RCV} $ & {\small CODA}$^{RCV} $ & {\small PBDA}$^{RCV} $ \\ 
\midrule
$ $B$\rightarrow$D$ $ & $ {\it 0.174}$$ $ & $ $$0.179$$ $ & $ $$0.193$$ $ & $ $${0.181}$$ $ & $ $$0.183$$ $ \\ 
$ $B$\rightarrow$E$ $ & $ $$0.275$$ $ & $ $$0.290$$ $ & $ $${\it 0.226}$$ $ & $ $${0.232}$$ $ & $ $$0.263$$ $ \\ 
$ $B$\rightarrow$K$ $ & $ $$0.236$$ $ & $ $$0.251$$ $ & $ $${\it 0.179}$$ $ & $ $${ 0.215}$$ $ & $ $$0.229$$ $ \\ 
$ $D$\rightarrow$B$ $ & $ $${\it 0.192}$$ $ & $ $$0.203$$ $ & $ $$0.202$$ $ & $ $$0.217$$ $ & $ $$0.197$$ $ \\ 
$ $D$\rightarrow$E$ $ & $ $$0.256$$ $ & $ $$0.269$$ $ & $ $${\it 0.186}$$ $ & $ $${0.214}$$ $ & $ $$0.241$$ $ \\ 
$ $D$\rightarrow$K$ $ & $ $$0.211$$ $ & $ $$0.232$$ $ & $ $$0.183$$ $ & $ $${\it 0.181}$$ $ & $ $$0.186$$ $ \\ 
$ $E$\rightarrow$B$ $ & $ $$0.268$$ $ & $ $$0.287$$ $ & $ $$0.305$$ $ & $ $$0.275$$ $ & $ $${\it 0.232}$$ $ \\ 
$ $E$\rightarrow$D$ $ & $ $$0.245$$ $ & $ $$0.267$$ $ & $ $${\it 0.214}$$ $ & $ $$0.239$$ $ & $ $${0.221}$$ $ \\ 
$ $E$\rightarrow$K$ $ & $ $${\it 0.127}$$ $ & $ $$0.129$$ $ & $ $$0.149$$ $ & $ $${0.134}$$ $ & $ $$0.141$$ $ \\ 
$ $K$\rightarrow$B$ $ & $ $$0.255$$ $ & $ $$0.267$$ $ & $ 0.259$$ $ & $ $${\it 0.247}$$ $ & $ $${\it 0.247}$$ $ \\ 
$ $K$\rightarrow$D$ $ & $ $$0.244$$ $ & $ $$0.253$$ $ & $ $${\it 0.198}$$ $ & $ $$0.238$$ $ & $ $${0.233}$$ $ \\ 
$ $K$\rightarrow$E$ $ & $ $$0.235$$ $ & $ $$0.149$$ $ & $ $$0.157 $ & $ $$0.153$$ $ & $ $${\it 0.129}$$ $ \\ 
\midrule
$ $Average$ $ & $ $$0.226 $ & $ $$0.231 $ & $ $${\it 0.204} $ & $ $$0.210 $ & $ $${0.208} $ \\ 
\bottomrule
\end{tabular}
\end{table}

\subsection{Sentiment Analysis Dataset}
\label{sec:sentiments}
We consider the popular {\it Amazon reviews} dataset \citep{BlitzerMP06}  composed of reviews of four types of {\it Amazon.com}$^{\copyright}$~products (books, DVDs, electronics, kitchen appliances). Originally, the reviews corresponded to a rate between one and five stars and the feature space (of unigrams and bigrams) has on average a dimension of $100, 000$. 
For sake of simplicity and for considering a binary classification task, we propose to follow a  setting similar to  the one proposed by \citet{ChenWB11}.
Then the two possible classes are: $+1$ for the products with a rank higher than $3$ stars, $-1$ for those with a rank lower or equal to $3$ stars.
The dimensionality is reduced in the following way: \citet{ChenWB11} only kept the features that appear at least ten times in a particular DA task (it remains about $40, 000$ features), and pre-processed the data with a standard tf-idf re-weighting.
One type of product is a domain, then we perform twelve domain adaptation tasks. For example, ``books$\rightarrow$DVDs'' corresponds to the task for which books is the source domain and DVDs the target one.
The algorithms use a linear kernel and consider $2, 000$ labeled source examples and $2, 000$ unlabeled target examples.
We evaluate them on separate target test sets proposed by \citet{ChenWB11} (between $3, 000$ and $6, 000$ examples), and we report the results on Table \ref{tab:res_sentiments}.
We make the following observations.

First, as expected, the domain adaptation approaches provide the best average results.
Then, \PBDA is on average better than {\small CODA}, but less accurate than {\small DASVM}. However, \PBDA is competitive: the results are not significantly different from {\small CODA} and {\small DASVM}. 
Moreover, we have observed that \PBDA is significantly faster than {\small CODA} and {\small DASVM}: these two algorithms are based on costly iterative procedures increasing the running time by at least a factor of five in comparison of \PBDA. In fact, the clear advantage of \PBDA is that we jointly optimize the terms of our bound in one step.

\subsection{Combining \PBDA and Representation Learning}

As discussed in the introduction, there exist several families of approaches used to tackle the domain adaptation problem. The present work focuses on the minimization of a distance metric between the source and target distributions. Now, we ask ourselves whether it can be fruitful to combine our \PBDA algorithm with another approach. To do so, we executed \PBDA on top of the Marginalized Stacked Denoising Autoencoders (\msda) introduced by~\citet{Chen12}.

In brief, \msda is an unsupervised algorithm that learns a new representation of the training samples. As a ``denoising autoencoders'' algorithm, it finds a representation from which one can (approximately) reconstruct the original features of an example from its noisy counterpart. The originality of \msda is to learn a representation that allows reconstructing both source and target unlabeled examples. Then, one can execute any supervised learning algorithm on the new representation of source samples, for which the labels are known.

That is, given a source sample $S = \{(\xb^s_i, y^s_i)\}_{i=1}^m$ and a target sample $T = \{(\xb^t_i)\}_{i=1}^{m'}$, \msda takes the unlabeled parts of $S$ and $T$, $\{\xb^s_1, \ldots, \xb^s_m,\xb^t_1, \ldots, \xb^t_{m'}\}$, and learn a feature map $f:X\rightarrow X'$, where $X'$ is a new input space (of real-valued vector).  In \citep{Chen12}, a linear \SVM is executed using $S_f=\{(f(\xb^s_i), y^s_i)\}_{i=1}^m$ as training data, and the hyper-parameter $C$ is selected by standard cross-validation.

We compare the performance of \SVM on \msda representation to \PBDA on the same representations.  That is, we obtain a new representation of both source $S_f = \{(f(\xb^s_i), y^s_i)\}_{i=1}^m$ and target $T_f=\{(f(\xb^t_i))\}_{i=1}^{m'}$ data, using \msda. Then, we execute \PBDA using $S_f$ and $T_f$.

This comparison is done using the {\it Amazon reviews} dataset.
For the sake of comparison, we used the dataset pre-processed by \citet{Chen12}, which is slightly different from the one used in Section~\ref{sec:sentiments}. Indeed, each domain share the same $5,000$ features, and no tf-idf re-weighting is applied.  For each pair source-target, \msda representations are generated using  a \emph{corruption probability} of $50\%$ and a \emph{number of layers} of $5$. Then, \SVM and \PBDA are executed on the same representations. 

\begin{table}[t]
\centering
\caption{Error rates for \msda representations on sentiment analysis dataset.
\label{tab:res_sentiments_msda}}
\rowcolors{2}{}{black!10}
\begin{tabular}{|c||cccc||cc|}
\toprule
 & {\small SVM}$^{CV}$  &  {\small PBDA}$^{CV+RCV}$  & {\small PBDA}$^{RCV}$ & {\small PBDA}$^{CV}$ & {\small SVM}$^{TEST}$ &  {\small PBDA}$^{TEST}$ \\ 
\midrule
B$\rightarrow$D & {\it 0.172} & 0.174 & 0.181 & 0.174 & 0.171 & {\it 0.170} \\ 
B$\rightarrow$E & 0.243 & {\it 0.235} & {\it 0.235} & 0.308 & 0.221 & {\it 0.179} \\ 
B$\rightarrow$K & 0.189 & {\it 0.181} & {\it 0.181} & 0.185 & {\it 0.158} & {\it 0.158} \\ 
D$\rightarrow$B & 0.179 & {\it 0.178} & {\it 0.178} & 0.189 & {\it 0.174} & 0.175 \\ 
D$\rightarrow$E & {\it 0.223} & 0.233 & 0.233 & 0.327 & 0.195 & {\it 0.165} \\ 
D$\rightarrow$K & {\it 0.152} & 0.155 & 0.155 & 0.163 & 0.152 & {\it 0.147} \\ 
E$\rightarrow$B & {\it 0.239} & 0.246 & 0.246 & 0.251 & {\it 0.226} & 0.233 \\ 
E$\rightarrow$D & 0.233 & 0.232 & {\it 0.230} & 0.232 & {\it 0.225} & 0.230 \\ 
E$\rightarrow$K & 0.128 & {\it 0.123} & {\it 0.123} & 0.133 & 0.127 & {\it 0.115} \\ 
K$\rightarrow$B & 0.229 & 0.230 & 0.230 & {\it 0.225} & 0.221 & {\it 0.217} \\ 
K$\rightarrow$D & 0.209 & 0.216 & 0.311 & {\it 0.208} & 0.209 & {\it 0.200} \\ 
K$\rightarrow$E & 0.138 & {\it 0.134} & 0.142 & {\it  0.134} & 0.138 & {\it 0.133} \\
\midrule
Average & {\it 0.195} & {\it 0.195} & 0.204 & 0.211 & 0.185 & {\it 0.177} \\ 
\bottomrule
\end{tabular}
\end{table}

The results are reported in Table~\ref{tab:res_sentiments_msda}. The \PBDA algorithm, when we select the hyperparameter by reverse cross-validation ({\small PBDA}$^{RCV}$), is not always as good as the cross-validated SVM ({\small SVM}$^{CV}$). However, by looking closer at the results, we notice that there often exists hyperparameters for which \PBDA is better on the testing set than the best achievable \SVM (as reported by the columns {\small PBDA}$^{TEST}$ and {\small SVM}$^{TEST}$). 
This suggests that it might be advantageous to mix \msda and \PBDA learning strategies.
However, the hyperparameters selection is still a challenge in domain adaptation, when we do not have any target labels, even if the reverse cross-validation method is a sound strategy. For exploratory purposes, we report on Table~\ref{tab:res_sentiments_msda} the risk of \PBDA while performing the model selection by standard cross-validation ({\small PBDA}$^{CV}$) and while we consider the mean of the cross-validation and the reverse cross-validation score ({\small PBDA}$^{CV+RCV}$). Interestingly, the latter method is a better selection criterion than taking one or the other validation risk separately in this experiment, both being misleading in some situations.\footnote{It is important to point out that experiments on other datasets showed us that the $CV{+}RCV$ method does not systematically outperform the reverse cross-validation method alone.}


\section{Generalization of the PAC-Bayesian Domain Adaptation Theorems to Multisource Domain Adaptation}
\label{sec:multisource}

In this section, we generalize our main analysis to multisource domain adaptation.

\subsection{Multisource Domain Adaptation Setting}

We now consider $n$ different source domains $\{\PSj\}_{j=1}^n$ over $X\times Y$ (along with  $\{\DSj\}_{j=1}^n$ the associated marginal distributions over $X$).
In addition to the target $m'$-sample $T$ with $m'$ unlabeled examples drawn {\it i.i.d.} from the target marginal $\DT$, we have one {\it i.i.d.} source learning sample $S_j$ per domains~$\PSj$ (possibly of different sizes).

Similarly to \citet{BenDavid-MLJ2010}, we study this issue when the relationship between the source domains and the target one is captured by a distribution $v$ over the set of source domains $\{\PSj\}_{j=1}^n$.
This distribution defines a mixture of source domains that we denote by $\multiP$, and its marginal over $X$ by $\multiD$, and $\multiS=\{\Sj\}_{j=1}^n$ corresponds to the set of source samples.
On the source domains, we then consider the following $v$-weighted true error of the Gibbs classifier $\GQ$:
\begin{align*}
\RPv(\GQ)\ &\eqdef\ \esp{\PSj\sim v} \RPSj(\GQ)\\
&=\ \esp{\PSj\sim v} \esp{h\sim \posterior}\RPSj(h)\\
&=\ \sum_{j=1}^n v(\PSj) \esp{h\sim \posterior}\RPSj(h)\,.
\end{align*}
Its empirical counterpart is defined as
\begin{align*}
\RSv(\GQ) 
&\ \eqdef \ \sum_{j=1}^n v(\PSj) \esp{h\sim \posterior}\RSj(h)\,.
\end{align*}

Note that another solution for tackling multisource domain adaptation in a PAC-Bayesian philosophy could be to learn different posterior distribution over $\Hcal$ from different sources.
 Indeed, instead of learning a shared $\posterior$ on every domain (including the target one), we can learn a model for each domain, and then try to learn a good target majority vote over this set of models. In this situation, one could derive a PAC-Bayesian analysis similar to the one provided by \citet{pentina14} for life-long learning.
However, this setting clearly appears to be not pertinent to extend
our one-source domain analysis to multiple sources, since they treat the prior distribution as a random variable, which is not our setting.

\subsection{Generalization of the $\posterior$-Disagreement to Multiple Sources}
One natural solution to generalize the $\posterior$-disagreement of Definition~\ref{def:disagreement} to the multisource setting described in above is to make use of the $v$-weighted sum of each \mbox{$\posterior$-disagreement} between a source distribution and the target one $\espdevant{\DSj\sim v} \des(\DSj,\DT)$, for which we can easily extend Theorem~\ref{thm:pacbayesdabound}.
However, we prefer to consider the following definition that is clearly tighter than the latter one. 
\begin{definition}
Let $\Hcal$ be a hypothesis class. For marginal distributions $\{\DSj\}_{j=1}^n$ and $\DT$ over $X$, any distribution $v$ on $\{\DSj\}_{j=1}^n$, any distribution $\posterior$ on $\Hcal$, the domain disagreement $\des(\multiD,\DT)$ between the mixture of source distribution $\multiD$
and the target distribution $\DT$ is defined by 
\begin{align*}
\des(\multiD,\DT)\ \eqdef\  &\,\Bigg|\, \esp{(h,h')\sim\posterior^2}\bigg[ \RDT(h,h') - 
\esp{\DSj\sim v}\RDSj(h,h') \bigg]\, \Bigg|\\
=\   &\,\Bigg|\,  \RDT(\GQ,\GQ) - 
\esp{\DSj\sim v} \RDSj(\GQ,\GQ)  \,\Bigg| \,.
\end{align*}
\end{definition}
As noticed before, we  trivially have  
\begin{equation} \label{eq:dis_multi_ineq}
 \des(\multiD,\DT)
 \ \leq \ 
 \esp{\DSj\sim v} \des(\DSj,\DT)\,.
 \end{equation}

Therefore, one can use the various PAC-Bayesian bounds presented in Section~\ref{section:PB-dis} to obtain an empirical guarantee over $\des(\multiD,\DT)$ from a collection of observations from each domain.
In particular, Corollary~\ref{cor:bound_dis_rho_multi} below is directly obtained from Theorem~\ref{thm:bound_dis_rho}.

For sake of simplicity, the results presented for the multisource setting suppose that every sample shares the same size $m$. We use the shortcut notation $\multiS\sim(\multiP)^m$ to denote the collection of $n$ source samples of $m$ examples. That is, $\multiS=\{\Sj\}_{j=1}^n$, where $\Sj\sim(\PSj)^m$.

\begin{corollary} \label{cor:bound_dis_rho_multi}
 For any distributions $\{\DSj\}_{j=1}^n$ and $\DT$ over $X$, any set of hypotheses $\Hcal$,  any distribution $v$ over $\{\DSj\}_{j=1}^n$, any prior distribution $\prior$ over $\Hcal$, any $\delta \in (0,1]$, and any real number $\alpha > 0$,  with a probability at least $1 - \delta$ over the choice of 
$\multiS\sim(\multiP)^m$ and $T\sim (D_T)^{m}$, for every $\posterior$ on $\Hcal$, we have
 \begin{align*}
\des(\multiD,\DT)\ \leq\   \frac{2\alpha }{1 -e^{-2\alpha}}  \left[ \esp{\DSj\sim v} \des(\multiS,T)  +  \frac{2\,\KL(\posterior\|\prior)  +  \ln  \frac{2}{\delta} + \ln n}{m\times\alpha} + 1\right] - 1\,.
\end{align*}
\end{corollary}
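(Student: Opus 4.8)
The plan is to reduce the multisource statement to $n$ separate applications of the single-source Catoni-type bound of Theorem~\ref{thm:bound_dis_rho}, one for each source domain, and then to average these bounds against the mixing distribution $v$. First I would fix $j\in\{1,\dots,n\}$ and apply Theorem~\ref{thm:bound_dis_rho} to the pair $(\DSj,\DT)$, but with the confidence parameter set to $\delta/n$ rather than $\delta$. Since $\Sj\sim(\PSj)^m$ and $T\sim(\DT)^m$, this yields, with probability at least $1-\delta/n$,
\begin{equation*}
\des(\DSj,\DT)\ \leq\ \frac{2\alpha}{1-e^{-2\alpha}}\left[\des(\Sj,T)+\frac{2\,\KL(\posterior\|\prior)+\ln\tfrac{2n}{\delta}}{m\times\alpha}+1\right]-1,
\end{equation*}
where I have used $\ln\tfrac{2}{\delta/n}=\ln\tfrac{2}{\delta}+\ln n$, which is exactly the numerator that appears in the statement of the corollary.

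Next I would take a union bound over $j=1,\dots,n$. The point I would flag here is that the source samples $\{\Sj\}_{j=1}^n$ are drawn independently, but the single target sample $T$ is shared by all $n$ events, so these events are \emph{not} independent. This causes no difficulty, because the union bound requires no independence assumption: by subadditivity of probability, with probability at least $1-\delta$ over the joint draw $\multiS\sim(\multiP)^m$ and $T\sim(\DT)^m$, all $n$ inequalities above hold simultaneously.

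On this event I would then take the $v$-weighted expectation $\esp{\DSj\sim v}$ of both sides. The structural observation that makes this clean is that the right-hand side is affine in the empirical disagreement $\des(\Sj,T)$, with leading coefficient $\tfrac{2\alpha}{1-e^{-2\alpha}}$ and additive constants that do not depend on $j$ (in particular $\KL(\posterior\|\prior)$ depends only on $\posterior$ and $\prior$). Hence the average passes through and touches only the empirical term, giving
\begin{equation*}
\esp{\DSj\sim v}\des(\DSj,\DT)\ \leq\ \frac{2\alpha}{1-e^{-2\alpha}}\left[\esp{\DSj\sim v}\des(\Sj,T)+\frac{2\,\KL(\posterior\|\prior)+\ln\tfrac{2}{\delta}+\ln n}{m\times\alpha}+1\right]-1.
\end{equation*}
To finish, I would invoke inequality~\eqref{eq:dis_multi_ineq}, namely $\des(\multiD,\DT)\leq\esp{\DSj\sim v}\des(\DSj,\DT)$, to replace the left-hand side by the mixture disagreement, which yields the claimed bound.

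There is essentially no heavy computation here; the work is bookkeeping. I expect the only step requiring genuine care to be the union bound: one must resist exploiting independence (the shared $T$ breaks it) and rely solely on subadditivity, while verifying that splitting the confidence as $\delta/n$ reproduces exactly the additive $\ln n$ term in the final bound.
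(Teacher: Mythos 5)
Your proof is correct and takes essentially the same route as the paper's: apply Theorem~\ref{thm:bound_dis_rho} to each pair $(\DSj,\DT)$ with confidence $\delta/n$, combine the $n$ events by a union bound (which, as you rightly note, requires no independence despite the shared target sample $T$), and average against $v$ before invoking Equation~\eqref{eq:dis_multi_ineq}. The paper gives only this outline; your write-up is the same argument with the $\ln n$ bookkeeping and the affine-averaging step made explicit.
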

\begin{proof}
We upper bound the right-hand side of Equation~\eqref{eq:dis_multi_ineq} by upper-bounding each individual term of the expectation using Theorem~\ref{thm:bound_dis_rho}. That is, we bound 
$$v(P_{S_1})\des(S_1,T),\ v(P_{S_2})\des(S_2,T),\ \ldots,\ v(P_{S_n})\des(S_n,T)\,,$$
each one with probability $1-\frac{\delta}{n}$. Thereafter, we regroup these $n$ bounds together to obtain the final result, which stands with probability $1-\delta$.
\end{proof}

The bound given by Corollary~\ref{cor:bound_dis_rho_multi} can suffer from the inequality of  Equation~\eqref{eq:dis_multi_ineq}. 
A better generalization guarantee is given by Theorem~\ref{thm:bound_dis_rho_multi}  below that bounds directly $\des(\multiD,\DT)$, and does not rely on a term ``$\ln n$'' like we have in Corollary~\ref{cor:bound_dis_rho_multi}.
\begin{theorem} \label{thm:bound_dis_rho_multi}
 For any distributions $\{\DSj\}_{j=1}^n$ and $\DT$ over $X$, any set of hypotheses $\Hcal$,  any distribution $v$ over $\{\DSj\}_{j=1}^n$, any prior distribution $\prior$ over $\Hcal$, any $\delta \in (0,1]$, and any real number $\alpha > 0$,  with a probability at least $1 - \delta$ over the choice of $\multiS\sim(\multiP)^m$ and $T\sim (D_T)^{m}$, for every $\posterior$ on $\Hcal$, we have
 \begin{align*}
\des(\multiD,\DT)\, \leq\,   \frac{2\alpha}{1 -e^{-2\alpha}} \left[ \des(\multiS,T)  +  \frac{2\,\KL(\posterior\|\prior)  +  \ln  \frac{2}{\delta}}{m\times\alpha} + 1\right] - 1\,.
\end{align*}
\end{theorem}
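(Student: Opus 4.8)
The plan is to mimic the single-source ``Catoni's type'' argument behind Theorem~\ref{thm:bound_dis_rho} (deferred to Appendix~\ref{app:catoni}), but to apply it to the mixture marginal $\multiD$ \emph{directly} rather than source by source. The whole point of proceeding this way is to avoid the union bound over the $n$ sources that produces the spurious $\ln n$ term in Corollary~\ref{cor:bound_dis_rho_multi}. The enabling observation is that, under the standing assumption that every sample shares the same size $m$, the $n$ source samples and the target sample can be aligned index by index: for each $i$ I would group together the $i$-th point $\xbf^{\Sj}_i$ of every source $j$ and the $i$-th target point $\xbf^t_i$ into a single tuple $Z_i$. Since all samples are drawn independently, the tuples $Z_1,\dots,Z_m$ are \emph{i.i.d.}, each gathering $n{+}1$ independent draws.

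First I would introduce, for every pair $(h,h')$, the per-tuple quantity
$$L_i(h,h') \eqdef \tfrac{1}{2}\LP 1 + \zoloss\big(h(\xbf^t_i),h'(\xbf^t_i)\big) - \sum_{j=1}^n v(\PSj)\,\zoloss\big(h(\xbf^{\Sj}_i),h'(\xbf^{\Sj}_i)\big) \RP.$$
Because $\zoloss$ is $\{0,1\}$-valued and $\sum_{j} v(\PSj)=1$, the subtracted term is a convex combination lying in $[0,1]$, so $L_i(h,h')\in[0,1]$. Its empirical average over $i$ equals $\tfrac{1}{2}\big(1+\RT(h,h')-\sum_j v(\PSj)\RSj(h,h')\big)$, and, since the disagreement is linear in the marginal, its expectation equals $\tfrac{1}{2}\big(1+\RDT(h,h')-\RDv(h,h')\big)$, where $\RDv(h,h')=\sum_j v(\PSj)\RDSj(h,h')$. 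Taking the $\posterior^2$-average of these two means recovers, up to the affine map $t\mapsto\tfrac{1}{2}(1+t)$, the signed versions of $\des(\multiS,T)$ and $\des(\multiD,\DT)$.

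I would then run the Catoni machinery verbatim on this bounded loss. The change of measure is performed with $\posterior^2$ against $\prior^2$, which yields the $2\,\KL(\posterior\|\prior)$ term through $\KL(\posterior^2\|\prior^2)=2\,\KL(\posterior\|\prior)$; the exponential moment is controlled with the convexity inequality $e^{-cL}\le 1-(1-e^{-c})L$ valid for $L\in[0,1]$, which factorizes over the $m$ \emph{i.i.d.} tuples $Z_i$ and hence needs no union bound over the sources; Markov's inequality and inversion with $c=2\alpha$ produce the prefactor $\tfrac{2\alpha}{1-e^{-2\alpha}}$, and applying the argument to both signs (each at level $\tfrac{\delta}{2}$, using $\hat d\le|\hat d|$ on the signed empirical disagreement) restores the absolute value and the $\ln\tfrac{2}{\delta}$. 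Undoing the affine map then gives precisely the stated inequality, with every constant coinciding with those of Theorem~\ref{thm:bound_dis_rho}; the only formal change is that the single source is replaced by the aligned $v$-weighted tuple.

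The main obstacle --- indeed essentially the only nontrivial step --- is the reduction itself rather than the probabilistic core. One must check that the aligned loss $L_i$ genuinely stays in $[0,1]$ despite the $v$-weighting (it does, being $\tfrac{1}{2}$ times a quantity in $[0,2]$), that the tuples remain \emph{i.i.d.} across $i$ (which hinges crucially on the equal-size assumption, without which the index alignment is ill-defined), and that the empirical and population $\posterior^2$-averages line up \emph{exactly} with $\des(\multiS,T)$ and $\des(\multiD,\DT)$. Once this bookkeeping is in place, the concentration analysis is identical to the single-source proof, which is exactly why the $\ln n$ of Corollary~\ref{cor:bound_dis_rho_multi} disappears.
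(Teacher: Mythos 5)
Your proposal is correct and follows essentially the same route as the paper's own proof (Appendix~\ref{app:seeger_multi}): the paper likewise reduces to the single-source Catoni argument of Theorem~\ref{thm:bound_dis_rho} by defining an ``abstract'' $[0,1]$-valued loss $\tfrac{1}{2}\bigl[1 + \esp{\DSj\sim v}\zoloss\bigl(h(\xb^{s_j}),h'(\xb^{s_j})\bigr) - \zoloss\bigl(h(\xbt),h'(\xbt)\bigr)\bigr]$ on aligned $(n{+}1)$-tuples drawn i.i.d.\ from $D_{S_1}\times\cdots\times D_{S_n}\times D_T$, which is exactly your $L_i$ up to the (immaterial) sign convention, and then runs the change-of-measure, moment-bound, and two-sided union-bound steps verbatim. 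The only cosmetic difference is that you control the exponential moment directly via the convexity inequality $e^{-cL}\leq 1-(1-e^{-c})L$, whereas the paper routes through Maurer's lemma and a binomial computation---the two calculations yield the same bound.
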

\begin{proof}Deferred to Appendix \ref{app:seeger_multi}.
\end{proof}

Note that Theorem~\ref{thm:bound_dis_kl}, Corollary~\ref{cor:bound_dis_kl_pinsker} and Theorem~\ref{thm:bound_dis_rho_allester} can also be rewritten to bound the multisource domain disagreement following the same proof techniques as we used for Theorem~\ref{thm:bound_dis_rho_multi}.

\subsection{Multisource Domain Adaptation Bound for the Stochastic Gibbs Classifier}

Let now generalize the domain adaptation bound of $\RPT(\GQ)$ presented by Theorem~\ref{thm:pacbayesdabound} to our multisource setting.
 
\begin{theorem}
\label{thm:pacbayesdabound_multi}
Let $\Hcal$ be a hypothesis class. We have
$$
\forall \posterior\mbox{ on }\Hcal,\ \forall v\mbox{ on }\{\PSj\}_{j=1}^n, \quad \RPT(\GQ) \ \leq\   
\RPv(\GQ) + \frac{1}{2} \des(\multiD,\DT) + \lambda_\posterior^{v}\,,
$$
where $\lambda_\posterior^{v}$
is the deviation between the expected joint error of $G_\posterior$ on the source domains and the target one:
\begin{eqnarray}\label{eq:lambda_rho_n}
\nonumber\lambda_\posterior^{v}\!\!\!
&\eqdef& \!\!\!
\left|\esp{(h,h')\sim\posterior^2}\!\!\!\left[
\esp{(\xbf,y) \sim \PT}\hspace{-4mm}\zoloss\big( h(\xb), y \big)\zoloss\big( h'(\xb), y \big) -
\esp{\PSj\sim v} \esp{(\xbf,y) \sim \PSj}\hspace{-4mm} \zoloss\big( h(\xb), y \big) \zoloss\big( h'(\xb), y \big)\right]\right|\\
&=& \!\!\!
\Big|\,\ePT(G_\posterior, G_\posterior)-\esp{\PSj\sim v} \ePSj(G_\posterior, G_\posterior)\,\Big|\,.
\end{eqnarray}
\end{theorem}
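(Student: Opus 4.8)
The plan is to mimic the proof of the single-source bound (Theorem~\ref{thm:pacbayesdabound}) essentially line by line, the only new ingredient being that the $v$-average over source domains is linear and therefore commutes with the error decomposition of Equation~\eqref{eq:rde}. The heart of that proof is the pointwise identity $R_P(\GQ)=\tfrac12\RD(\GQ,\GQ)+\eP(\GQ,\GQ)$, valid for \emph{every} distribution $P$ over $X\times Y$ with marginal $D$; since this holds for each source $\PSj$ as well as for $\PT$, I can assemble the multisource inequality simply by averaging the source identities against $v$.

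First I would apply Equation~\eqref{eq:rde} to the target domain, giving
\begin{equation*}
\RPT(\GQ)\ =\ \tfrac12\,\RDT(\GQ,\GQ)+\ePT(\GQ,\GQ)\,,
\end{equation*}
and to each source domain $\PSj$, giving $\RPSj(\GQ)=\tfrac12\RDSj(\GQ,\GQ)+\ePSj(\GQ,\GQ)$. Taking the $v$-weighted average $\esp{\PSj\sim v}(\cdot)=\sum_{j=1}^n v(\PSj)(\cdot)$ of the source identities and using the definition $\RPv(\GQ)=\esp{\PSj\sim v}\RPSj(\GQ)$ together with linearity, I obtain
\begin{equation*}
\RPv(\GQ)\ =\ \tfrac12\,\esp{\PSj\sim v}\RDSj(\GQ,\GQ)+\esp{\PSj\sim v}\ePSj(\GQ,\GQ)\,.
\end{equation*}

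Subtracting this from the target identity and regrouping yields
\begin{align*}
\RPT(\GQ)-\RPv(\GQ)\ =\ &\tfrac12\Big(\RDT(\GQ,\GQ)-\esp{\PSj\sim v}\RDSj(\GQ,\GQ)\Big)\\
&{}+\Big(\ePT(\GQ,\GQ)-\esp{\PSj\sim v}\ePSj(\GQ,\GQ)\Big)\,.
\end{align*}
Bounding each grouped difference by its absolute value and recognizing the first as $\tfrac12\des(\multiD,\DT)$ (by the definition of the multisource domain disagreement) and the second as $\lambda_\posterior^{v}$ (by Equation~\eqref{eq:lambda_rho_n}) gives the claimed bound once $\RPv(\GQ)$ is moved to the right-hand side.

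I expect no genuine obstacle here: the argument is a verbatim transcription of the single-source case. The only point that deserves a moment's care is that the finite mixture average $\esp{\PSj\sim v}$ is linear, so it distributes over the sum $\tfrac12\RDSj+\ePSj$ term by term; this is precisely what makes the source-side quantities collapse into the $v$-weighted disagreement and joint error appearing in the definitions of $\des(\multiD,\DT)$ and $\lambda_\posterior^{v}$. No concentration or sampling argument is needed, since this is a purely distributional (population-level) statement, exactly as in Theorem~\ref{thm:pacbayesdabound}.
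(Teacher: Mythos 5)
Your proposal is correct and follows exactly the paper's own argument: apply the identity $R_P(\GQ)=\tfrac12\RD(\GQ,\GQ)+\eP(\GQ,\GQ)$ of Equation~\eqref{eq:rde} to $\PT$ and to each $\PSj$, average over $v$ by linearity, subtract, and bound the two grouped differences by their absolute values to recognize $\tfrac12\des(\multiD,\DT)$ and $\lambda_\posterior^{v}$. No gaps; the explicit remark that the finite mixture average commutes with the decomposition is the only (correctly handled) point beyond the single-source proof of Theorem~\ref{thm:pacbayesdabound}.
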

See Equation~\eqref{eq:eP} for the definition of $ \ePSj(G_\posterior, G_\posterior)$.
\smallskip
\begin{proof}
We follow the same steps as in the proof of Theorem~\ref{thm:pacbayesdabound}. 
Indeed, from Equation~\eqref{eq:rde}, we have
\begin{align*}
\RPT(G_\posterior)\, -\, & \RPv(\GQ)\\ 
 = &\ \ \frac{1}{2} \Big(\RDT(G_\posterior,\GQ)- \esp{\PSj\sim v} \RDSj(G_\posterior,\GQ)\Big) + \Big(\ePT(G_\posterior,G_\posterior)-\esp{\PSj\sim v}  \ePSj(G_\posterior,G_\posterior)\Big) \\
\leq&\ \ 
\frac{1}{2} \Big|\RDT(G_\posterior,\GQ)-  \esp{\PSj\sim v}\RDSj(G_\posterior,\GQ) \Big|  +  \Big|\ePT(G_\posterior, G_\posterior)-\esp{\PSj\sim v} \ePSj(G_\posterior,G_\posterior)\Big|\\
=&\ \
\frac{1}{2} \des(\multiD,\DT) 
+ \lambda_\posterior^{v}\,. 
\end{align*}
\end{proof}

\subsection{PAC-Bayesian Theorem for Multisource Domain Adaptation}
Building on Theorems~\ref{thm:bound_dis_rho_multi} and \ref{thm:pacbayesdabound_multi}, we now present a PAC-Bayesian theorem for multisource domain adaptation.

\begin{theorem}
 \label{theo:pacbayesdabound_catoni_bis_multi}
 For any domains $\{\PSj\}_{j=1}^n$ and $\PT$ (respectively with marginals $\{\DS\}_{j=1}^n$ and $\DT$) over $X\times Y$, any distribution $v$ over $\{\PSj\}_{j=1}^n$,  and for any set $\Hcal$ of hypotheses, for any prior distribution $\prior$ over $\Hcal$, any $\delta\in(0,1]$,  with a probability at least $1-\delta$ over the choice of $\multiS\sim(\multiP)^m$ and $T\sim (D_T)^{m}$, for every $\posterior$ over $\Hcal$, we have
 \begin{align*}
\RPT(G_\posterior)  
\ \leq\  
 c'\,\RSv(\GQ) +  \alpha'\,\tfrac{1}{2} \des(\multiS,T) + 
  \left( \frac{c'}{c} + \frac{\alpha'}{\alpha} \right)  \frac{\KL(\posterior\|\prior)+\ln\frac{3}{\delta}}{m} 
  + \lambda_\posterior^{v} + \tfrac{1}{2}(\alpha'\!-\!1)\,,
 \end{align*}
where $\lambda_\posterior^{v}$ is defined by Equation~\eqref{eq:lambda_rho_n}, 
 and where
$\displaystyle c'\eqdef\frac{c}{1 -e^{-c}}$ \, and \, $\displaystyle \alpha'\eqdef \frac{2\alpha}{1 -e^{-2\alpha}}$\,.
\end{theorem}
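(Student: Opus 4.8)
The plan is to reproduce, in the multisource setting, the same two-step substitution that yields the single-source Catoni bound of Theorem~\ref{theo:pacbayesdabound_catoni_bis}. I would start from the deterministic multisource Gibbs bound of Theorem~\ref{thm:pacbayesdabound_multi},
$$\RPT(\GQ)\ \leq\ \RPv(\GQ) + \tfrac{1}{2}\des(\multiD,\DT) + \lambda_\posterior^{v}\,,$$
which holds for every $\posterior$ and every $v$, and then replace the two unobservable true quantities $\RPv(\GQ)$ and $\des(\multiD,\DT)$ by empirical upper bounds that hold with high probability over the draw of $\multiS$ and $T$, carrying $\lambda_\posterior^{v}$ along unchanged.

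For the divergence term I would invoke Theorem~\ref{thm:bound_dis_rho_multi} directly, applied with confidence parameter $\tfrac{2\delta}{3}$; halving its conclusion, using $\ln\tfrac{2}{2\delta/3}=\ln\tfrac{3}{\delta}$ and the same inequality $2\,\KL(\posterior\|\prior)+\ln\tfrac{3}{\delta}<2\big(\KL(\posterior\|\prior)+\ln\tfrac{3}{\delta}\big)$ as in the single-source proof, turns $\tfrac12\des(\multiD,\DT)$ into exactly $\alpha'\,\tfrac12\des(\multiS,T) + \tfrac{\alpha'}{\alpha}\,\tfrac{\KL(\posterior\|\prior)+\ln(3/\delta)}{m} + \tfrac12(\alpha'-1)$. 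For the source-risk term I would use a Catoni-type bound for the $v$-weighted mixture risk, i.e. the analog of Theorem~\ref{thm:pacbayescatoni} stating $\RPv(\GQ)\le c'\big[\RSv(\GQ)+\tfrac{\KL(\posterior\|\prior)+\ln(1/\delta)}{mc}\big]$, applied with confidence $\tfrac{\delta}{3}$ so that $\ln\tfrac{1}{\delta/3}=\ln\tfrac{3}{\delta}$ and the term becomes $c'\RSv(\GQ)+\tfrac{c'}{c}\,\tfrac{\KL(\posterior\|\prior)+\ln(3/\delta)}{m}$. A union bound over the two failure events ($\tfrac{\delta}{3}+\tfrac{2\delta}{3}=\delta$) then gives the stated inequality after collecting the two complexity terms into the single coefficient $\tfrac{c'}{c}+\tfrac{\alpha'}{\alpha}$.

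The one ingredient not already packaged as a numbered result is the Catoni bound for the mixture risk $\RPv(\GQ)$, and this is where the real work lies. I would establish it by the same exponential-moment / change-of-measure argument that underlies Theorem~\ref{thm:bound_dis_rho_multi} (Appendix~\ref{app:seeger_multi}): since $\RPv$ and $\RSv$ are $v$-weighted averages of the per-domain errors and the samples $S_1,\dots,S_n$ are mutually independent, the moment-generating function $\esp{\multiS}\exp(\,\cdot\,)$ factorizes across the domains, while Jensen's inequality lets the convex weighting $v$ pass through the Donsker–Varadhan step so that the penalty $\KL(\posterior\|\prior)$ is incurred only once rather than $n$ times. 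This is precisely the gain over the naive route through $\esp{\DSj\sim v}\des(\DSj,\DT)$ embodied in Corollary~\ref{cor:bound_dis_rho_multi}, which pays an extra $\ln n$. Once this mixture Catoni risk bound is in hand, the remainder is the bookkeeping described above — substitution into Theorem~\ref{thm:pacbayesdabound_multi}, the $\ln\tfrac{3}{\delta}$ rescalings, and the regrouping of the KL coefficients — identical in form to the single-source proof.
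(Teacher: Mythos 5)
Your proposal is correct and follows essentially the same route as the paper: substitute into Theorem~\ref{thm:pacbayesdabound_multi} the Catoni risk bound (Theorem~\ref{thm:pacbayescatoni}, applied to the $v$-weighted mixture risk) with confidence $\tfrac{\delta}{3}$ and the divergence bound of Theorem~\ref{thm:bound_dis_rho_multi} with confidence $\tfrac{2\delta}{3}$, then union-bound. The only presentational difference is that the paper simply cites Theorem~\ref{thm:pacbayescatoni} for the mixture risk---implicitly viewing the $v$-weighted $\zo$ loss on a tuple drawn from $P_{S_1}\times\cdots\times P_{S_n}$ as a $[0,1]$-valued loss, a device made explicit in Appendix~\ref{proof:multi_vq_pascal}---which is exactly what your exponential-moment/factorization sketch justifies.
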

\begin{proof}
In Theorem~\ref{thm:pacbayesdabound_multi}, replace $\RSv(G_\posterior)$ and $\des(\multiD,\DT)$ by their upper bound, obtained from Theorem~\ref{thm:pacbayescatoni} and Theorem~\ref{thm:bound_dis_rho_multi}, with $\delta$ chosen respectively as $\frac{\delta}{3}$ and $\frac{2\delta}{3}$.
\end{proof}
Theorem~\ref{theo:pacbayesdabound_catoni_bis_multi} above is a generalization of Theorem~\ref{theo:pacbayesdabound_catoni_bis}. It is straightforward to generalize Theorems~\ref{theo:pacbayesdabound_bis_seeger} and~\ref{theo:pacbayesdabound_bis} as well to the multisource setting.\\

It is important to point out that the above theorem, which naturally generalizes our one-source domain analysis, supposes that the distribution $v$ over $\multiP$ is fixed (or known).
However, we can prove generalization bounds that involve $v$ given a prior distribution $u$ over $\multiP$.
On the one hand, it is possible to derive a result for a distribution $\posterior$ on $\Hcal$ fixed.
On the other hand, such a result can be also derive on $v$ and $\posterior$ at the same time.
These two results can be helpful to derive another kind of approach, and we detail and discuss these bounds in the in Section~\ref{sec:discussion_multisource}.

\subsection{\PBDA for Multisource Domain Adaptation}

Regarding the results of Section~\ref{sec:multisource}, optimizing the PAC-Bayesian multisource domain adaptation bounds of Theorem~\ref{theo:pacbayesdabound_catoni_bis_multi} is equivalent to minimize the following trade-off
$$
C\,m\, 
\RSv(G_{\posterior_\wb}) 
+  A \,m\, \desw(\multiS,T) +  \KL(\posterior_\wb \| \prior_\mathbf{0})\,,
$$
where $$\desw(\multiS,T)  \,=\,  \Big| \,  
\RSv(G_{\posterior_\wb},G_{\posterior_\wb})  -      \RT(G_{\posterior_\wb},G_{\posterior_\wb})\,\Big|,$$
and $\multiS = \{\Sj\}_{j=1}^n = \left\{ \{ (\xbf^s_{ij},y^s_{ij}\}_{i=1}^m \right\}_{j=1}$ are the $n$ source samples coming from the mixture of source domains $\multiP$, and $T= \{(\xb^t_i)\}_{i=1}^m$ is the target sample.
Given the vectors of weights $\vbf=\{v(\PSj)\}_{j=1}^n$ over the source domains, finding the optimal $\posterior_\wb$ is then equivalent to find the vector $\wb$ that minimizes
\begin{equation*}
C \sum_{j=1}^n \sum_{i=1}^m  v(\PSj)\, \Phi\! \!  \left( \!  y^s_{ij} \frac{\wb \cdot \xb^s_{ij}}{\|\xb^s_{ij}\|}  \right)   +  
A  \left| \sum_{i=1}^m  \left[\sum_{j=1}^n  v(\PSj)\, \Phidis\!\!  \left(  \frac{\wb \cdot \xb^s_{ij}}{\|\xb^s_{ij}\|}  \right)    -  \Phidis\! \!  \left(  \frac{\wb \cdot \xb^t_i}{\|\xb^t_i\|}  \right) \right] \right|
  +   \frac{\|\wb\|^2}{2}.
\end{equation*}
Note that if $v$ is a uniform distribution, \emph{i.e.}, every source domain is equally probable, one can solve the above optimization problem using the learning algorithm \PBDA of Section~\ref{sec:pbda}, with $S := \bigcup_{j=1}^n S_j$ as the source sample. In Section~\ref{sec:discussion_multisource}, we discuss the possibility of creating other kinds of learning algorithms, namely by learning $v$, the weights of source distributions.


\section{Discussions}
\label{sec:discussions}
\label{sec:discussion}

In this section, we discuss two points related to this paper.
Firstly, we present two other results in multisource domain adaptation that lead to open-questions related to the deviation of new multisource algorithms.
Secondly, we point out the differences between our new version of the PAC-Bayesian domain adaptation bound (Theorem~\ref{thm:pacbayesdabound}) and the version proposed in \citet{pbda}.

\subsection{Other Results for Multiple Source Domain Adaptation}
\label{sec:discussion_multisource}

In Section~\ref{sec:multisource}, we studied multisource domain adaptation when we suppose that we know the distribution $v$ over $\multiP$.
However, this ideal situation cannot be always verified. Then either one can fix $v$ as the uniform distribution, or one can learn $v$ given a prior distribution $u$ on $\multiP$.
This latter point can be justified by the two following theorems.

Firstly, we can prove a  bound similar to Theorem~\ref{theo:pacbayesdabound_catoni_bis_multi}, but applied on the distribution $v$ on the source domains instead of the distribution $\posterior$ on $\Hcal$.
\begin{theorem}
 \label{theo:pacbayesdabound_catoni_bis_multi_v}
 For any domains $\{\PSj\}_{j=1}^n$ and $\PT$ (respectively with marginals $\{\DS\}_{j=1}^n$ and $\DT$) over $X\times Y$, any prior distribution $u$ over $\{\PSj\}_{j=1}^n$,  and for any set $\Hcal$ of hypotheses, for any fixed distribution\footnote{To avoid confusion with $\posterior$ that we usually want to learn, we denote this fixed distribution $\prior$.} $\prior$ over $\Hcal$, any $\delta\in(0,1]$,  with a probability at least $1-\delta$ over the choice of $\multiS\sim(\multiP)^m$ and $T\sim (D_T)^{m}$, for every $v$ over  $\{\PSj\}_{j=1}^n$, we have
 \begin{align*}
\RPT(G_\prior)  
\ \leq\  c'\,\RSv(G_\prior) +  \alpha' \,\tfrac{1}{2}\des(\multiS,T) + 
  \left( \frac{c'}{c} + \frac{\alpha'}{2\,\alpha} \right)  \frac{\KL(v\|u)+\ln\frac{3}{\delta}}{m}
   + \lambda_\posterior^{v} + \tfrac{1}{2}(\alpha'\!-\!1)
   \,,
 \end{align*}
where $\lambda_\posterior^{v}$ is defined by Equation~\eqref{eq:lambda_rho_n}, 
 and where
$\displaystyle c'\eqdef\frac{c}{1 -e^{-c}}$ \, and \, $\displaystyle \alpha'\eqdef \frac{2\alpha}{1 -e^{-\alpha}}$\,.
\end{theorem}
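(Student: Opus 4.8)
The plan is to replicate the proof of Theorem~\ref{theo:pacbayesdabound_catoni_bis_multi}, but with the PAC-Bayesian averaging carried over the mixture weights $v$ (with reference measure $u$) instead of over the posterior $\posterior$ on $\Hcal$. The starting point is the deterministic decomposition of Theorem~\ref{thm:pacbayesdabound_multi}, specialized to the fixed distribution $\prior$:
\begin{equation*}
\RPT(G_\prior)\ \leq\ \RPv(G_\prior) + \tfrac12\des(\multiD,\DT) + \lambda_\posterior^{v}\,.
\end{equation*}
This isolates the two sample-dependent quantities $\RPv(G_\prior)=\esp{\PSj\sim v}\RPSj(G_\prior)$ and $\des(\multiD,\DT)$, each of which is a $v$-average of per-domain quantities evaluated at the \emph{fixed} Gibbs classifier $G_\prior$. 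I would then bound these two terms by $v$-analogues of Theorem~\ref{thm:pacbayescatoni} and Theorem~\ref{thm:bound_dis_rho_multi}, carrying the undisturbed deviation term $\lambda_\posterior^{v}$ along, and finally substitute both back with a union bound.

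The key observation that makes the $v$-PAC-Bayesian bounds go through is the following. Because $\prior$ is fixed, for each source index $j$ the empirical quantities $\RSj(G_\prior)$ and $\RSj(G_\prior,G_\prior)$ are simple averages of $m$ i.i.d.\ $[0,1]$-valued variables drawn from $\PSj$. Hence the source index $j$ can play the role of the ``hypothesis'', with $u$ as prior and $v$ as posterior: the exponential-moment estimate $\esp{S_j}\big[\,e^{m\,\cdots}\,\big]\le\text{const}$ required by Catoni's argument holds domain by domain, and since the samples $\{S_j\}_{j=1}^n$ are mutually independent the outer expectation $\esp{j\sim u}$ passes through the product measure. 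The change of measure from $u$ to $v$ then produces a single $\KL(v\|u)$ term, rather than the doubled $2\,\KL(\posterior\|\prior)$ of Theorem~\ref{thm:bound_dis_rho_multi}, which there arose from averaging over \emph{pairs} $\posterior^2$. Concretely I would obtain, each valid simultaneously for all $v$,
\begin{equation*}
\RPv(G_\prior)\ \leq\ \tfrac{c}{1-e^{-c}}\left[\RSv(G_\prior)+\tfrac{\KL(v\|u)+\ln\frac1{\delta_1}}{m\,c}\right],
\end{equation*}
from the risk bound, together with a companion Catoni-type bound on $\des(\multiD,\DT)$ in terms of $\des(\multiS,T)$, $\KL(v\|u)$ and $\alpha$.

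Substituting these two bounds into the decomposition, taking $\delta_1=\tfrac\delta3$ and $\delta_2=\tfrac{2\delta}3$ and applying a union bound, yields the statement: the KL terms collect into the coefficient $\tfrac{c'}{c}+\tfrac{\alpha'}{2\alpha}$, and the disagreement bound contributes the additive $\tfrac12(\alpha'-1)$, with $c'=\tfrac{c}{1-e^{-c}}$ and $\alpha'=\tfrac{2\alpha}{1-e^{-\alpha}}$. The main obstacle is pinning down the constants of the disagreement bound exactly. Unlike Theorem~\ref{thm:bound_dis_rho_multi}, here only the source disagreement $\esp{\DSj\sim v}\RDSj(G_\prior,G_\prior)$ varies with the posterior $v$, whereas the target disagreement $\RDT(G_\prior,G_\prior)$ is a fixed number estimated from $T$; one must track carefully how the rescaling of the signed disagreement into $[0,1]$ (the $\tfrac{\cdot+1}{2}$ map) interacts with the Catoni parameter once $\KL$ is no longer doubled. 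This bookkeeping is precisely what turns the $1-e^{-2\alpha}$ and the factor $\tfrac{\alpha'}{\alpha}$ of the posterior-over-$\Hcal$ version into the $1-e^{-\alpha}$ and $\tfrac{\alpha'}{2\alpha}$ stated here.
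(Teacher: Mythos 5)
Your proposal matches the paper's own proof: the paper likewise starts from the deterministic decomposition of Theorem~\ref{thm:pacbayesdabound_multi} at the fixed $\prior$, bounds $\RPv(G_\prior)$ by applying Theorem~\ref{thm:pacbayescatoni} with the source domains playing the role of hypotheses ($u$ as prior, $v$ as posterior, which is exactly why a single, un-doubled $\KL(v\|u)$ appears), bounds $\des(\multiD,\DT)$ by a $v$-version Catoni-type disagreement bound (its Theorem~\ref{thm:bound_dis_rho_multi_v}, proved by rescaling the two signed differences into $[0,1]$ as you describe), and concludes with the same $\tfrac{\delta}{3}$/$\tfrac{2\delta}{3}$ union bound. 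Your closing hedge about the exact constants is reasonable rather than a gap: the paper leaves that bookkeeping implicit too, and its own statements are not fully consistent on this point (Theorem~\ref{thm:bound_dis_rho_multi_v} displays $1-e^{-2\alpha}$ and a $\KL(v\|u)/(n\times\alpha)$ term, versus the $1-e^{-\alpha}$ and per-$m$ rate in Theorem~\ref{theo:pacbayesdabound_catoni_bis_multi_v}).
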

\begin{proof}Deferred to Appendix~\ref{proof:multi_v}.
\end{proof}

Secondly, it is possible to prove the same kind of generalization bounds for the distribution $v$ over the source domains and the distribution $\posterior$ over $\Hcal$ at the same time.
This result is stated in the next theorem.
 \begin{theorem}
  \label{theo:pacbayesdabound_catoni_bis_multi_vq}
  For any domains $\{\PSj\}_{j=1}^n$ and $\PT$ (respectively with marginals $\{\DS\}_{j=1}^n$ and $\DT$) over $X\times Y$, any prior distribution $u$ over $\{\PSj\}_{j=1}^n$,  and for any set $\Hcal$ of hypotheses, for any prior distribution $\prior$ over $\Hcal$, any $\delta\in(0,1]$,  with a probability at least $1-\delta$ over the choice of $\multiS\sim(\multiP)^m$ and $T\sim (D_T)^{m}$, for every $v$ over  $\{\PSj\}_{j=1}^n$, and every $\posterior$ over $\Hcal$,  we have
   \begin{align*} 
  \RPT(G_\posterior)  
  \ \leq\  c'\,  \RSv(\GQ)  +  \alpha'\, \tfrac{1}{2} \des(\multiS,T) + 
    \left( \frac{c'}{c} + \frac{\alpha'}{\alpha} \right)  \frac{\KL(\posterior\|\prior)+\KL(v\|u)+\ln\frac{3}{\delta}}{m} \\
     {}+ \lambda_\posterior^v + \tfrac{1}{2} (\alpha'\!-\! 1)
     \,,
   \end{align*}
 \end{theorem}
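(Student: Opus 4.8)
The plan is to reproduce, almost verbatim, the argument behind Theorem~\ref{theo:pacbayesdabound_catoni_bis_multi}, but now treating the source-domain index as a \emph{second} PAC-Bayesian variable governed by the prior $u$ and the learnable posterior $v$. The starting point is the deterministic multisource decomposition of Theorem~\ref{thm:pacbayesdabound_multi}, which already holds for every $\posterior$ and every $v$:
\[
\RPT(\GQ)\ \leq\ \RPv(\GQ)\,+\,\tfrac12\des(\multiD,\DT)\,+\,\lambda_\posterior^{v}.
\]
It then suffices to upper bound $\RPv(\GQ)$ and $\des(\multiD,\DT)$ by data-dependent Catoni-type quantities that hold \emph{simultaneously} over all pairs $(\posterior,v)$, and to glue them with a union bound ($\delta\to\delta/3$ for the risk, $\delta\to 2\delta/3$ for the disagreement), exactly as for Theorem~\ref{theo:pacbayesdabound_catoni_bis_multi}.

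The core of the work is therefore to upgrade the single-$v$ estimates (Theorem~\ref{thm:pacbayescatoni} for the risk and Theorem~\ref{thm:bound_dis_rho_multi} for the disagreement) into bounds uniform in $v$ as well. For this I would regard the pair $(h,\PSj)$ (resp. the triple $(h,h',\PSj)$) as a single object drawn from the product distribution $\posterior\times v$ (resp. $\posterior^2\times v$), with prior $\prior\times u$ (resp. $\prior^2\times u$). Tensorization of the Kullback-Leibler divergence then gives
\[
\KL(\posterior\times v\,\|\,\prior\times u)=\KL(\posterior\|\prior)+\KL(v\|u),\qquad
\KL(\posterior^2\times v\,\|\,\prior^2\times u)=2\KL(\posterior\|\prior)+\KL(v\|u),
\]
so the change-of-measure (Donsker--Varadhan) step used in the proofs of Theorems~\ref{thm:pacbayescatoni} and~\ref{thm:bound_dis_rho_multi} produces precisely these combined complexity terms.

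The delicate point is the exponential-moment estimate, which must be taken under the \emph{prior} $\prior\times u$ before the samples are seen. Here I would exploit that the $n$ source samples $S_1,\dots,S_n$ are mutually independent, so the moment generating function factorizes,
\[
\esp{\multiS}\,\esp{(h,\PSj)\sim\prior\times u}\,e^{\,f(h,\PSj,S_j)}
\ =\ \esp{\PSj\sim u}\,\esp{h\sim\prior}\,\esp{S_j\sim(\PSj)^m}\,e^{\,f(h,\PSj,S_j)},
\]
and each innermost expectation is exactly the single-domain moment already controlled in those proofs; hence the whole quantity is bounded by the same constant. Applying the change of measure with the product posterior $\posterior\times v$, then passing from the $(h,\PSj)$-average to $\RPv,\RSv$ (resp. the $(h,h',\PSj)$-average to $\des(\multiD,\DT),\des(\multiS,T)$) by Jensen's inequality, yields the two simultaneous bounds
\[
\RPv(\GQ)\leq c'\Big[\RSv(\GQ)+\tfrac{\KL(\posterior\|\prior)+\KL(v\|u)+\ln\frac3\delta}{mc}\Big],
\]
\[
\des(\multiD,\DT)\leq\alpha'\Big[\des(\multiS,T)+\tfrac{2\KL(\posterior\|\prior)+\KL(v\|u)+\ln\frac3\delta}{m\alpha}+1\Big]-1,
\]
with $c'=\frac{c}{1-e^{-c}}$ and $\alpha'=\frac{2\alpha}{1-e^{-2\alpha}}$.

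Finally, I would substitute both bounds into the decomposition and collect the terms. Halving the disagreement bound and using $2\KL(\posterior\|\prior)+\KL(v\|u)+\ln\frac3\delta\leq 2\big(\KL(\posterior\|\prior)+\KL(v\|u)+\ln\frac3\delta\big)$ (valid since $\KL(v\|u)\geq0$ and $\delta\leq1$) lets the $\KL(v\|u)$ and $\KL(\posterior\|\prior)$ terms share the \emph{common} factor $\big(\frac{c'}{c}+\frac{\alpha'}{\alpha}\big)\frac1m$, while the disagreement leaves behind the constant $\tfrac12(\alpha'-1)$; adding the carried-through $\lambda_\posterior^{v}$ reproduces the stated inequality. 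I expect the main obstacle to be exactly the simultaneous control over $v$: one must verify that the moment bound is genuinely taken under $u$ (so that the independence of the $n$ source samples can factorize it and reduce it to the single-domain case) and only afterwards lifted to an arbitrary data-dependent $v$; the pair structure $(h,h')$ of the disagreement, which is what makes the weighting $2\KL(\posterior\|\prior)+\KL(v\|u)$ asymmetric, is the point where the bookkeeping is easiest to get wrong.
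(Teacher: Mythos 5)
Your proof is correct, and it rests on the same key idea as the paper's own argument: absorb the domain index into the randomized predictor, endow the augmented object with the product posterior $\posterior\times v$ and prior $\prior\times u$, and use additivity of the Kullback--Leibler divergence, $\KL(\posterior\times v\,\|\,\prior\times u)=\KL(\posterior\|\prior)+\KL(v\|u)$. The only real difference is where the composition happens: the paper defines abstract classifiers acting on $n$-tuples drawn from $\Pcal=P_{S_1}\times\cdots\times P_{S_n}$ and invokes Theorem~\ref{theo:pacbayesdabound_catoni_bis} wholesale for the augmented posterior $\posterior_v$, whereas you re-derive the two ingredients (the Catoni risk bound of Theorem~\ref{thm:pacbayescatoni} and the disagreement bound of Theorem~\ref{thm:bound_dis_rho_multi}) uniformly in the pair $(\posterior,v)$, and then glue them through the decomposition of Theorem~\ref{thm:pacbayesdabound_multi} with the $\delta/3$, $2\delta/3$ split---exactly the composition used for Theorem~\ref{theo:pacbayesdabound_catoni_bis}. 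Your explicit handling of the disagreement via triples $(h,h',\PSj)\sim\posterior^2\times v$, with a \emph{single shared} domain index for the pair, is in fact the cleaner bookkeeping: in the paper's terse reduction, two independent draws from $\posterior_v$ would carry two independent domain indices, which would not reproduce $\des(\multiD,\DT)$, so the shared-index coupling you spell out is implicitly needed there as well. A side benefit is that your intermediate complexity term $2\,\KL(\posterior\|\prior)+\KL(v\|u)$ is slightly tighter than the $2\,\KL(\posterior\|\prior)+2\,\KL(v\|u)$ one gets by substituting $\posterior_v$ directly into Theorem~\ref{theo:pacbayesdabound_catoni_bis}; after your final relaxation to $2\big(\KL(\posterior\|\prior)+\KL(v\|u)+\ln\tfrac{3}{\delta}\big)$, both routes yield the stated bound.
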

\begin{proof}Deferred to Appendix~\ref{proof:multi_vq_pascal}. 
\end{proof}

These two theorems open the door to the conception of two different algorithms for PAC-Bayesian multisource domain adaptation when we desire to learn both the distributions $v$ on $\multiP$ and $\posterior$ on $\Hcal$.
On the one hand, Theorem~\ref{theo:pacbayesdabound_catoni_bis_multi_v} suggests that one could derive a  two-step algorithm for PAC-Bayesian multisource domain adaptation, according the following principle:
\begin{enumerate}[(i)]
\item Given a fixed distribution $\prior$ over $\Hcal$, we can learn $v$ by minimizing a trade-off between $\RSv(G_\prior)$, $\des(\multiS,T)$ and $\KL(v\|u)$. 
\item Then, for learning $\posterior$, we simply have to optimize \PBDA given this learned $v$.
\end{enumerate}
On the other hand, Theorem~\ref{theo:pacbayesdabound_catoni_bis_multi_vq} implies that we can jointly learn $v$ and $\posterior$ by optimizing the trade-off between $\RSv(\GQ)$, $\des(\multiS,T)$, $\KL(v\|u)$ and $\KL(\posterior\|\prior)$.
This leads to exciting research directions.

\subsection{Comparison with the first PAC-Bayesian domain adaptation bound}
\label{sec:comparaison}

As said in Section~\ref{sec:our_bound}, our PAC-Bayesian domain adaptation bound (of Theorem~\ref{thm:pacbayesdabound}) improves the one provided in \citet{pbda}.
We recall that our bound is expressed as follows.
For every distribution $\posterior$ on $\Hcal$, we have
\begin{align}
\label{eq:new}
\RPT(G_\posterior) \leq   \RPS(G_\posterior) +  \frac{1}{2}\des(\DS,\DT) +  \underbrace{\Big|\, \ePT(G_\posterior, G_\posterior) - \ePS(G_\posterior,G_\posterior) \,\Big|}_{\lambda_{\posterior}}\,.
\end{align}
\citet{pbda} proved the next result.\footnote{The proof of Equation~\eqref{eq:old} relies on several triangle inequalities and on an artificial introduction of a source error term $\RPS(\GQ)$ (see \citet{pbda} for more technical details). Therefore, the proof of Equation~\eqref{eq:new} seems simpler as it is only based the rewriting of the risk  introduced by Equation~\eqref{eq:rde}.}
For every distribution $\posterior$ on $\Hcal$, we have
\begin{align}
\label{eq:old}
\RPT(\GQ) \leq  \RPS(\GQ) +  \des(\DS,\DT) + \underbrace{\RPT(G_{\posterior_T^*}) + \RDT(\GQ,G_{\posterior_T^*}) + \RDS(\GQ,G_{\posterior_T^*})}_{\lambda_{\posterior,\posterior_{\mbox{\tiny $T$}}^*}}\,, 
\end{align}
where $\posterior_{T}^{*} = \argmindevant{\posterior} \RPT(\GQ)$ is the best distribution on the target domain.\\

The improvement of Equation~\eqref{eq:new} over Equation~\eqref{eq:old}  relies on two main points.
On the one hand, our new result contains only the half of $\des(\DS,\DT)$. 
On the other hand, contrary to $\lambda_{\posterior,\posterior_{\mbox{\tiny $T$}}^*}$ of Equation~\eqref{eq:old}, the term $\lambda_{\posterior}$ of Equation~\eqref{eq:new}
does not depend anymore on the best $\posterior_T^*$ on the target domain. This implies that our new bound is not degenerated when the two distributions $\PS$ and $\PT$ are equal (or very close). Conversely, when $\PS = \PT$, the bound of Equation~\eqref{eq:old} gives
\begin{align*}
\RPT(\GQ) \leq  \RPT(\GQ) + \RPT(G_{\posterior_T^*}) + 2 \RDT(\GQ,G_{\posterior_T^*})\,, 
\end{align*}
which is at least $2\RPT(G_{\posterior_T^*})$. Moreover, the term $2 \RDT(\GQ,G_{\posterior_T^*})$ is greater than zero for any  $\posterior$  when the supports of  $\posterior$ and  $\posterior_T^*$ over $\Hcal$ include at least two different classifiers. 


Finally, note that these improvements do not change the form and the philosophy of the PAC-Bayesian theorems of Section~\ref{sec:3PB}, and then of the algorithm \PBDA of Section~\ref{sec:pbda}. Indeed, the only differences stand in $\tfrac{1}{2}\des(\DS,\DT)$ and in the value of $\lambda_\posterior$.


\section{Conclusion and Future Work}
\label{sec:conclu}
In this paper, we define a domain divergence pseudometric that is based on an average disagreement over a set of classifiers, along with consistency bounds for justifying its estimation from samples.
This measure helps us to derive a first PAC-Bayesian bound for domain adaptation.
Moreover, from this bound  we design a well-founded and competitive algorithm (\PBDA) that can jointly optimize the multiple trade-offs implied by the bound for  linear classifiers. 
In addition, we generalize our analysis to multisource domain adaptation, allowing us to take into account information from different source domains according to their relations to the target one.

We think that this PAC-Bayesian analysis opens the door to develop new domain adaptation methods  by making use of the possibilities offered by the PAC-Bayesian theory, and gives rise to  new interesting directions of research, among which the following ones.

Firstly, the PAC-Bayesian approach allows one to deal with an {\it a priori} belief on what are the best classifiers; in this paper we opted for a non-informative prior that is a Gaussian centered at the origin of the linear classifier space. The question of finding a relevant prior in a domain adaptation situation is an exciting direction which could also be exploited when some few target labels are available.
Moreover, as pointed out by \citet{pentina14}, this notion of prior distribution could modelize information learned from previous tasks. This suggests that we can extend our multisource analysis to issues related to lifelong learning where the objective is
to perform well on future tasks, for which so far no data has been observed \citep{ThrunM95}.

Another promising issue is to address the problem of the hyperparameter selection. Indeed, the adaptation capability of our algorithm \PBDA could be even put further with a specific PAC-Bayesian validation procedure. An idea would be to propose a kind of (reverse) validation technique that takes into account some particular prior distributions. Another possible solution could be to explicitly control the neglected term in the domain adaptation bound.
This is also linked with model selection for domain adaptation tasks.

Besides, deriving a result similar to Equation~\eqref{eq:C-bound} (the $C$-bound) for domain adaptation could be of high interest. 
Indeed, such an approach considers the first two moments of the margin of the weighted majority vote. This could help us to take into account both  a kind of margin information over unlabeled data and the distribution disagreement (these two elements seem of crucial importance in domain adaptation).

\subsubsection*{Acknowledgments}
This work was supported in part by the French projects VideoSense {\scriptsize ANR-09-CORD-026} and  LAMPADA {\scriptsize ANR-09-EMER-007-02},  and in part by NSERC discovery grant {\scriptsize 262067}, and by the European Research Concil under the European Unions
Seventh Framework Programme (FP7/2007-2013)/ERC grant agreement no 308036.
 Computations were performed on Compute Canada and Calcul Qu\'ebec infrastructures (founded by CFI, NSERC and FRQ). We thank Christoph Lampert and Anastasia Pentina for helpful discussions.\\
 A part of the work of this paper was carried out while E. Morvant was affiliated with IST Austria.
This work was carried out while P. Germain was affiliated with D\'epartement d'informatique et de g\'enie logiciel,
Universit\'e Laval, Qu\'ebec, Canada.

\appendix

\appendix
\section{Some Tools} \label{section:tools}


\begin{lemma}[Markov's inequality]
\label{theo:markov}
Let $Z$ be a random variable and $t\geq 0$,  then \\[-2mm]
$$
P{(|Z|\geq t)} \ \leq \ \esp{}(|Z|)\,/\,t\,.
$$
\end{lemma}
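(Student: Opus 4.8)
The plan is to prove this classical inequality via the standard indicator-bounding argument. First I would introduce the indicator random variable $\I[\,|Z|\geq t\,]$ and establish the pointwise inequality
$$t\cdot\I[\,|Z|\geq t\,]\ \leq\ |Z|,$$
which holds for every outcome: when $|Z|\geq t$ the indicator equals $1$ and the left-hand side reduces to $t\leq|Z|$, whereas when $|Z|<t$ the left-hand side is $0$ and the inequality holds trivially because $|Z|\geq 0$.

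Next I would take the expectation of both sides and invoke the monotonicity and linearity of the expectation operator, together with the defining identity $\esp{}\big(\I[\,|Z|\geq t\,]\big)=P(|Z|\geq t)$. This yields
$$t\cdot P(|Z|\geq t)\ \leq\ \esp{}(|Z|).$$
Dividing both sides by $t$ (assuming $t>0$) gives the announced bound; the degenerate case $t=0$ is immediate since $P(|Z|\geq 0)=1$ while the right-hand side is then unbounded, so the statement is vacuous.

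The main obstacle is essentially nonexistent: Markov's inequality is a foundational tool, and the only points requiring minimal care are the treatment of the boundary value $t=0$ and the tacit assumption that $|Z|$ is integrable (otherwise $\esp{}(|Z|)=+\infty$ and the bound holds trivially). The entire argument rests on the single pointwise comparison displayed above, after which the result follows immediately by taking expectations and rescaling.
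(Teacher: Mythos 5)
Your proof is correct: the paper states this lemma without proof, as one of the standard tools collected in Appendix~\ref{section:tools}, and your indicator-bounding argument $t\cdot\I[\,|Z|\geq t\,]\leq |Z|$ followed by taking expectations is exactly the canonical derivation one would supply. Your side remarks on the boundary case $t=0$ and on non-integrable $|Z|$ (where the bound holds vacuously) are also handled appropriately.
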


\begin{lemma}[Jensen's inequality]
\label{theo:jensen}
Let $Z$ be an integra\-ble real-valued random variable and $g(\cdot)$ any function.\\

\noindent
If $g(\cdot)$ is convex, then
$$
\quad g(\esp{}[Z])\ \leq\ \esp{}[g(Z)]\,.
$$
If $g(\cdot)$ is concave, then
$$
\quad g(\esp{}[Z])\ \geq\ \esp{}[g(Z)]\,.
$$
\end{lemma}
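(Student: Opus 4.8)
The plan is to derive the convex case from the supporting-line property of convex functions and then obtain the concave case by a sign flip. Throughout, integrability of $Z$ guarantees that $\esp{}[Z]$ exists and is finite, so that $g$ is evaluated at a legitimate point of its domain.

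First I would invoke the fact that a convex function $g$ admits at every interior point of its domain a supporting line: there exists a real number $a$ (a subgradient of $g$ at the chosen point $x_0$) such that
\[
g(z)\ \geq\ g(x_0) + a\,(z - x_0) \quad\text{for all } z .
\]
I would apply this with $x_0 = \esp{}[Z]$. Substituting $z = Z$ gives the pointwise (almost sure) inequality $g(Z) \geq g(\esp{}[Z]) + a\,(Z - \esp{}[Z])$, and taking expectations on both sides, using linearity and monotonicity of the expectation, yields
\[
\esp{}[g(Z)]\ \geq\ g(\esp{}[Z]) + a\big(\esp{}[Z] - \esp{}[Z]\big)\ =\ g(\esp{}[Z]),
\]
which is exactly the claimed inequality for convex $g$.

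For the concave case, I would note that $g$ being concave is equivalent to $-g$ being convex. Applying the inequality just established to $-g$ gives $-g(\esp{}[Z]) \leq \esp{}[-g(Z)] = -\esp{}[g(Z)]$, and multiplying through by $-1$ reverses the inequality to yield $g(\esp{}[Z]) \geq \esp{}[g(Z)]$, as desired.

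The main obstacle, and the only genuine content of the argument, is establishing the existence of the supporting line for an arbitrary convex function that need not be differentiable. This rests on the monotonicity of the difference quotients of a convex function, which guarantees that the one-sided derivatives at an interior point are finite and that the right derivative does not exceed the left derivative, so that any slope $a$ sandwiched between them defines a valid supporting line. A minor secondary point is to note that if $\esp{}[g(Z)]$ fails to be finite it can only equal $+\infty$ (in the convex case), in which case the inequality holds trivially; this keeps the statement meaningful without an extra integrability hypothesis on $g(Z)$.
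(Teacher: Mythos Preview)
The paper does not actually prove this lemma: it is listed in Appendix~\ref{section:tools} among standard tools (Markov's inequality, Maurer's lemma, the change-of-measure inequality) that are simply stated and then invoked in the proofs of the PAC-Bayesian theorems. So there is no paper proof to compare against.

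Your argument is the standard and correct one: use a supporting line of the convex function at the point $\esp{}[Z]$, take expectations, and handle the concave case by negation. One small slip: for a convex function the left derivative does not exceed the right derivative at an interior point, i.e.\ $g'_{-}(x_0)\le g'_{+}(x_0)$, not the other way around as you wrote; any $a$ in the interval $[g'_{-}(x_0),\,g'_{+}(x_0)]$ then serves as a valid subgradient. This does not affect the rest of your reasoning.
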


\begin{lemma}[\citet{Maurer04}]
\label{lem:maurer2}
Let $X = ( X_1,\dots,X_m)$ be a vector of {\it i.i.d.} random variables, $0\leq X_i\leq 1$, with $\esp{}X_i = \mu$.  Denote $X' = ( X'_1,\dots,X'_m)$, where~$X_i'$ is the unique Bernoulli ($\{0,1\}$-valued) random variable with $\esp{} X_i' = \mu$. If $f:[0,1]^n\rightarrow \mathbb{R}$ is convex, then\\[-2mm]
$$
\esp{} [f(X)] \ \leq \ \esp{} [f(X')]\,.
$$
\end{lemma}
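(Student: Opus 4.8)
The plan is to reduce the $m$-dimensional statement to a one-dimensional comparison and then to replace each coordinate $X_i$ by its Bernoulli surrogate $X_i'$ one at a time, via a telescoping hybrid argument. The only genuinely analytic input is a scalar convexity bound; everything else is bookkeeping with conditional expectations.

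First I would establish the one-dimensional version: if $g:[0,1]\to\R$ is convex, $U$ is any $[0,1]$-valued random variable with $\esp{} U = \mu$, and $U'$ is Bernoulli with $\esp{} U' = \mu$, then $\esp{}[g(U)] \le \esp{}[g(U')]$. This follows from convexity alone: for every $u\in[0,1]$ the chord inequality gives $g(u) \le (1-u)\,g(0) + u\,g(1)$, and taking expectations yields $\esp{}[g(U)] \le (1-\mu)\,g(0)+\mu\,g(1)$. The right-hand side is exactly $\esp{}[g(U')]$, since $U'$ places mass $1-\mu$ on $0$ and $\mu$ on $1$.

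Next I would run the hybrid argument. Define the interpolating vectors $Z^{(k)} = (X_1',\dots,X_k',X_{k+1},\dots,X_m)$ for $k=0,\dots,m$, so that $Z^{(0)}=X$ and $Z^{(m)}=X'$. It suffices to prove $\esp{}[f(Z^{(k-1)})] \le \esp{}[f(Z^{(k)})]$ for each $k$, as chaining these inequalities telescopes to $\esp{}[f(X)] \le \esp{}[f(X')]$. The vectors $Z^{(k-1)}$ and $Z^{(k)}$ differ only in their $k$-th coordinate ($X_k$ versus $X_k'$). Conditioning on the remaining coordinates $(X_1',\dots,X_{k-1}',X_{k+1},\dots,X_m)$ and invoking joint convexity of $f$, which makes the section $t \mapsto f(x_1',\dots,x_{k-1}',t,x_{k+1},\dots,x_m)$ a convex function of the single variable $t\in[0,1]$, the scalar result applies to this section using that $X_k$ and $X_k'$ both have mean $\mu$ and are independent of the conditioning variables. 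Taking the outer expectation over the conditioning coordinates produces the desired single-step inequality.

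The main point to handle carefully is the conditioning and independence bookkeeping: I must ensure that conditioning on the other coordinates leaves $X_k$ with its original $[0,1]$-distribution of mean $\mu$ and leaves $X_k'$ Bernoulli of mean $\mu$, so that the one-dimensional comparison can be applied inside the conditional expectation. The i.i.d. (hence independence) hypothesis is precisely what guarantees this. A secondary step worth stating explicitly is that joint convexity of $f$ implies convexity in each coordinate separately, which is what licenses applying the scalar lemma to each section of $f$.
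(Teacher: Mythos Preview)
The paper does not prove this lemma; it is quoted from \citet{Maurer04} and used as a black box in Appendix~\ref{section:tools}. Your argument is correct and is in fact the standard proof: the scalar chord inequality $g(u)\le (1-u)g(0)+u\,g(1)$ followed by a one-coordinate-at-a-time replacement, using independence so that conditioning on the other coordinates leaves the marginal of the active coordinate unchanged. Two small remarks: (i) for the hybrid vectors $Z^{(k)}$ to make sense you implicitly place $X$ and $X'$ on a common probability space with all $2m$ variables mutually independent, which is harmless since only the marginal laws of $X$ and $X'$ enter the conclusion; (ii) the identical-distribution part of the i.i.d.\ hypothesis is not actually used---independence alone suffices, with each $X_i'$ taken Bernoulli of mean $\esp{}X_i$.
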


\begin{lemma}[from Inequalities (1) and (2) of \citet{Maurer04}]
\label{lem:maurer}
Let $m \geq  8$, and  $X = ( X_1,\dots,X_m)$ be a vector of {\it i.i.d.} random variables, $0\leq X_i\leq 1$. 
 Then 
$$
\sqrt{m}\ \leq\ \esp{} \exp \left[m\, \kl\left(\frac{1}{m}\sum_{i=1}^n X_i\,\Big\|\,\esp{}[X_i]\right)\right]\ \leq\ 2\sqrt{m}\,.
$$ 
\end{lemma}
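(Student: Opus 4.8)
The plan is to reduce the general claim to the Bernoulli case, exploit an exact algebraic cancellation, and close with Stirling-type estimates. First I would note that $t\mapsto \kl(t\,\|\,\mu)$ is convex on $[0,1]$ (its second derivative is $\tfrac1t+\tfrac1{1-t}>0$) and that $\tfrac1m\sum_i X_i$ is affine in $X$, so $X\mapsto \kl(\tfrac1m\sum_iX_i\,\|\,\mu)$ is convex; composing with the nondecreasing convex map $u\mapsto e^{mu}$ keeps convexity. Hence $X\mapsto\exp[m\,\kl(\tfrac1m\sum_iX_i\,\|\,\mu)]$ is convex on $[0,1]^m$, and Lemma~\ref{lem:maurer2} gives
$$
\esp{}\exp\!\left[m\,\kl\!\left(\tfrac{1}{m}\sum_{i=1}^{m} X_i \,\Big\|\, \mu\right)\right]
\ \le\
\esp{}\exp\!\left[m\,\kl\!\left(\tfrac{1}{m}\sum_{i=1}^{m} X_i' \,\Big\|\, \mu\right)\right],
$$
where each $X_i'$ is Bernoulli with the same mean $\mu=\esp{}[X_i]$. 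This converts the upper bound into a statement about a single binomial sum.

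The crucial observation is that the Bernoulli expectation does not depend on $\mu$. Writing $k=\sum_{i=1}^m X_i'\sim\mathrm{Bin}(m,\mu)$ and expanding $m\,\kl(k/m\,\|\,\mu)=k\ln\frac{k/m}{\mu}+(m-k)\ln\frac{1-k/m}{1-\mu}$, the factor $\exp[m\,\kl(k/m\,\|\,\mu)]$ cancels exactly against the weight $\mu^{k}(1-\mu)^{m-k}$, leaving
$$
\esp{}\exp\!\left[m\,\kl\!\left(\tfrac{k}{m}\,\Big\|\,\mu\right)\right]
\ =\ \sum_{k=0}^{m}\binom{m}{k}\left(\frac{k}{m}\right)^{k}\left(\frac{m-k}{m}\right)^{m-k}
\ =:\ \xi(m),
$$
a $\mu$-free quantity (here $0^0=1$ handles the $k\in\{0,m\}$ terms). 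It then suffices to prove $\sqrt{m}\le \xi(m)\le 2\sqrt{m}$ for $m\ge 8$: the upper half transfers to arbitrary $X_i\in[0,1]$ through the display above, while $\xi(m)$ is \emph{exactly} the expectation in the extremal Bernoulli configuration, for which the reduction is an equality, so the lower half (Maurer's Inequality~(1)) holds there as well.

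To bound $\xi(m)$ I would apply two-sided Stirling estimates $\sqrt{2\pi n}\,(n/e)^n\le n!\le \sqrt{2\pi n}\,(n/e)^n e^{1/(12n)}$ to the interior terms, which yields $\binom{m}{k}(k/m)^{k}(\tfrac{m-k}{m})^{m-k}\asymp\sqrt{\tfrac{m}{2\pi k(m-k)}}$, and then compare the resulting sum with the integral $\int_0^m\sqrt{\tfrac{m}{2\pi x(m-x)}}\,dx=\sqrt{\pi m/2}$ (evaluated via $x=m\sin^2\theta$). This shows $\xi(m)\approx\sqrt{\pi/2}\,\sqrt{m}\approx 1.25\sqrt{m}$, comfortably inside $[\sqrt{m},2\sqrt{m}]$. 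The main obstacle is purely quantitative: I must control the Stirling error factors and the boundary terms $k\in\{0,m\}$, and perform the sum-to-integral comparison with explicit constants so that both inequalities hold for \emph{every} $m\ge 8$ — the threshold $m\ge 8$ being precisely what makes the crude constants $1$ and $2$ safe, since for small $m$ the leading coefficient $\sqrt{\pi/2}$ alone does not leave enough margin once the error terms are included.
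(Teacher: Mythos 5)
Your reconstruction cannot be compared against an internal argument, because the paper never proves this lemma: it is imported wholesale from \citet{Maurer04} (his Inequalities~(1) and~(2)), with the proof living entirely in that reference. What you have written is, in essence, Maurer's own proof, and its skeleton is sound. The convexity of $t\mapsto\kl(t\,\|\,\mu)$ (second derivative $1/t+1/(1-t)$), composed with the affine empirical mean and the increasing convex map $u\mapsto e^{mu}$, legitimizes the appeal to Lemma~\ref{lem:maurer2}; the exact cancellation of the binomial weight $\mu^k(1-\mu)^{m-k}$ against $e^{m\,\kl(k/m\|\mu)}$, leaving the $\mu$-free quantity $\xi(m)=\sum_{k=0}^{m}\binom{m}{k}(k/m)^k((m-k)/m)^{m-k}$, is precisely the heart of Maurer's computation; and your Stirling-plus-integral estimate $\xi(m)\approx\sqrt{\pi m/2}$ is the right quantitative picture. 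The remaining bookkeeping that you explicitly defer (Stirling error factors $e^{1/(12n)}$, the two boundary terms $k\in\{0,m\}$ each contributing $1$, and the midpoint comparison for the convex summand $\sqrt{m/(2\pi k(m-k))}$) does close at the stated threshold: at $m=8$ one gets roughly $3.55+2\approx 5.58\leq 2\sqrt{8}\approx 5.66$, which is exactly why the constant $2$ requires $m\geq 8$. So as written the proposal is a correct \emph{plan} whose only incompleteness is the unexecuted explicit-constant verification, not any missing idea.

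One point deserves emphasis, and you half-noticed it: the lower bound $\sqrt{m}\leq\esp{}\exp[\,\cdot\,]$ is actually \emph{false} for general $[0,1]$-valued variables. Take $X_i\equiv 1/2$; then the empirical mean always equals $\mu$, the $\kl$ term vanishes identically, and the expectation is $1<\sqrt{m}$. Since the convexity reduction of Lemma~\ref{lem:maurer2} is one-sided, your argument correctly delivers the lower half only in the Bernoulli case, where the reduction is an equality and the expectation is exactly $\xi(m)$. That restriction is not a defect of your proof but of the lemma as transcribed: Maurer states his lower inequality for Bernoulli variables, and the paper's statement overreaches. Fortunately nothing downstream is affected, since the proofs in Appendices~B--D only ever invoke the upper bound $2\sqrt{m}$ inside Markov's inequality; it would be worth recording the Bernoulli restriction on the lower half explicitly.
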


\begin{lemma}[Change of measure inequality] \label{lem:change-measure}
For any set $\Hcal$, for any distributions $\prior$ and $\posterior$ on $\Hcal$, and for any measurable function $\phi:\Hcal \to \mathbb{R}$,  we have
\begin{equation*}
\esp{f\sim \posterior} \phi(f) \ \leq \ \KL(\posterior\|\prior) + \ln \left( \esp{f\sim \prior} e^{\phi(f)} \right) \,.
\end{equation*}
\end{lemma}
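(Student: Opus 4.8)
The plan is to recognize this as the Donsker--Varadhan variational inequality and to establish it directly from Jensen's inequality (Lemma~\ref{theo:jensen}). First I would move the complexity term to the left, so that the claim becomes
$$
\esp{f\sim \posterior} \phi(f) \,-\, \KL(\posterior\|\prior) \ \leq\ \ln\left( \esp{f\sim \prior} e^{\phi(f)} \right).
$$
Expanding $\KL(\posterior\|\prior) = \esp{f\sim\posterior}\ln\frac{\posterior(f)}{\prior(f)}$ and merging the two $\posterior$-expectations, the left-hand side rewrites as a single expectation of a logarithm,
$$
\esp{f\sim\posterior} \ln\left( \frac{\prior(f)}{\posterior(f)}\, e^{\phi(f)} \right).
$$

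Next I would invoke Jensen's inequality in its concave form (Lemma~\ref{theo:jensen} applied to $g=\ln$), which allows pulling the expectation inside the logarithm:
$$
\esp{f\sim\posterior} \ln\left( \frac{\prior(f)}{\posterior(f)}\, e^{\phi(f)} \right) \ \leq\ \ln \esp{f\sim\posterior}\left[ \frac{\prior(f)}{\posterior(f)}\, e^{\phi(f)} \right].
$$
The final step is a change of measure from $\posterior$ to $\prior$: since $\esp{f\sim\posterior}\big[\frac{\prior(f)}{\posterior(f)}\,g(f)\big] = \esp{f\sim\prior}[g(f)]$ for any integrable $g$, applying this with $g(f)=e^{\phi(f)}$ cancels the density ratio and yields $\ln \esp{f\sim\prior} e^{\phi(f)}$, which is exactly the desired bound. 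Chaining the two displays closes the argument.

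The only delicate point, and the step I would treat with the most care, is measure-theoretic rather than computational. The manipulation of the ratio $\frac{\prior(f)}{\posterior(f)}$ implicitly assumes $\posterior$ is absolutely continuous with respect to $\prior$; when this fails, $\KL(\posterior\|\prior)=+\infty$ and the inequality holds trivially, so one restricts to the absolutely continuous case and interprets the ratio as a Radon--Nikodym derivative, noting that the set where $\posterior(f)=0$ contributes nothing to the $\posterior$-expectation. One must also ensure the integrand inside Jensen's inequality is integrable (otherwise the right-hand side is $+\infty$ and there is nothing to prove), so the nontrivial case is precisely when $\esp{f\sim\prior} e^{\phi(f)}<\infty$. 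These caveats are standard and I would dispatch them in a sentence, as the substance of the proof is the two-line combination of Jensen and the change of measure.
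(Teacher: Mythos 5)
Your proof is correct, and it is the standard Donsker--Varadhan argument: rewrite $\esp{f\sim\posterior}\phi(f)-\KL(\posterior\|\prior)$ as $\esp{f\sim\posterior}\ln\bigl(\tfrac{\prior(f)}{\posterior(f)}e^{\phi(f)}\bigr)$, apply Jensen for the concave logarithm, and change measure back to $\prior$. There is nothing in the paper to compare against: the lemma is stated in Appendix~A (``Some Tools'') as a known result, without proof, so your derivation supplies exactly the missing standard argument. One small precision: your last step asserts the identity $\esp{f\sim\posterior}\bigl[\tfrac{\prior(f)}{\posterior(f)}g(f)\bigr]=\esp{f\sim\prior}[g(f)]$, which in general only holds when $\prior$ puts no mass outside the support of $\posterior$; in the general absolutely continuous case one gets $\esp{f\sim\posterior}\bigl[\tfrac{\prior(f)}{\posterior(f)}g(f)\bigr]=\int_{\{\posterior>0\}}\prior(f)\,g(f)\,df\leq\esp{f\sim\prior}[g(f)]$, the inequality holding because $g=e^{\phi}\geq 0$. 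Since the inequality points in the needed direction, your chain of estimates goes through unchanged, and your handling of the degenerate cases ($\KL(\posterior\|\prior)=+\infty$ or $\esp{f\sim\prior}e^{\phi(f)}=+\infty$) is appropriate.
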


\begin{lemma} \label{lem:2KL}
Given any set $\Hcal$, and any distributions $\prior$ and $\posterior$
on $\Hcal$, let  $\hat{\posterior}$ and $\hat{\prior}$ two distributions over $\Hcal^2$ such that $\hat{\posterior}(h,h') \eqdef \posterior(h)\posterior(h')$ and $\hat{\prior}(h,h') \eqdef \prior(h)\prior(h')$. Then
\begin{equation*}
\KL(\hat{\posterior}\|\hat{\prior})  \ =\   2\,\KL(\posterior\|\prior)\,.
\end{equation*}
\end{lemma}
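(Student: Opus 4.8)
The plan is to expand both sides directly from the definition of the Kullback-Leibler divergence and then exploit the product structure of $\hat{\posterior}$ and $\hat{\prior}$. First I would write $\KL(\hat{\posterior}\|\hat{\prior}) = \esp{(h,h')\sim\posterior^2} \ln\frac{\hat{\posterior}(h,h')}{\hat{\prior}(h,h')}$, recalling that expectation under $\hat{\posterior}$ is exactly expectation under the product measure $\posterior^2$. Substituting the definitions $\hat{\posterior}(h,h')=\posterior(h)\,\posterior(h')$ and $\hat{\prior}(h,h')=\prior(h)\,\prior(h')$ turns the integrand into the logarithm of a ratio of products, namely $\ln\frac{\posterior(h)\,\posterior(h')}{\prior(h)\,\prior(h')}$.

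Next I would split this logarithm into the sum $\ln\frac{\posterior(h)}{\prior(h)} + \ln\frac{\posterior(h')}{\prior(h')}$ and invoke linearity of expectation. Since the pair $(h,h')$ is drawn from the product $\posterior^2$ (so that $h$ and $h'$ are independent, each marginally distributed according to $\posterior$), the first summand does not depend on $h'$ and the second does not depend on $h$. Marginalizing out the irrelevant variable in each term leaves $\esp{h\sim\posterior}\ln\frac{\posterior(h)}{\prior(h)}$ and $\esp{h'\sim\posterior}\ln\frac{\posterior(h')}{\prior(h')}$ respectively, each of which equals $\KL(\posterior\|\prior)$ by definition. Summing the two identical contributions yields the claimed factor of~$2$.

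There is essentially no hard step here: the statement is just the additivity of the KL divergence under product measures, and the whole argument is a one-line computation once the logarithm is split. The only point meriting (minimal) care is the bookkeeping when marginalizing, i.e. observing explicitly that the marginal of $\posterior^2$ on either coordinate is $\posterior$, so that each single-coordinate expectation collapses cleanly onto $\KL(\posterior\|\prior)$.
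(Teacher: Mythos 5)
Your proposal is correct and is essentially identical to the paper's own proof: both expand $\KL(\hat{\posterior}\|\hat{\prior})$ as $\esp{(h,h')\sim\posterior^2}\ln\frac{\posterior(h)\posterior(h')}{\prior(h)\prior(h')}$, split the logarithm into two single-coordinate terms, and marginalize each to obtain $2\,\KL(\posterior\|\prior)$. Nothing is missing; your remark about the marginals of $\posterior^2$ is the only bookkeeping step and you handle it explicitly.
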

\begin{proof}
\begin{align*}
\nonumber \KL(\hat{\posterior}\|\hat{\prior}) \ &=\ \esp{(h,h')\sim \posterior^2 } \ln \frac{\posterior(h)\posterior(h')}{\prior(h)\prior(h')} \\
\nonumber&=\  \esp{h\sim \posterior } \ln \frac{\posterior(h)}{\prior(h)} + \esp{h'\sim \posterior } \ln \frac{\posterior(h')}{\prior(h')} \\
&=\  2\esp{h\sim \posterior } \ln \frac{\posterior(h)}{\prior(h)}\\
& =\ 2\,\KL(\posterior\|\prior)\,.\\[-1cm]
\end{align*} 
\end{proof}

\newcommand{\DST}{D_{{ S \times  T}}}
\renewcommand{\ST}{{S \times  T}}
\newcommand{\RDSTa}{{R_{{\DST}}^{(1)}}}
\newcommand{\RSTa}{{R_{{S \times T}}^{(1)}}}
\newcommand{\Xh}{{X_{\hat{h}}}}

\section{Proof of Theorem \ref{thm:bound_dis_kl}}
\label{app:seeger}

\begin{proof}
 Firstly, we propose to upper-bound
\begin{align*}
d^{(1)} \, \eqdef  \esp{(h,h')\sim \posterior^2 }   \Big[ \RDS (h,h')  -  \RDT (h,h') \Big]
\end{align*}
by its empirical counterpart
\begin{align*}
d^{(1)}_{\mbox{\tiny $S  \times   T$}} \, \eqdef  \esp{(h,h')\sim \posterior^2 }  \Big[ \RS(h,h')  -  \RT(h,h') \Big]\,.
\end{align*}
To achieve this, we consider an ``abstract'' classifier $\hat{h}  \eqdef (h,h')\in\Hcal^2$ chosen according a distribution $\hat{\posterior}$, with \mbox{$\hat{\posterior}(\hat{h})  =  \posterior(h) \posterior(h')$}.
Let us define the ``abstract'' loss of $\hat{h}$ on a pair of examples $(\xbs,\xbt)\sim \DST = D_S\times D_T$ by
$$\loss_{d^{(1)}} (\hat{h},\xbs  ,\xbt)  \ \eqdef\  \frac{1 + \zoloss (h(\xbs ),  h'(\xbs ) ) -  \zoloss (h(\xbt),  h'(\xbt) )}{2}\,.$$
Therefore, the ``abstract'' risk of $\hat{h}$ on the joint distribution is defined as
$$\RDSTa(\hat{h})\ =\, \esp{\xbs\sim D_S} \esp{\xbt\sim D_T}      \loss_{d^{(1)}}(\hat{h}, \xbs ,\xbt)\,,$$
which empirical counterpart is
$$\RSTa(\hat{h})\ =\,\esp{(\xbs ,\xbt)\sim \ST}      \loss_{d^{(1)}} (\hat{h}, \xbs ,\xbt)\,. $$
The error of the related Gibbs classifier of these two quantities are
\begin{eqnarray} \label{eq:abstract_gibbs_errors}
 \RDSTa (G_{\hat{\rho}})\, = \esp{\hat{h}\sim \hat{\posterior} } \RDSTa(\hat{h})
\quad\mbox{  and }\quad
\RSTa (G_{\hat{\rho}})\, = \esp{\hat{h}\sim \hat{\posterior} } \RSTa(\hat{h})\,.
 \end{eqnarray}
 It is easy to show that
 \begin{eqnarray}
 \label{eq:d_et_Gibbs}
d^{(1)}=2\RDSTa (G_{\hat{\rho}})-1 
\quad\mbox{  and }\quad
d^{(1)}_{\mbox{\tiny $S  \times   T$}}=2\RSTa (G_{\hat{\rho}})-1\,.
\end{eqnarray}

\medskip
\noindent
Now, let us consider the non-negative random variable
 $\esp{\hat{h}\sim \hat{\prior} } \expo{ m \kl\left( \RSTa(\hat{h}) \big\| \RDSTa(\hat{h})  \right) }.$
 
\smallskip
We apply Markov's inequality (Lemma~\ref{theo:markov}). For every $\delta\in(0,1]$, with a probability at least $1-\delta$ over the choice of $\ST \sim(\DST)^m$, we have
\begin{align*}
\esp{\hat{h}\sim \hat{\prior} } \expo{ m \kl\left( \RSTa(\hat{h}) \big\| \RDSTa(\hat{h})  \right)  } 
\ &\leq \ 
\frac{1}{\delta}\,  \esp{\ST\sim (\DST)^m} \esp{\hat{h}\sim \hat{\prior} } \expo{  m \kl\left( \RSTa(\hat{h}) \big\| \RDSTa(\hat{h})  \right)}\\
&=\ 
\frac{1}{\delta}\,  \esp{\hat{h}\sim \hat{\prior} }  \esp{\ST\sim (\DST)^m} \expo{  m \kl\left( \RSTa(\hat{h}) \big\| \RDSTa(\hat{h})  \right)}\\
&\leq\  
\frac{1}{\delta}\,  \esp{\hat{h}\sim \hat{\prior} } 2\sqrt{m}\,,
\end{align*}
where the last inequality comes from the Maurer's lemma (Lemma~\ref{lem:maurer}).\\
By taking the logarithm of each outermost sides of the previous equation,
we then obtain
\begin{align*} 
\nonumber \ln\left[ \esp{\hat{h}\sim \hat{\prior} } 
 \expo{ m \kl\left( \RSTa(\hat{h}) \big\| \RDSTa(\hat{h})  \right)  } \right]\ 
\leq \ \ln \frac{2\sqrt{m}}{\delta}\,.
\end{align*}

Let us now find a lower bound of the left side of the last equation by using 
the change of measure inequality (Lemma~\ref{lem:change-measure}) and the Jensen inequality (Lemma~\ref{theo:jensen}) on the convex function~$\kl(\cdot\|\cdot)$. We have
\begin{align*}
\ln\left[ \esp{\hat{h}\sim \hat{\prior} } 
 \expo{ m \kl\left( \RSTa(\hat{h}) \big\| \RDSTa(\hat{h})  \right)  } \right]\ 
&\geq\ \esp{\hat{h}\sim \hat{\posterior} }  m\, \kl\left( \RSTa(\hat{h}) \,\big\|\, \RDSTa(\hat{h})  \right)   -\KL(\hat{\posterior}\|\hat{\prior}) \\
&\geq\ m\, \kl\left( \esp{\hat{h}\sim \hat{\posterior} }\RSTa(\hat{h})\, \big\|\, \esp{\hat{h}\sim \hat{\posterior} }\RDSTa(\hat{h})  \right)   -\KL(\hat{\posterior}\|\hat{\prior})\\
&=\ m\, \kl\left( \RSTa(G_{\hat{\posterior}})\, \big\|\,\RDSTa(G_{\hat{\posterior}})  \right)   -2\,\KL(\posterior\|\prior)\,. 
 \end{align*} 
Note that the last equality is obtained from Equation~\eqref{eq:abstract_gibbs_errors} and Lemma~\ref{lem:2KL}.

\medskip
\noindent
We finally obtain
\begin{equation*}
\kl \left(  \RSTa (G_{\hat{\posterior}}) \big\|\RDSTa(G_{\hat{\posterior}})   \right) 
\ \leq\   \frac{1}{m} \left[2\,\KL(\posterior\,\|\,\prior) +  \ln  \frac{2\sqrt{m}}{\delta}\right]  .
\end{equation*}
With  Equation~\eqref{eq:d_et_Gibbs}, the previous line gives us a bound on $d^{(1)}$ from its empirical counterpart~$d^{(1)}_{\mbox{\tiny $S  \times   T$}}$. 
Hence, with probability at least $1 - \delta$ over the choice of $S\times T\sim(\DS\times \DT)^m$, 
\begin{equation*} 
\kl \left(  \tfrac{d^{(1)}_{\mbox{\tiny $S  \times   T$}} +1}{2} \Big\|\tfrac{d^{(1)}+1}{2}  \right) 
\ \leq\  \frac{1}{m} \left[2\,\KL(\posterior\,\|\,\prior) +  \ln  \frac{2\sqrt{m}}{\delta}\right]  .
\end{equation*}

\noindent
Lemma~\ref{lem:claim} (stated below) gives
\begin{equation*} 
\kl \left(   \tfrac{|d^{(1)}_{\mbox{\tiny $S  \times   T$}}\hspace{-0.3mm}| +1}{2} \Big\|\tfrac{|d^{(1)}\hspace{-0.3mm}|+1}{2}   \right) 
\ \leq  \  \frac{1}{m} \left[2\,\KL(\posterior\,\|\,\prior) +  \ln  \frac{2\sqrt{m}}{\delta}\right]  ,
\end{equation*}
which, since 
$
|d^{(1)}|  = \des(\DS,\DT) \  \ \mbox{and}\ \ |d^{(1)}_{\mbox{\tiny $S  \times   T$}}|  = \des(S,T)\,,
$
implies the result.
\end{proof}

\begin{lemma} \label{lem:claim}
 For $a,b\in[-1,+1]$, we have
\begin{equation*}
\kl \left(   \tfrac{1+|a|}{2}\, \Big\|\, \tfrac{1+|b|}{2}  \right) 
\ \leq \ 
\kl \left(   \tfrac{1+a}{2}\, \Big\|\, \tfrac{1+b}{2}  \right) \,.
\end{equation*}
\end{lemma}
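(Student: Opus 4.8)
The plan is to pass to the natural parametrization of the binary KL divergence. Setting $p \eqdef \tfrac{1+a}{2}$ and $q \eqdef \tfrac{1+b}{2}$, both belong to $[0,1]$, and the key observation is that $\tfrac{1+|a|}{2} = \max(p,1-p)$ while $\tfrac{1+|b|}{2} = \max(q,1-q)$. Hence the statement to prove is exactly
\[
\kl\big(\max(p,1-p)\,\big\|\,\max(q,1-q)\big)\ \leq\ \kl(p\|q)\,,
\]
and I would reduce everything to two elementary properties of the map $(s,t)\mapsto\kl(s\|t)$.

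These two properties, both following by a one-line expansion of $\kl(s\|t)=s\ln\tfrac{s}{t}+(1-s)\ln\tfrac{1-s}{1-t}$, are the reflection symmetry $\kl(s\|t)=\kl(1-s\|1-t)$ and the difference identity $\kl(s\|t)-\kl(s\|1-t)=(2s-1)\ln\tfrac{1-t}{t}$. I would establish both at the outset, since the whole argument rests on controlling the sign of the right-hand side of the second identity: when $s\geq\tfrac12$ and $t\geq\tfrac12$ we have $2s-1\geq0$ and $\ln\tfrac{1-t}{t}\leq0$, so $\kl(s\|t)\leq\kl(s\|1-t)$.

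The core is then a short case analysis governed by the signs of $a$ and $b$ (equivalently, by whether $p$ and $q$ lie above or below $\tfrac12$). When $a$ and $b$ share a sign, either both maxima select the ``$p,q$'' branch, giving equality trivially, or both select the ``$1-p,1-q$'' branch, whence the reflection symmetry yields equality. In each of the two mixed-sign cases I would use the reflection symmetry, where needed, to rewrite both sides so that each KL divergence has its first argument in the regime $\geq\tfrac12$; the claim then reduces precisely to an inequality of the form $\kl(s\|t)\leq\kl(s\|1-t)$ with $s\geq\tfrac12$ and $t\geq\tfrac12$, which is immediate from the difference identity by the sign computation above.

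The main obstacle is purely organizational rather than analytic: one must keep track, across the four sign combinations, of which branch each maximum selects and, after any application of the reflection symmetry, verify that both arguments have indeed been brought to the regime $s,t\geq\tfrac12$ where the sign of $(2s-1)\ln\tfrac{1-t}{t}$ is determined. No genuine calculation beyond the two one-line identities is required.
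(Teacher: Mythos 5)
Your proposal is correct and takes essentially the same route as the paper's proof: the identical four-way case split on the signs of $a$ and $b$, with the equal-sign cases dispatched by the reflection symmetry $\kl(s\|t)=\kl(1-s\|1-t)$ and the mixed-sign cases by the computation $\kl(s\|t)-\kl(s\|1-t)=(2s-1)\ln\tfrac{1-t}{t}$, which is precisely the paper's inline ``straightforward calculation'' in its Case~3 (there appearing as $a\ln\tfrac{1+b}{1-b}\leq 0$). Your only departure is presentational --- packaging that calculation as a reusable difference identity and writing the claim as $\kl(\max(p,1-p)\|\max(q,1-q))\leq\kl(p\|q)$ --- which is tidier but not a different argument.
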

\begin{proof}
There are four cases to consider.

\begin{description}
\item[\textbf{Case \MVOne:}]  Let $a \geq 0$ and $b \geq 0$.\\
This first case is trivial, since $|a|=a$ and $|b|=b$.\\

\item[\textbf{Case \MVTwo:}] Let $a \leq 0$ and $b \leq 0$. 
\\ 
This case reduces to \textbf{Case \MVOne} because 
$\kl(q\|p) = \kl(1\!-\!q\|1\!-\!p)$ for all
$(q,p)  \in [0,1]^2$ .\\
 Then
\begin{equation*} 
\kl \left(   \tfrac{1+|a|}{2}\, \Big\|\, \tfrac{1+|b|}{2}  \right)
\ = \  
\kl \left(   \tfrac{1-a}{2}\, \Big\|\, \tfrac{1-b}{2}  \right) 
\ = \  
\kl \left(   \tfrac{1+a}{2}\, \Big\|\, \tfrac{1+b}{2}  \right)
  \,.
\end{equation*}

\item[\textbf{Case \MVThree:}]  Let $a \leq 0$ and $b \geq 0$.\\
 From straightforward calculations, we show that
\begin{eqnarray*}
\nonumber
\kl \left(  \tfrac{1+|a|}{2} \Big\|\tfrac{1+|b|}{2}   \right)
 - 
\kl \left(  \tfrac{a +1}{2} \Big\|\tfrac{1+b}{2}   \right)  
 \hspace{-35mm}& &
\\[2mm]
&=&
\kl \left(  \tfrac{1-a}{2} \Big\|\tfrac{1+b}{2}   \right)
 - 
\kl \left(  \tfrac{1+a}{2} \Big\|\tfrac{1+b}{2}   \right)\\[2mm]
\nonumber
&=&
\left( \tfrac{1-a}{2} - \tfrac{1+a}{2}\right) \ln\left(\frac{1}{\tfrac{1+b}{2}}\right)  
+  \left(   \Big(1 - \tfrac{1-a}{2}\Big) - \Big(1 - \tfrac{1+a}{2}\Big) \right) \ln\left(\frac{1}{1-\tfrac{1+b}{2}}\right)\\[2mm]
\nonumber
&=&
 -a  \ln\left(\frac{1}{\tfrac{1+b}{2}}\right) \ + \
 a  \ln\left(\frac{1}{1-\tfrac{1+b}{2}}\right)
 \ = \ 
 -a  \ln\left(\frac{1}{\tfrac{1+b}{2}}\right) \ + \
a\ln\left(\frac{1}{\tfrac{1-b}{2}}\right)\\[2mm]
\nonumber
&=&
a\ \ln\left(\tfrac{1+b}{1-b}\right)\\[2mm]
\nonumber 
&\leq&
0\,.
\end{eqnarray*}

\item[\textbf{Case \MVFour:}]  Let $a \geq 0$ and $b \leq 0$.\\
This case reduces to \textbf{Case \MVThree}, since
$\kl(q\|p) = \kl(1\!-\!q\|1\!-\!p)$ for all
$(q,p)  \in [0,1]^2$ .\\
 Hence,
 \begin{equation*} 
 \kl \left(   \tfrac{1+|a|}{2}\, \Big\|\, \tfrac{1+|b|}{2}  \right)
 \ = \  
 \kl \left(   \tfrac{1+a}{2}\, \Big\|\, \tfrac{1-b}{2}  \right) 
 \ \leq \  
 \kl \left(   \tfrac{1+a}{2}\, \Big\|\, \tfrac{1+b}{2}  \right)
   \,.
 \end{equation*}
\end{description}
\end{proof}
\pagebreak
\section{Detailed Proof of Theorem \ref{thm:bound_dis_rho}}
\label{app:catoni}

\begin{proof}
Similarly as in the proof of Theorem~\ref{thm:bound_dis_kl} (see Appendix~\ref{app:seeger}), we will first  bound
\begin{align*}
d^{(1)} \, \eqdef  \esp{(h,h')\sim \posterior^2 }   \Big[ \RDS (h,h')  -  \RDT (h,h') \Big]
\end{align*}
by its empirical counterpart.\\
Refer to the proof of Theorem~\ref{thm:bound_dis_kl} for the definitions of $\RDSTa(\hat{h})$ and $R_{{\DST}}^{(1)} (G_{\hat{\rho}})$, as well as their empirical counterparts $\RSTa(\hat{h})$  and  $R_{{\ST}}^{(1)} (G_{\hat{\rho}})$.

As $\loss_{d^{(1)}}$ lies in $[0,1]$, we can bound $R_{{\DST}}^{(1)} (G_{\hat{\rho}})$ following the proof process of Theorem~\ref{thm:pacbayescatoni} (with $c = 2\alpha$). To do so, we define the convex function,
\begin{equation} \label{eq:Fcal}
\Fcal(x) \ \eqdef \ - \ln\left[\,1-(1-e^{-2\alpha})\, x\,\right],
\end{equation}
and consider the non-negative random variable 
$\esp{\hat{h}\sim \hat{\prior} } \expo{ m \left(\Fcal(\RDSTa(\hat{h}))-2\alpha\RSTa(\hat{h})  \right) }.$

We apply Markov's inequality (Lemma~\ref{theo:markov}). For every $\delta\in(0,1]$, with a probability at least $1 - \frac{\delta}{2}$ over the choice of $\ST \sim(\DST)^m$, we have
\begin{align*}
\esp{\hat{h}\sim \hat{\prior} } \expo{ m \left(\Fcal(\RDSTa(\hat{h}))-2\alpha\RSTa(\hat{h})  \right)} 
 &\leq \ \frac{2}{\delta}\, \esp{\ST\sim (\DST)^m} \esp{\hat{h}\sim \hat{\prior} } \expo{  m \left(\Fcal(\RDSTa(\hat{h}))-2\alpha\RSTa(\hat{h})  \right)  }\\
&=\ \frac{2}{\delta}\, \esp{\hat{h}\sim \hat{\prior} } 
  \expo{ m\Fcal(\RDSTa(\hat{h}))}\esp{\ST\sim (\DST)^m}   \expo{- 2m\alpha\RSTa(\hat{h}) } \,.
\end{align*}
By taking the logarithm on each side of the previous inequality, we obtain 
\begin{align} 
\ln\left[ \esp{\hat{h}\sim \hat{\prior} } \hspace{-1.5mm} \expo{ m \left(\Fcal(\RDSTa(\hat{h}))-2\alpha\RSTa(\hat{h})  \right)  } \right] 
= \ln\left[\frac{2}{\delta} \esp{\hat{h}\sim \hat{\prior} }  \hspace{-1.5mm}
 \expo{ m\Fcal(\RDSTa(\hat{h}))}  \hspace{-3mm} \esp{\ST\sim (\DST)^m}  \hspace{-7mm} \expo{- 2m\alpha\RSTa(\hat{h}) } \right] .
 \label{eq:kiwi}
\end{align}

For a classifier $\hat{h}$, let us define a random variable $\Xh$ that follows a binomial distribution of $m$ trials with a probability of success $\RDSTa(\hat{h})$ denoted by $B\big(m, \RDSTa(\hat{h}) \big)$.
Lemma~\ref{lem:maurer2} gives
\begin{align*} 
 \hspace{-6.5mm} \esp{\ST\sim (\DST)^m} \expo{- 2m\alpha\RSTa(\hat{h}) } 
\ & \leq\ \esp{\Xh\sim B(m, \RDSTa(\hat{h}))} \expo{- 2\alpha \Xh } \\
  & =\  \sum_{k=0}^m\ \prob{\Xh\sim B(m, \RDSTa(\hat{h}))}    \Big(\Xh=k \Big)  e^{- 2\alpha k}\\
& =\ \sum_{k=0}^m {\textstyle{m \choose k}}  \big( \RSTa(\hat{h})\big)^k  \big(1-\RSTa(\hat{h})\big)^{m-k}  e^{- 2\alpha k} \\
& =\ \sum_{k=0}^m {\textstyle{m \choose k}}  \big( \RSTa(\hat{h})e^{- 2\alpha }\big)^k  \left(1-\RSTa(\hat{h})\right)^{m-k}  \\
 & =\  \left[ \RSTa(\hat{h})e^{- 2\alpha } + \left(1-\RSTa(\hat{h}) \right) \right]^{m} .
\end{align*}
The last line result, together with the choice of $\Fcal$ (Equation~\eqref{eq:Fcal}), leads to
\begin{align*}
\esp{\hat{h}\sim \hat{\prior} }  
 \expo{ m\Fcal(\RDSTa(\hat{h}))} & \esp{\ST\sim (\DST)^m} \expo{- 2m\alpha\RSTa(\hat{h}) } \\
 %
 \leq\  &\ \esp{\hat{h}\sim \hat{\prior} } 
    e^{ m\Fcal(\RDSTa(\hat{h}))} \left[ \RSTa(\hat{h})e^{- 2\alpha } + \left(1-\RSTa(\hat{h}) \right) \right]^{m}  \\
  =\ &\ \esp{\hat{h}\sim \hat{\prior} }  1 \, = \, 1\,.   
\end{align*}
We can now upper bound Equation~\eqref{eq:kiwi} simply by
\begin{align*} 
\ln\left[ \esp{\hat{h}\sim \hat{\prior} } \expo{ m \left(\Fcal(\RDSTa(\hat{h}))-2\alpha\RSTa(\hat{h})  \right)  } \right] \ 
 \leq \ \ln\frac{2}{\delta}\,.
 \end{align*} 
Let us now find a lower bound of the left side of the last equation by using 
the change of measure inequality (Lemma~\ref{lem:change-measure}) and the Jensen's inequality (Lemma~\ref{theo:jensen}) on the convex function~$\Fcal$:
\begin{align*}
\ln\left[\, \esp{\hat{h}\sim \hat{\prior} } 
 \expo{ m \kl\left( \RSTa(\hat{h}) \big\| \RDSTa(\hat{h})  \right)  } \right]\ 
&\geq\ 
 \esp{\hat{h}\sim \hat{\posterior} } m \left(\Fcal(\RDSTa(\hat{h}))-2\alpha\RSTa(\hat{h})  \right) -\KL(\hat{\posterior}\|\hat{\prior})\\
&\geq\ 
 m \left[\Fcal\left( \esp{\hat{h}\sim \hat{\posterior} }\RDSTa(\hat{h})\right)-2\alpha \esp{\hat{h}\sim \hat{\posterior} } \RSTa(\hat{h})  \right] -\KL(\hat{\posterior}\|\hat{\prior})\\
&= \ 
 m \Fcal\left( \RDSTa(G_{\hat{\posterior}}))-2m\alpha\RSTa(G_{\hat{\posterior}})   -2\,\KL(\posterior\|\prior)\right)\,.
 \end{align*} 
The last equality is obtained from Equation~\eqref{eq:abstract_gibbs_errors} and Lemma~\ref{lem:2KL}.
This, in turn, implies 
\begin{align*} 
&\Fcal( \RDSTa(G_{\hat{\posterior}}))
 \leq 
 2\alpha\RSTa(G_{\hat{\posterior}}) + \frac{2\,\KL(\posterior\|\prior)  + \ln\frac{2}{\delta}}{m}  \,.
  \end{align*}
Now, by isolating $\RDSTa (G_{\hat{\posterior}})$, we obtain
\begin{align*} 
\RDSTa (G_{\hat{\posterior}})  \leq   \frac{1}{1 - e^{-2\alpha}}    \left[ 1 - e^{  -\left( 2\alpha\RSTa(G_{\hat{\posterior}})\, +\, \frac{1}{m} \left( 2\,\KL(\posterior\|\prior)  + \ln\frac{2}{\delta}\right)  \right)}   \right] ,
 \end{align*} 
and, from the inequality $1-e^{-x}\leq x$\,,
\begin{align*} 
\RDSTa(G_{\hat{\posterior}})  \leq   \frac{1}{1 - e^{-2\alpha}}   \left[ 2\alpha\RSTa(G_{\hat{\posterior}}) +  \frac{ 2\,\KL(\posterior\|\prior)  + \ln\frac{2}{\delta} }{m} \right] .
 \end{align*} 
It then follows from Equation~\eqref{eq:d_et_Gibbs} that, with probability at least $1 - \tfrac{\delta}{2}$ over the choice of $S\times T\sim(\DS\times \DT)^m$, we have
\begin{equation} \label{eq:catoni_primo}
\frac{d^{(1)}+1}{2} \leq   \frac{2\alpha}{ 1 - e^{-2\alpha} }
  \Bigg[\frac{d^{(1)}_{\mbox{\tiny $S  \times   T$}} +1}{2}+ \frac{2\,\KL(\posterior\|\prior)  +  \ln  \frac{2}{\delta}}{m\times 2\alpha} \Bigg] .
\end{equation}

\smallskip
\noindent
We now bound
$$d^{(2)} \, \eqdef     \esp{(h,h')\sim \posterior^2 }    \Big[ \RDT(h,h')  -  \RDS(h, h') \Big]$$
using exactly the same argument as for $d^{(1)}$ except that we instead consider the following ``abstract''  loss of $\hat{h}$ on a pair of examples $(\xbs,\xbt)\sim \DST = D_S\times D_T$ :
$$\loss_{d^{(2)}} (\hat{h},\xbs  ,\xbt)  \ \eqdef \ \frac{1 +  \zoloss (h(\xbt),  h'(\xbt) -  \zoloss (h(\xbs ),  h'(\xbs ) )  )}{2}\,.$$
  We then obtain, with probability at least $1 - \tfrac{\delta}{2}$ over the choice of $S\times T\sim(\DS\times \DT)^m$,
\begin{equation} \label{eq:catoni_deuxio}
\frac{d^{(2)}+1}{2} \leq   \frac{2\alpha}{ 1 - e^{-2\alpha} }
  \Bigg[\frac{d^{(2)}_{\mbox{\tiny $S  \times   T$}} +1}{2}+ \frac{2\,\KL(\posterior\|\prior)  +  \ln  \frac{2}{\delta}}{m\times 2\alpha} \Bigg] .
\end{equation}
To finish the proof, note that by definition, we have that $d^{(1)}=-d^{(2)}$. Hence, we have 
$$|d^{(1)}| = |d^{(2)}| = \des(\DS,\DT),
\quad\mbox{ and }\quad 
|d^{(1)}_{\mbox{\tiny $S  \times   T$}}| = |d^{(2)}_{\mbox{\tiny $S  \times   T$}}| = \des(S,T).$$
Then, the maximum of the bound on $d^{(1)}$ (Equation~\eqref{eq:catoni_primo}) and the bound on $d^{(2)}$ (Equation~\eqref{eq:catoni_deuxio}) gives a bound on $\des(D_S, D_T)$.  
By the union bound,  with probability $1 - \delta$ over the choice of $S\times T\sim(\DS\times \DT)^m$, we have
\begin{equation*}
\frac{|d^{(1)}|+1}{2}\ \leq \  \frac{\alpha}{ 1 - e^{-2\alpha} }
  \Bigg[|d^{(1)}_{\mbox{\tiny $S  \times   T$}}| +1+ \frac{2\,\KL(\posterior\|\prior)  +  \ln  \frac{2}{\delta}}{m\times \alpha} \Bigg] ,
\end{equation*}
or, which is equivalent to
 \begin{align*}
\des(\DS,\DT)\ \leq\  
  \frac{2\alpha}{1 -e^{-2\alpha}} \left[ \des(S,T)  +  \frac{2\,\KL(\posterior\|\prior)  +  \ln  \frac{2}{\delta}}{m\times\alpha} + 1\right] - 1\,,
\end{align*}
and we are done.
\end{proof}

\section{Proof of Theorem \ref{thm:bound_dis_rho_allester}}
\label{app:mcallester}

\begin{proof}
Let us consider the non-negative random variable \quad   
$\esp{\mathclap{(h,h')\sim\prior^2}} e^{2m(\RDS(h,h')-\RS(h,h'))^2}.$

We apply Markov's inequality (Lemma~\ref{theo:markov}).
 For every $\delta\in(0,1]$, with a probability at least $1-\frac{\delta}{2}$ over the choice of $S\sim(\DS)^m$, we have
\begin{align}
\esp{(h,h')\sim\prior^2} \expo{2m(\RDS(h,h')-\RS(h,h'))^2} 
\  \leq&\ \nonumber
\frac{2}{\delta} \esp{S\sim (\DS)^m} \esp{(h,h')\sim\prior^2} \expo{2m(\RDS(h,h')-\RS(h,h'))^2}\\
=& \ \nonumber
\frac{2}{\delta} \esp{(h,h')\sim\prior^2} \esp{S\sim (\DS)^m} \expo{2m(\RDS(h,h')-\RS(h,h'))^2}\\
\label{2}
\leq& \  
\frac{2}{\delta} \esp{(h,h')\sim\prior^2} \esp{S\sim (\DS)^m} \expo{\kl(\RS(h,h')\,\|\,\RDS(h,h'))}\\
\label{3}\leq & \  
\frac{2}{\delta} \esp{(h,h')\sim\prior^2} 2\sqrt{m}\,.
\end{align}
Line~\eqref{2} comes from Pinsker's inequality, and Line~\eqref{3} comes from the Maurer's lemma (Lemma~\ref{lem:maurer}).
By taking the logarithm on each outermost side of the previous inequality,
we obtain
\begin{equation}
\ln \esp{(h,h')\sim\prior^2} \expo{2m(\RDS(h,h')-\RS(h,h'))^2} 
\  \leq\ 
\ln \frac{4\sqrt{m}}{\delta}\,.
\end{equation}

Let us now find a lower bound of the left side of the last equation by using 
the change of measure inequality (Lemma~\ref{lem:change-measure}) and the Jensen inequality (Lemma~\ref{theo:jensen}).
\begin{align*}
\ln \esp{(h,h')\sim\prior^2} &\expo{2m(\RDS(h,h')-\RS(h,h'))^2} \\
&\geq\  
\esp{(h,h')\sim\posterior^2} {2m(\RDS(h,h')-\RS(h,h'))^2}   -\KL({\posterior}^2\|{\prior}^2) \\
&\geq\
2m\left(\esp{(h,h')\sim\posterior^2} \RDS(h,h')-\esp{(h,h')\sim\posterior^2} \RS(h,h')\right)^2
-\KL({\posterior}^2\|{\prior}^2)\\
&=\ 
2m\Big( \RDS(\GQ,\GQ)-\RS(\GQ,\GQ)\Big)^2 -2\,\KL(\posterior\|\prior)\,.
 \end{align*} 
The last equality is obtained from Equation~\eqref{eq:abstract_gibbs_errors} and Lemma~\ref{lem:2KL}.
We finally obtain
\begin{equation*}
2m\Big( \RDS(\GQ,\GQ)-\RS(\GQ,\GQ)\Big)^2
\ \leq \
 2\,\KL(\posterior\,\|\,\prior) +  \ln  \frac{4\sqrt{m}}{\delta}\,,
\end{equation*}
and we conclude, with a probability at least $1-\frac{\delta}{2}$ over the choice of $S\sim(\DS)^m$, 
\begin{equation} \label{eq:super_ineq1} 
\bigg| \, \RDS(\GQ,\GQ)-\RS(\GQ,\GQ)\, \bigg|
\ \leq \ 
\sqrt{\frac{1}{2m} \left[2\,\KL(\posterior\,\|\,\prior) +  \ln  \frac{4\sqrt{m}}{\delta}\right]}  .
\end{equation}

\noindent
Following the exact same proof process with the random variable \quad
$\esp{\mathclap{(h,h')\sim\prior^2}}  e^{2m'(\RDT(h,h')-\RT(h,h'))^2}\!,$
we obtain, with a probability at least $1-\frac{\delta}{2}$ over the choice of $T\sim(\DT)^{m'}$, 
\begin{equation} \label{eq:super_ineq2} 
\bigg| \, \RDT(\GQ,\GQ)-\RT(\GQ,\GQ)\, \bigg|
\ \leq \ 
\sqrt{\frac{1}{2m'} \left[2\,\KL(\posterior\,\|\,\prior) +  \ln  \frac{4\sqrt{m'}}{\delta}\right]}  .
\end{equation}

Joining Inequalities~\eqref{eq:super_ineq1} and~\eqref{eq:super_ineq2} with the union bound (that assure that both results hold simultaneously with probability $1-\delta$), gives the result because
\begin{align*}
\bigg| \, \RDS(\GQ,\GQ)-\RDT(\GQ,\GQ)\, \bigg|& = \des(D_S, D_T)\,,\\
\bigg| \, \RS(\GQ,\GQ)-\RT(\GQ,\GQ)\, \bigg| &= \des(S, T)\,,
\end{align*}
and because  if  $|a_1 - b_1| \leq c_1$ and  $|a_2 - b_2| \leq c_2$, then
$ |(a_1-a_2)-(b_1-b_2)| \leq c_1 + c_2$. 
\end{proof}

\section{Proof of Theorem \ref{thm:bound_dis_rho_multi}}
\label{app:seeger_multi}

\begin{proof}
The proof follow all the steps of the proof of Theorem~\ref{thm:bound_dis_rho} (see Appendix~\ref{app:catoni}). 
The only difference is that, in order to obtain a guarantee over
$\des(\multiD,\DT)$, we bound 
\begin{align*}
\widehat{d^{(1)}} \, \eqdef  \esp{(h,h')\sim \posterior^2 }   \LB   \esp{\DSj\sim v} \RDSj(h,h') - \RDT (h,h') \RB
\end{align*}
by its empirical counterpart
\begin{align*}
\widehat{d^{(1)}_{\mbox{\tiny $\multiS\!\!\times\!\!T$}}} \, \eqdef  \esp{(h,h')\sim \posterior^2 }  \LB  \esp{\DSj\sim v} \RSj(h,h')  -  \RT(h,h') \RB\,.
\end{align*}
To do so, we define the ``abstract'' loss of $\hat{h}\eqdef (h,h')\in\Hcal^2 $ on a tuple of $n+1$ examples $(\xb^{s_1},\ldots, \xb^{s_n},\xbt)\sim D_{S_1}\times\ldots\times D_{S_n}\times  D_T$ by
$$\loss_{\widehat{d^{(1)}}} (\hat{h}, \xb^{s_1},\ldots, \xb^{s_n} ,\xbt)
\, \eqdef \,
 \frac{1}{2} \Bigg[ {1 + \esp{\DSj\sim v} \zoloss (h(\xb^{s_j} ),  h'(\xb^{s_j} ) ) -  \zoloss (h(\xbt),  h'(\xbt) )}\Bigg]\, .$$
Again, we obtain the result by following the proof of Theorem~\ref{thm:bound_dis_rho}.
\end{proof}

\section{Proof of Theorem~\ref{theo:pacbayesdabound_catoni_bis_multi_v}}

\label{proof:multi_v}

We first need the following result. 
\begin{theorem} \label{thm:bound_dis_rho_multi_v}
 For any distributions $\{\DSj\}_{j=1}^n$ and $\DT$ over $X$, any set of hypothesis $\Hcal$, for any prior distribution $u$ over $\{\DSj\}_{j=1}^n$,  any distribution $\prior$ over $\Hcal$, any $\delta \in (0,1]$, and any real number $\alpha > 0$,  with a probability at least $1 - \delta$ over the choice of  $\multiS\sim\multiD$, and $T\sim (D_T)^{m}$, for every distribution $v$ over $\{\DSj\}_{j=1}^n$, we have
 \begin{align*}
\desPI(\multiD,\DT)\, \leq\,   \frac{2\alpha }{1 -e^{-2\alpha}} \left[ \desPI(\multiS,T)  +  \frac{\KL(v\|u)  +  \ln  \frac{2}{\delta}}{n\times\alpha} + 1\right] - 1\,.
\end{align*}
\end{theorem}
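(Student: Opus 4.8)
The plan is to transpose the proof of Theorem~\ref{thm:bound_dis_rho} (Appendix~\ref{app:catoni}) to the new setting by swapping the two levels of the PAC-Bayesian argument: the distribution $\prior$ over $\Hcal$ is now held \emph{fixed}, and the change of measure is carried out over the mixture weights, from the prior $u$ to the posterior $v$ on $\{\DSj\}_{j=1}^n$. First I would isolate the signed quantity
$$\widehat{d}\ \eqdef\ \esp{\DSj\sim v}\RDSj(G_{\prior},G_{\prior})-\RDT(G_{\prior},G_{\prior})\,,$$
together with its empirical counterpart $\widehat{d}_{\mbox{\tiny emp}}\eqdef\esp{\DSj\sim v}\RSj(G_{\prior},G_{\prior})-\RT(G_{\prior},G_{\prior})$, and recover $\desPI(\multiD,\DT)=|\widehat d|$ at the very end by treating the opposite sign $-\widehat d$ symmetrically and taking a union bound over the two halves (each at level $\tfrac{\delta}{2}$), exactly as in the last step of Theorem~\ref{thm:bound_dis_rho}.

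To set up the concentration I would fix the abstract classifier $\hat h\eqdef(h,h')$ drawn from $\hat\prior$ with $\hat\prior(\hat h)=\prior(h)\prior(h')$, and, for each source index $j$, introduce the per-domain abstract loss on a pair $(\xbf^{s}_{},\xbf^{t}_{})\sim\DSj\times\DT$, namely $\tfrac12\big[1+\zoloss(h(\xbf^{s}),h'(\xbf^{s}))-\zoloss(h(\xbf^{t}),h'(\xbf^{t}))\big]$. Writing $r_j$ and $\hat r_j$ for its $\hat\prior$-averaged true and empirical risks, one gets $2\,\esp{\DSj\sim v}r_j-1=\widehat d$ and $2\,\esp{\DSj\sim v}\hat r_j-1=\widehat d_{\mbox{\tiny emp}}$. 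The point to note here is that for each \emph{fixed} $j$ the paired examples $(\xbf^{s_j}_i,\xbf^{t}_i)_{i=1}^m$ are i.i.d.\ draws from $\DSj\times\DT$, so $\hat r_j$ is a genuine empirical mean of $m$ i.i.d.\ abstract losses, even though the target sample $T$ is shared across all domains.

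The core step is then to run the Catoni argument of Theorem~\ref{thm:pacbayescatoni}, but with the change of measure performed over the domain index. With $\Fcal$ defined as in Equation~\eqref{eq:Fcal}, I would consider the non-negative variable $\esp{\DSj\sim u}\expo{m\left(\Fcal(r_j)-2\alpha\,\hat r_j\right)}$; applying Markov's inequality (Lemma~\ref{theo:markov}), swapping the data expectation inside the $u$-average (legitimate precisely because $u$ is data-independent, so no joint independence across domains is needed), and bounding each inner factor through Maurer's Lemma~\ref{lem:maurer2}, the binomial identity, and the choice of $\Fcal$, shows that its data-expectation is at most $1$. Taking logarithms, invoking the change-of-measure inequality (Lemma~\ref{lem:change-measure}) from $u$ to $v$, and applying Jensen's inequality (Lemma~\ref{theo:jensen}) to the convex $\Fcal$ then yields
$$\Fcal\!\left(\esp{\DSj\sim v}r_j\right)\ \leq\ 2\alpha\,\esp{\DSj\sim v}\hat r_j+\frac{\KL(v\|u)+\ln\frac{2}{\delta}}{m}\,.$$
Isolating $\esp{\DSj\sim v}r_j$, using $1-e^{-x}\leq x$, and translating back through $\widehat d=2\,\esp{\DSj\sim v}r_j-1$ produces the announced Catoni-form inequality.

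The main obstacle, and the genuinely new ingredient, is this change of measure over the mixture weights rather than over $\Hcal$: since the domain index is not ``paired'' the way $(h,h')\sim\posterior^2$ is, Lemma~\ref{lem:2KL} is never invoked, and the complexity term comes out as $\KL(v\|u)$ rather than $2\,\KL$ — this is exactly the structural difference between the present statement and Theorem~\ref{thm:bound_dis_rho}. The only other delicate point is the shared target sample, which I expect to cause no trouble for the reason noted above: the exponential moment is evaluated under the fixed prior $u$, so only per-domain concentration on $m$ i.i.d.\ pairs is ever used.
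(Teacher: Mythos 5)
Your proposal is correct in its mechanics and follows essentially the same route as the paper's own (one-sentence) proof sketch: transpose the Catoni argument of Theorem~\ref{thm:bound_dis_rho}, keep $\prior$ fixed on $\Hcal$, perform the change of measure over the mixture weights from $u$ to $v$, bound the two signed differences separately after rescaling the per-domain abstract loss into $[0,1]$, and recombine by a union bound at level $\tfrac{\delta}{2}$. Your two side remarks are precisely the points the paper leaves implicit: Lemma~\ref{lem:2KL} is never needed, so the complexity term is $\KL(v\|u)$ rather than $2\,\KL$; and the shared target sample causes no difficulty, since Fubini under the fixed prior $u$ requires no independence across domains, while for each fixed $j$ the paired examples are genuinely i.i.d.\ from $\DSj\times\DT$. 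Your choice to keep a \emph{per-domain} loss and move the $v$-average outside the exponential moment (via change of measure and Jensen) is also the right adaptation of the tuple-loss trick used for Theorem~\ref{thm:bound_dis_rho_multi}, where the $v$-average could sit inside the loss because $v$ was fixed there.

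There is, however, one discrepancy you should have flagged instead of asserting that your last display ``produces the announced Catoni-form inequality'': your derivation yields the complexity term $\frac{\KL(v\|u)+\ln\frac{2}{\delta}}{m\times\alpha}$, whereas the statement as printed has $n\times\alpha$ in the denominator, and the two do not coincide. Your argument can only deliver $m$: the exponential moment is controlled at scale $m$ because each $\hat r_j$ is an empirical mean over $m$ paired points, and no argument of this kind can produce a rate governed by $n$ alone, since $\desPI(\multiS,T)$ involves the single target sample of size $m$. Indeed, for $n\gg m$ the printed bound is false: taking $v=u$ and $\alpha=n^{-1/2}$ makes the $n$-penalty and the multiplicative overhead vanish, so the bound would force $\desPI(\multiD,\DT)\leq \desPI(\multiS,T)+o(1)$, while the empirical quantity underestimates the true one at scale $1/\sqrt{m}$ with constant probability. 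The printed ``$n$'' is evidently a typo for ``$m$'': Theorem~\ref{theo:pacbayesdabound_catoni_bis_multi_v}, obtained precisely by plugging this bound (with $\delta:=\tfrac{2\delta}{3}$) into Theorem~\ref{thm:pacbayesdabound_multi}, carries the corresponding term as $\frac{\KL(v\|u)+\ln\frac{3}{\delta}}{m}$. So your proof is the intended one and is sound, but it establishes the corrected ($m$-denominator) statement; for $n>m$ the statement as printed is strictly stronger than what this argument gives, and you should say so explicitly rather than claim a verbatim match.
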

\begin{proof}
The proof follows a process similar to the proof of Theorem~\ref{thm:bound_dis_rho} in Appendix~\ref{app:catoni}: we separately bound  
$$\RDT(\GQ,\GQ) -  \ \ \esp{\mathclap{\DSj\sim v}} \RDSj(\GQ,\GQ)
\quad\mbox{ and }\quad \ \ 
\esp{\mathclap{\DSj\sim v}} \RDSj(\GQ,\GQ) - \RDT(\GQ,\GQ)\,,$$
 by rescaling their value into $[0,1]$.
\end{proof}

\noindent
Then, we easily obtain the result of Theorem~\ref{theo:pacbayesdabound_catoni_bis_multi_v}.\\

\begin{proof}{\textbf{of Theorem~\ref{theo:pacbayesdabound_catoni_bis_multi_v}} }
In Theorem~\ref{thm:pacbayesdabound_multi}, replace $\RSv(G_\posterior)$ and $\des(\multiD,\DT)$ by their upper bound, obtained from Theorem~\ref{thm:pacbayescatoni} applied on $\RPv(G_\prior)=\mathrm{\bf E}_{\PSj\sim v}\,\RPSj(G_\prior)$ (instead of $\RPS(\GQ)$) and Theorem~\ref{thm:bound_dis_rho_multi_v}, 
with $\delta$ chosen respectively as $\frac{\delta}{3}$ and $\frac{2\delta}{3}$.
\end{proof}

\section{Proof of Theorem~\ref{theo:pacbayesdabound_catoni_bis_multi_vq}}
\label{proof:multi_vq_pascal}

\begin{proof}
Consider the data distribution $\Pcal \eqdef P_{S_1}\times P_{S_2} \times \ldots \times P_{S_n}$.
The loss of a classifier $h\in\Hcal$ on a tuple of examples $(\,(\xb_1,y_1),\ldots,(\xb_n,y_n)\,) \sim \Pcal$ is defined as the mean of the zero-loss $\zoloss\big( h(\xb_j), y_j \big)$ on each example of the tuple (\emph{i.e.}, $j\in\{1,\ldots,n\}$).

Thanks to this convention, and by  a slight abuse of notation, we can write the \emph{expected risk} on $\Pcal$ of a classifier $h\in\Hcal$  as
\begin{eqnarray*}
 R_\Pcal (h)  
 & \eqdef& 
 \esp{((\xb_1,y_1),\ldots,(\xb_n,y_n)) \sim \Pcal } \frac{1}{n} \sum_{j=1}^n   \zoloss\big( h(\xb_j), y_j \big)\\[-1mm]
&=& \frac{1}{n} \sum_{j=1}^n \RPSj(h)\,,
\end{eqnarray*}
and the \emph{expected disagreement} of a pair of classifiers $(h,h')\in\Hcal^2$ on the corresponding marginal distribution
 $\Dcal \eqdef D_{S_1}\times D_{S_2} \times \ldots \times D_{S_n}$
as
\begin{eqnarray*}
R_\Dcal(h,h')  
 & \eqdef& 
 \esp{(\xb_1,\ldots,\xb_n) \sim \Dcal } \frac{1}{n} \sum_{j=1}^n    \zoloss\big( h(\xb_j), h'(\xb_j) \big)\\
&=& \frac{1}{n} \sum_{j=1}^n \RDSj(h, h')\,.
\end{eqnarray*}

\noindent
Let now define new posterior ${\posterior_v}$ and prior ${\prior_u}$ on $\Hcal$:
\begin{equation*}
{\posterior_v} (h) \ =\ \posterior(h) \sum_{j=1}^n v(\PSj)
\quad \mbox{ and } \quad 
{\prior_u} (h) \ =\ \prior(h) \sum_{j=1}^n u(\PSj)\,.
\end{equation*}

\noindent
From above definitions, one can easily show 
\begin{equation*}
\RPv(\GQ)  = R_\Pcal(G_{{\posterior_v}})\,,
\quad \mbox{ and } \quad
\des(\multiD,\DT) = {\des}_v(\Dcal, \DT)\,.
\end{equation*}
Moreover, we have\\[-6mm]
\begin{eqnarray*}
\KL(\posterior_v\|\prior_u) 
& =& 
\esp{h\sim\posterior_v } \ln \frac{\posterior_v(h)}{\prior_u(h)} \\
& =& 
\esp{h\sim\posterior }  \sum_{j=1}^n v(\PSj) \left[\ln \frac{\posterior(h)}{\prior(h)} + \ln\frac{v(\PSj)}{u(\PSj)}\right] \\
& =& 
\esp{h\sim\posterior } \ln \frac{\posterior(h)}{\prior(h)} +  \sum_{j=1}^n v(\PSj)\ln\frac{v(\PSj)}{u(\PSj)} \\
&=&  \KL(\posterior\|\prior) + \KL(v\|u)\,.
\end{eqnarray*}

 From Theorem~\ref{theo:pacbayesdabound_catoni_bis},  with a probability at least $1-\delta$ over the choice of $\Scal  \times  T  \sim (\Pcal \times  D_T)^m $, for every posterior distribution $\posterior_v$ on $\Hcal$, we have
 \begin{align*} 
\RPT(G_\posterior)  
\ \leq\  c'\, R_\Scal(G_{\posterior_v})  +  \alpha'\, \tfrac{1}{2} {\des}_v(\Scal,T) + 
  \left( \frac{c'}{c} + \frac{\alpha'}{\alpha} \right)  \frac{\KL(\posterior_v\|\prior_u)+\ln\frac{3}{\delta}}{m}
   + \lambda_{\posterior_v} + \tfrac{1}{2} (\alpha'\!-\! 1)
   \,,
 \end{align*}
and we obtain the final result by the substitution of $R_\Scal(G_{\posterior_v})$, ${\des}_v(\Scal,T)$, and $\KL(\posterior_v\|\prior_u)$ with their equivalent expression.
\end{proof}


\vskip 0.2in
\bibliography{biblio}

\begin{thebibliography}{44}
\providecommand{\natexlab}[1]{#1}
\providecommand{\url}[1]{\texttt{#1}}
\expandafter\ifx\csname urlstyle\endcsname\relax
  \providecommand{\doi}[1]{doi: #1}\else
  \providecommand{\doi}{doi: \begingroup \urlstyle{rm}\Url}\fi

\bibitem[Ambroladze et~al.(2006)Ambroladze, Parrado-Hern{\'a}ndez, and
  Shawe-Taylor]{AmbroladzePS06}
A.~Ambroladze, E.~Parrado-Hern{\'a}ndez, and J.~Shawe-Taylor.
\newblock Tighter {PAC-Bayes} bounds.
\newblock In \emph{Advances in Neural Information Processing Systems}, pages
  9--16, 2006.

\bibitem[Ben-David and Urner(2012)]{BenDavid12}
S.~Ben-David and R.~Urner.
\newblock On the hardness of domain adaptation and the utility of unlabeled
  target samples.
\newblock In \emph{Proceedings of Algorithmic Learning Theory}, pages 139--153,
  2012.

\bibitem[Ben-David and Urner(2014)]{Ben-DavidU14}
S.~Ben-David and R.~Urner.
\newblock Domain adaptation-can quantity compensate for quality?
\newblock \emph{Ann. Math. Artif. Intell.}, 70\penalty0 (3):\penalty0 185--202,
  2014.

\bibitem[{Ben-David} et~al.(2006){Ben-David}, Blitzer, Crammer, and
  Pereira]{BenDavid-NIPS06}
S.~{Ben-David}, J.~Blitzer, K.~Crammer, and F.~Pereira.
\newblock Analysis of representations for domain adaptation.
\newblock In \emph{Advances in Neural Information Processing Systems}, pages
  137--144, 2006.

\bibitem[Ben-David et~al.(2010{\natexlab{a}})Ben-David, Blitzer, Crammer,
  Kulesza, Pereira, and Vaughan]{BenDavid-MLJ2010}
S.~Ben-David, J.~Blitzer, K.~Crammer, A.~Kulesza, F.~Pereira, and J.W. Vaughan.
\newblock A theory of learning from different domains.
\newblock \emph{Machine Learning}, 79\penalty0 (1-2):\penalty0 151--175,
  2010{\natexlab{a}}.

\bibitem[Ben-David et~al.(2010{\natexlab{b}})Ben-David, Lu, Luu, and
  Pal]{David-AISTAT10}
S.~Ben-David, T.~Lu, T.~Luu, and D.~Pal.
\newblock Impossibility theorems for domain adaptation.
\newblock \emph{JMLR W\&CP, AISTATS}, 9:\penalty0 129--136, 2010{\natexlab{b}}.

\bibitem[Blitzer et~al.(2006)Blitzer, McDonald, and Pereira]{BlitzerMP06}
J.~Blitzer, R.~McDonald, and F.~Pereira.
\newblock Domain adaptation with structural correspondence learning.
\newblock In \emph{Conference on Empirical Methods in Natural Language
  Processing}, pages 120--128. Association for Computational Linguistics, 2006.

\bibitem[Bruzzone and Marconcini(2010)]{BruzzoneM10S}
L.~Bruzzone and M.~Marconcini.
\newblock Domain adaptation problems: A {DASVM} classification technique and a
  circular validation strategy.
\newblock \emph{Transaction Pattern Analysis and Machine Intelligence},
  32\penalty0 (5):\penalty0 770--787, 2010.

\bibitem[C.~Zhang(2012)]{Zhang12}
J.~Ye C.~Zhang, L.~Zhang.
\newblock Generalization bounds for domain adaptation.
\newblock In \emph{Advances in Neural Information Processing Systems}, 2012.

\bibitem[Catoni(2007)]{catoni2007pac}
O.~Catoni.
\newblock \emph{PAC-Bayesian supervised classification: the thermodynamics of
  statistical learning}, volume~56.
\newblock Inst. of Mathematical Statistic, 2007.

\bibitem[Chang and Lin(2001)]{libsvm}
C.-C. Chang and C.-J. Lin.
\newblock \emph{{LibSVM}: a library for support vector machines}, 2001.
\newblock \url{www.csie.ntu.edu.tw/~cjlin/libsvm}.

\bibitem[Chen et~al.(2011)Chen, Weinberger, and Blitzer]{ChenWB11}
M.~Chen, K.~Q. Weinberger, and J.~Blitzer.
\newblock Co-training for domain adaptation.
\newblock In \emph{{Advances in Neural Information Processing Systems}}, pages
  2456--2464, 2011.

\bibitem[Chen et~al.(2012)Chen, Xu, Weinberger, and Sha]{Chen12}
M.~Chen, Z.~E. Xu, K.~Q. Weinberger, and F.~Sha.
\newblock Marginalized denoising autoencoders for domain adaptation.
\newblock In \emph{Proceedings of the International Conference on Machine
  Learning}, 2012.

\bibitem[Cortes and Mohri(2011)]{CortesM11}
C.~Cortes and M.~Mohri.
\newblock Domain adaptation in regression.
\newblock In \emph{Algorithmic Learning Theory}, pages 308--323. Springer,
  2011.

\bibitem[Cortes and Mohri(2014)]{CortesM14}
C.~Cortes and M.~Mohri.
\newblock Domain adaptation and sample bias correction theory and algorithm for
  regression.
\newblock \emph{Theoretical Computer Science}, 519:\penalty0 103--126, 2014.

\bibitem[Crammer et~al.(2007)Crammer, Kearns, and Wortman]{crammer2007learning}
K.~Crammer, M.~Kearns, and J.~Wortman.
\newblock Learning from multiple sources.
\newblock \emph{Advances in Neural Information Processing Systems},
  19:\penalty0 321, 2007.

\bibitem[Germain et~al.(2009{\natexlab{a}})Germain, Lacasse, Laviolette, and
  Marchand]{germain2009pac}
P.~Germain, A.~Lacasse, F.~Laviolette, and M.~Marchand.
\newblock {PAC-Bayesian} learning of linear classifiers.
\newblock In \emph{International Conference on Machine Learning},
  2009{\natexlab{a}}.

\bibitem[Germain et~al.(2009{\natexlab{b}})Germain, Lacasse, Laviolette,
  Marchand, and Shanian]{germain09b}
P.~Germain, A.~Lacasse, F.~Laviolette, M.~Marchand, and S.~Shanian.
\newblock From {PAC-Bayes} bounds to {KL} regularization.
\newblock In \emph{Advances in Neural Information Processing Systems}, pages
  603--610, 2009{\natexlab{b}}.

\bibitem[Germain et~al.(2013)Germain, Habrard, Laviolette, and Morvant]{pbda}
P.~Germain, A.~Habrard, F.~Laviolette, and E.~Morvant.
\newblock A {PAC-B}ayesian approach for domain adaptation with specialization
  to linear classifiers.
\newblock In \emph{{International Conference on Machine Learning}}, pages
  738--746, 2013.

\bibitem[Glorot et~al.(2011)Glorot, Bordes, and Bengio]{glorot2011domain}
X.~Glorot, A.~Bordes, and Y.~Bengio.
\newblock Domain adaptation for large-scale sentiment classification: A deep
  learning approach.
\newblock In \emph{Proceedings of the International Conference on Machine
  Learning}, pages 513--520, 2011.

\bibitem[Habrard et~al.(2013)Habrard, Peyrache, and
  Sebban]{habrard2013iterative}
A.~Habrard, J.-P. Peyrache, and M.~Sebban.
\newblock Iterative self-labeling domain adaptation for linear structured image
  classification.
\newblock \emph{International Journal on Artificial Intelligence Tools},
  22\penalty0 (05), 2013.

\bibitem[Huang et~al.(2006)Huang, Smola, Gretton, Borgwardt, and
  Sch{\"o}lkopf]{HuangSGBS-nips06}
J.~Huang, A.~Smola, A.~Gretton, K.~Borgwardt, and B.~Sch{\"o}lkopf.
\newblock Correcting sample selection bias by unlabeled data.
\newblock In \emph{Advances in Neural Information Processing Systems}, pages
  601--608, 2006.

\bibitem[Jiang(2008)]{JiangSurvey08}
J.~Jiang.
\newblock A literature survey on domain adaptation of statistical classifiers.
\newblock Technical report, CS Department at Univ. of Illinois at
  Urbana-Champaign, 2008.

\bibitem[Joachims(1999)]{Joachims99}
T.~Joachims.
\newblock Transductive inference for text classification using support vector
  machines.
\newblock In \emph{International Conference on Machine Learning}, pages
  200--209, 1999.

\bibitem[Lacasse et~al.(2006)Lacasse, Laviolette, Marchand, Germain, and
  Usunier]{Lacasse07}
A.~Lacasse, F.~Laviolette, M.~Marchand, P.~Germain, and N.~Usunier.
\newblock {PAC-Bayes} bounds for the risk of the majority vote and the variance
  of the {G}ibbs classifier.
\newblock In \emph{Advances in Neural Information Processing Systems}, 2006.

\bibitem[Langford(2005)]{Langford05}
J.~Langford.
\newblock Tutorial on practical prediction theory for classification.
\newblock \emph{Journal of Machine Learning Research}, 6:\penalty0 273--306,
  2005.

\bibitem[Langford and Shawe-Taylor(2002)]{Langford02}
J.~Langford and J.~Shawe-Taylor.
\newblock {PAC-Bayes} \& margins.
\newblock In \emph{Advances in Neural Information Processing Systems}, pages
  439--446, 2002.

\bibitem[Laviolette et~al.(2011)Laviolette, Marchand, and Roy]{lmr-11}
F.~Laviolette, M.~Marchand, and J.-F. Roy.
\newblock From {PAC-Bayes} bounds to quadratic programs for majority votes.
\newblock In \emph{International Conference on Machine Learning}, 2011.

\bibitem[Li and Bilmes(2007)]{li2007bayesian}
X.~Li and J.~Bilmes.
\newblock {A Bayesian divergence prior for classifier adaptation}.
\newblock In \emph{International Conference on Artificial Intelligence and
  Statistics}, pages 275--282, 2007.

\bibitem[Mansour et~al.(2009{\natexlab{a}})Mansour, Mohri, and
  Rostamizadeh]{Mansour-COLT09}
Y.~Mansour, M.~Mohri, and A.~Rostamizadeh.
\newblock Domain adaptation: Learning bounds and algorithms.
\newblock In \emph{Conference on Learning Theory}, pages 19--30,
  2009{\natexlab{a}}.

\bibitem[Mansour et~al.(2009{\natexlab{b}})Mansour, Mohri, and
  Rostamizadeh]{MansourMR09}
Y.~Mansour, M.~Mohri, and A.~Rostamizadeh.
\newblock Multiple source adaptation and the {R{\'e}nyi} divergence.
\newblock In \emph{Conference on Uncertainty in Artificial Intelligence}, pages
  367--374. AUAI Press, 2009{\natexlab{b}}.

\bibitem[Mansour et~al.(2009{\natexlab{c}})Mansour, Mohri, and
  Rostamizadeh]{mansour2009domain}
Y.~Mansour, M.~Mohri, and A.~Rostamizadeh.
\newblock Domain adaptation with multiple sources.
\newblock In \emph{Advances in Neural Information Processing Systems}, pages
  1041--1048, 2009{\natexlab{c}}.

\bibitem[Margolis(2011)]{Margolis2011}
A.~Margolis.
\newblock A literature review of domain adaptation with unlabeled data.
\newblock Technical report, University of Washington, 2011.

\bibitem[Maurer(2004)]{Maurer04}
A.~Maurer.
\newblock A note on the {PAC B}ayesian theorem.
\newblock \emph{CoRR}, cs.LG/0411099, 2004.

\bibitem[McAllester(1999)]{Mcallester99a}
D.~A. McAllester.
\newblock Some {PAC-Bayesian} theorems.
\newblock \emph{Machine Learning}, 37:\penalty0 355--363, 1999.

\bibitem[Morvant(2014)]{PVMinCq}
E.~Morvant.
\newblock Domain adaptation of weighted majority votes via perturbed
  variation-based self-labeling.
\newblock \emph{Pattern Recognition Letters}, To be published, 2014.

\bibitem[Parrado-Hern{\'a}ndez et~al.(2012)Parrado-Hern{\'a}ndez, Ambroladze,
  Shawe-Taylor, and Sun]{Parrado-Hernandez12}
E.~Parrado-Hern{\'a}ndez, A.~Ambroladze, J.~Shawe-Taylor, and S.~Sun.
\newblock {PAC-Bayes} bounds with data dependent priors.
\newblock \emph{Journal of Machine Learning Research}, 13:\penalty0 3507--3531,
  2012.

\bibitem[Pentina and Lampert(2014)]{pentina14}
A.~Pentina and C.~Lampert.
\newblock A {PAC-B}ayesian bound for lifelong learning.
\newblock In \emph{JMLR W\&CP, Proceedings of International Conference on
  Machine Learning}, volume~32, pages 991--999, 2014.

\bibitem[Quionero-Candela et~al.(2009)Quionero-Candela, Sugiyama, Schwaighofer,
  and Lawrence]{Quionero-Candela:2009}
J.~Quionero-Candela, M.~Sugiyama, A.~Schwaighofer, and N.D. Lawrence.
\newblock \emph{Dataset Shift in Machine Learning}.
\newblock MIT Press, 2009.
\newblock ISBN 0262170051, 9780262170055.

\bibitem[Sch{\"o}lkopf et~al.(2001)Sch{\"o}lkopf, Herbrich, and
  Smola]{scholkopf-01}
B.~Sch{\"o}lkopf, R.~Herbrich, and A.~J. Smola.
\newblock A generalized representer theorem.
\newblock In \emph{Annual Conference on Computational Learning Theory, and
  European Conference on Computational Learning Theory}, pages 416--426, 2001.

\bibitem[Seeger(2002)]{Seeger02}
M.~Seeger.
\newblock {PAC}-{B}ayesian generalization bounds for gaussian processes.
\newblock \emph{Journal of Machine Learning Research}, 3:\penalty0 233--269,
  2002.

\bibitem[Sugiyama et~al.(2008)Sugiyama, Nakajima, Kashima, Buenau, and
  Kawanabe]{sugiyama2008direct}
M.~Sugiyama, S.~Nakajima, H.~Kashima, P.~V. Buenau, and M.~Kawanabe.
\newblock Direct importance estimation with model selection and its application
  to covariate shift adaptation.
\newblock In \emph{Advances in Neural Information Processing Systems}, pages
  1433--1440, 2008.

\bibitem[Thrun and Mitchell(1995)]{ThrunM95}
S.~Thrun and T.~M. Mitchell.
\newblock Lifelong robot learning.
\newblock \emph{Robotics and Autonomous Systems}, 15\penalty0 (1-2):\penalty0
  25--46, 1995.

\bibitem[Zhong et~al.(2010)Zhong, Fan, Yang, Verscheure, and Ren]{Zhong-ECML10}
E.~Zhong, W.~Fan, Q.~Yang, O.~Verscheure, and J.~Ren.
\newblock Cross validation framework to choose amongst models and datasets for
  transfer learning.
\newblock In \emph{Machine Learning and Knowledge Discovery in Databases},
  volume 6323 of \emph{LNCS}, pages 547--562. Springer, 2010.

\end{thebibliography}

\end{document}